\DeclareMathOperator*{\argmax}{arg\,max}
\newcommand{\rr}{\mathbb{R}}
\newcommand{\zz}{\mathbb{Z}}
\newcommand{\nn}{\mathbb{N}}
\newcommand{\Ndat}{T}
\newcommand{\ndat}{D}
\newcommand{\ndatSet}{\mathbb{D}^n}
\newcommand{\mdatSet}{\mathbb{D}^m}
\newcommand{\ndatz}{D^\prime}
\newcommand{\ndatzz}{D^{\prime\prime}}
\newcommand{\med}{\textup{med}}
\newcommand{\nr}{information-set-invariant nonrevealing}
\newcommand{\nrabr}{ISIN}
\newcommand{\diam}{\textup{diam}}
\newcommand{\card}{\textup{\#}}
\newcommand{\rdepth}{\textup{rdepth}}
\newcommand{\hdepth}{\textup{hdepth}}
\newcommand{\convhull}{\textup{conv}}
\newcommand{\hamming}{d_{\mathcal{H}}}
\newcommand{\sign}{\textup{sign}}
\newcommand{\bxi}{\boldsymbol{x_i}}
\newcommand{\bxj}{\boldsymbol{x_j}}
\newcommand{\bu}{\boldsymbol{u}}
\newcommand{\bv}{\boldsymbol{v}}
\newcommand{\bx}{\boldsymbol{x}}
\newcommand{\by}{\boldsymbol{y}}
\newcommand{\bz}{\boldsymbol{z}}
\newcommand{\bzi}{\boldsymbol{z_i}}
\newcommand{\btheta}{\boldsymbol{\theta}}
\newcommand{\hbtheta}{\hat{\boldsymbol{\theta}}}
\newtheorem{theorem}{Theorem}
\newtheorem{lemma}{Lemma}
\newtheorem{corollary}{Corollary}
\newtheorem{definition}{Definition}
\newtheorem{remark}{Remark}
\newtheorem{example}{Example}
\newcommand{\upstairs}[1]{\textsuperscript{#1}}
\newcommand\emails[1]{%
  \begingroup
  \renewcommand\thefootnote{}\footnote{#1}%
  \addtocounter{footnote}{-1}%
  \endgroup
}
\date{July 25, 2022}
\title{Differentially Private Estimation via Statistical Depth}
\begin{document}

\begin{center}
  \maketitle
  \vspace*{.2in}
  
  \begin{tabular}{cc}
    Ryan Cumings-Menon
  \\[0.25ex]
   {\small U.S. Census Bureau} \\
  \end{tabular}
  
   \emails{\upstairs{*} The views expressed in this paper are those of the authors and not those of the U.S. Census Bureau.}
  \vspace*{0.4in}
\end{center}

\begin{abstract}
    Constructing a differentially private (DP) estimator requires deriving the maximum influence of an observation, which can be difficult in the absence of exogenous bounds on the input data or the estimator, especially in high dimensional settings. This paper shows that standard notions of statistical depth, \textit{i.e.}, halfspace depth and regression depth, are particularly advantageous in this regard, both in the sense that the maximum influence of a single observation is easy to analyze and that this value is typically low. This is used to motivate new approximate DP location and regression estimators using the maximizers of these two notions of statistical depth. A more computationally efficient variant of the approximate DP regression estimator is also provided. Also, to avoid requiring that users specify \textit{a priori} bounds on the estimates and/or the observations, variants of these DP mechanisms are described that satisfy \textit{random differential privacy (RDP)}, which is a relaxation of differential privacy provided by Hall, Wasserman, and Rinaldo (2013). We also provide simulations of the two DP regression methods proposed here. The proposed estimators appear to perform favorably relative to the existing DP regression methods we consider in these simulations when either the sample size is at least 100-200 or the privacy-loss budget is sufficiently high.
\end{abstract}

\section{Introduction} \label{sec:intro}

Since \cite{dwork2006calibrating} first introduced the concept of differential privacy (DP), it has become the gold standard notion of privacy in the statistical disclosure limitation literature, both because of the strong privacy guarantees that DP mechanisms provide and the theoretical properties that make the formulation of many DP mechanisms straightforward, which are described in more detail in the next section. However, DP linear regression methods provide a few interesting theoretical issues related to the lack of exogenous bounds on the regression estimates and/or observations in typical use cases. At the same time, since linear regressions are a ubiquitous tool in applied statistics, there are compelling use cases for linear regressions using sensitive data on respondents in small samples. For example, \cite{chetty2018opportunity} use data from the US Census Bureau and the Internal Revenue Service to estimate socioeconomic mobility of populations within each Census tract based on ordinary least squares regression (OLS) estimates.

This paper explores the use of two notions of statistical depth to formulate approximate DP mechanisms for estimators of location and linear regression coefficients. Specifically, we formulate approximate DP estimators for the Tukey median, which is a multivariate generalization of the median and is defined as the maximizer of halfspace depth, and for the deepest regression, which is defined as the maximizer of regression depth and is an estimator of the median of the dependent variable conditional on the covariates \citep{tukey1975mathematics,rousseeuw1999regression}. One break we make from the norm in the DP literature is that the mechanisms proposed here do not require bounds on the observations in the dataset, but instead require bounds on the space of feasible estimates. In the case of measures of central tendency like the Tukey median, this alternative is a strict relaxation of the requirement of specifying a bounded set containing the observations, since a bounded set containing the observations also contains all reasonable estimators of central tendency, but the converse is not true in the generic case. 

We also provide variants of these methods that satisfy random differential privacy (RDP), which is a relaxation of differential privacy provided by \cite{hall2013random}. This definition of privacy protects against accurate inferences on individual observations of samples that are sufficiently likely to be drawn from the same population distribution, without attempting to limit inferences on the population distribution itself. The RDP variants of these estimators have the advantage of not requiring bounds on the estimates or the observations. Using bounds on the estimates themselves in our proposed DP mechanisms is also required for our derivations of the proposed RDP mechanisms, since the RDP mechanisms simply call our proposed approximate DP methods after using the dataset itself to define the feasible sets by nonparametric confidence regions for the estimators. Also, unlike the motivating example provided by \cite{hall2013random}, for many input datasets, all of the proposed RDP mechanisms provided in this paper are invariant to a privacy attacker's prior information set, in the sense that, for any such information set, these mechanisms do not reveal any respondent's data with certainty for a wide class of input datasets. The final mechanism we introduce is an approximate DP Medsweep mechanism, which is an approximation of the deepest regression provided by \cite{rousseeuw1998computing} that is more computational efficient, particularly in the multivariate setting.

After introducing these estimators, we provide simulations to compare these methods to existing DP regression techniques. One advantageous feature of the DP estimators proposed here is that their statistical performance is typically less dependent on the input bounds for larger sample sizes than existing approaches, which we explore further in these simulations. Specifically, our implementation choices related to bounds on observations and the estimator are intended to err on the side of understating the relative accuracy of the proposed DP regression estimators. The estimators proposed here appear to perform favorably relative to the other DP regression methods considered when either the sample size is larger than approximately 100 and/or $\epsilon$ is sufficiently high. We also use these simulations to estimate the maximum possible rate at which the diameter of the feasible set of estimates can be increased without having any impact on the distribution of the DP estimators. In these simulations, it appears possible to define this feasible set so that its diameter increases at an exponential rate in the sample size. In contrast, the accuracy of many classical DP location and regression estimators depends strongly on the tightness of bounds on the observations. When these bounds are not provided by the use case at hand, one can estimate them using a preliminary DP mechanism; however, one motivation for the RDP estimators proposed here is that this is not always straightforward. For example, \cite{chen2016differentially} propose using a preliminary DP mechanism that outputs a bounding box of the form $[-c,c]^d$ such that the proportion of datapoints in $[-c,c]^d$ is approximately equal to the user choice parameter $\psi\in (0,1),$ but this approach is not ideal when the observations are centered around a point that is far from the origin and/or each dimension of the observations have dissimilar dispersion.\footnote{One method to at least partly ameliorate this issue would be to perform multiple iterations of the approach described by \cite{chen2016differentially}. For example, preliminary bounds could be used within a DP mechanism that outputs the approximate the center of the distribution, and new DP bounds could be computed after using this first estimate to recentering the data. This process could be repeated multiple times, at the cost of requiring additional privacy-loss budget in each iteration. Note that this variant suffers from the same issue as the more basic approach; the accuracy of the final DP estimator becomes worse as the population distribution is shifted away from the origin.} Also, since the datapoints outside of $[-c,c]^d$ are removed from the dataset prior to estimation, setting $\psi$ typically requires balancing a tradeoff between bias and dispersion of the final DP estimator.

In part to bound the scope of the paper, all of the estimators proposed below use perturbation methods that result in DP estimators with log-concave distributions conditional on the data. Many alternatives that do not satisfy this property can be formulated with only minor changes to the proposed approaches, so we will point out some of these possibilities throughout the paper. However, this is also an advantageous property for a DP estimator to satisfy for two reasons. First, this condition ensures the likelihood ratio test for the population mean is monotonic, which we expect will make future work on DP inference of the proposed estimators more straightforward. Second, this condition also ensures the moments of the DP estimators exist conditional on the data. This allows for the DP estimators described here to be used with data from respondents in fairly granular geographic regions and then for summary statistics of these estimates, such as the mean or variance of these estimators across the geographic regions, to have a meaningful interpretation as unbiased estimates of their finite population counterparts. For example, in the use case described by \cite{chetty2018opportunity} and using the proposed estimators, one could estimate a measure of the socioeconomic mobility in the US using a weighted mean of the estimates in each Census tract.

After outlining notation in the next subsection, the remainder of the paper is organized as follows. Section \ref{sec:preliminaries} outlines the required definitions in the DP and statistical depth literature that we will use throughout the paper, and related work on DP linear regressions. Section \ref{sec:betaBounds} provides tighter bounds on a parameter that is used in DP mechanisms that are based on smooth sensitivity \citep{nissim2007smooth} and that use a Laplace noise distribution, which may be of independent interest to the DP community. Afterward, the proposed DP and RDP Tukey median estimators are described in Section \ref{sec:HalfSpaceDepthSens}, and the proposed DP and RDP deepest regression estimators are described in Section \ref{sec:DPRegDepthSens}. Simulations are provided in Section \ref{sec:simulations}, and \ref{sec:discussion} concludes.

\subsection{Notation}

The mechanisms described here take datasets of $n>0$ observations as input; we will denote the set of all such datasets as $\ndatSet.$ For each dataset $\ndat\in\ndatSet,$ we will assume throughout that each respondent contributes to at most one observation in $\ndat,$ which is assumed to be an element of $\rr^d.$\footnote{We also assume that it makes sense to talk about the data of one respondent in isolation, which, for example, fails to hold for data on the adjacency relationships in social networks. \cite{kifer2014pufferfish} provides more information on how interrelated observations impacts formal privacy guarantees.} Unless noted otherwise, we will not assume the observations in the input dataset are drawn from a population distribution. When introducing each of the formally private estimators proposed below, we will denote a candidate set, or feasible set, of estimators as $\Theta,$ and the non-private estimate as $\hbtheta.$ 

We will use bold symbols to denote vectors, and also use $\log(x),$ where $x\in\rr_{++},$ to denote the natural logarithm of $x.$ In addition, let $\mathbf{1}_A:\rr^d\rightarrow \{0,1\}$ denote the indicator function of the set $A\subset\rr^d,$ $\lfloor x \rfloor$ denote the largest integer less than or equal to $x\in\rr,$ $\lceil x \rceil$ denote the smallest integer greater than or equal to $x\in\rr,$ $x_{(k)}$ denote the $k^{\textrm{th}}$ order statistic of $\{x_i\}_{i=1}^n,$ $P_X(\cdot)$ denote the probability with respect to the random variable $X,$ $\convhull(A)$ denote the convex hull of the set $A,$ and let $\sign:\rr\rightarrow\{-1,0,1\}$ denote the sign function. Also, let $\hamming: \ndatSet \times \ndatSet \rightarrow\zz$ be defined so that $\hamming(\ndat,\ndatz)$ is equal to the number of records that must be added and/or removed from $\ndat$ to derive $\ndatz.$ Note that, since substituting one record for another using only these two operations requires adding one record and then removing one record, for any $\ndat,\ndatz\in\ndatSet$ that differ in $k$ records, we have $\hamming(\ndat,\ndatz)=2k.$ We will also use $\lVert \bx \rVert_p$ to denote the $L^p$ norm of $\bx\in\rr^d,$ and $\lVert \bx \rVert$ to denote the Euclidean norm. 

\section{Preliminaries} \label{sec:preliminaries}

\subsection{Differential Privacy}

The definition of an $(\epsilon,\delta)-$DP mechanism was first provided by \cite{dwork2006calibrating,dwork2006our}, as described below. where all randomness is due to the mechanism

\begin{definition}
    \citep{dwork2006calibrating, dwork2006our} Let the \textit{neighbors} of the dataset $\ndat \in\ndatSet$ be defined as $N(\ndat)=\{\ndatz\in\ndatSet : \hamming(\ndat, \ndatz) = 2 \}.$ A randomized algorithm $M:\ndatSet \rightarrow \Theta$ satisfies $(\epsilon, \delta)$-\textit{differential privacy} (DP) if and only if, for all neighboring datasets $\ndat,\ndatz\in\ndatSet$ and any measurable set $B\subset \Theta,$ we have $P_M(M(\ndat)\in B) \leq  \exp(\epsilon) P_M(M(\ndatz)\in B) + \delta.$
\end{definition}

\noindent
We also follow the norm in the literature and refer to $(\epsilon, 0)-$DP mechanisms as \textit{pure $\epsilon-$DP mechanisms}, and we say that a given $(\epsilon, \delta)-$DP mechanism is an \textit{approximate DP} mechanism when $\delta > 0.$ We will also occasionally refer to $\epsilon$ as the privacy-loss budget. Note that, since we define neighboring databases as $N(\ndat)=\{\ndatz\in\ndatSet : \hamming(\ndat, \ndatz) = 2 \},$ the definition above corresponds to \textit{bounded} DP because the sample size of all neighbors of $\ndat\in\ndatSet$ is fixed at $n$ and thus is bounded. The definition of \textit{unbounded} DP follows from instead defining the neighboring datasets of $\ndat\in\ndatSet$ as $N_{u}(\ndat) = \{\ndatz\in \mathbb{D}^{n-1} \cup \mathbb{D}^{n+1} : \hamming(\ndat, \ndatz) = 1\}.$ We use the bounded DP definition here because some of the mechanisms proposed below do not attempt to protect inferences on the sample size of the input dataset, and the norm in the DP literature is to use the bounded DP definition in these cases.

\cite{wasserman2010statistical} provide an intuitive interpretation of a DP guarantee; the Neyman-Pearson lemma implies that the requirement of DP is equivalent to bounding the power of any possible hypothesis test for the null hypothesis that a given respondent's attributes are equal to a given value. Some of the many advantages of this privacy definition include invariance to post-processing, \textit{i.e.}, if $M:\ndatSet \rightarrow \Theta$ satisfies $(\epsilon,\delta)-$DP then so does $\ndat \mapsto f(M(\ndat))$ for any $f:\Theta \rightarrow \Lambda,$ and sequential composition, \textit{i.e.}, if $M_1:\ndatSet \rightarrow \Theta$ and $M_2:\ndatSet \rightarrow \Theta$ both satisfy $(\epsilon,\delta)-$DP  then $ M(\ndat) =  (M_1(\ndat), M_2(\ndat))$ satisfies $(2 \epsilon, 2 \delta)-$DP. For other properties, including other forms of composition, see \citep{dwork2006calibrating, dwork2014algorithmic}.

Designing an $(\epsilon,\delta)-$DP mechanism for a given function often requires a bound on its global sensitivity, which is described in the following definition, along with a related definition, which will also be used in the next subsection.

\begin{definition}
    \citep{dwork2006calibrating, nissim2007smooth} The \textit{local sensitivity} of the function $f:\ndatSet\rightarrow\Theta$ is

    \begin{align*}
        LS_f(\ndat) = \max_{\ndatz \in\ndatSet: \hamming(\ndat, \ndatz) = 2} \lVert f(\ndat) - f(\ndatz) \rVert_1.
    \end{align*}

    The \textit{global sensitivity} of the function $f:\ndat\rightarrow\Theta$ is

    \begin{align*}
        \Delta_f = \max_{\ndat\in\ndatSet} LS_f(\ndat).
    \end{align*}
\end{definition}

The following lemma provides a particularly simple $\epsilon-$DP mechanism known as the Laplace mechanism. Like all methods defined using global sensitivity, this mechanism requires that the global sensitivity of the function is bounded. For example, this is the case for counting queries (\textit{e.g.:} a function that releases the total population within a given geographic region). However, this is not typically the case for functions that have an unbounded domain; for example, without \textit{a priori} bounds on possible incomes, the following mechanism could not be used to release the average income of the residents in a geographic region. One possible way around this issue is to use a DP mechanism to approximate bounds on the observations in the sample, and then use a Laplace mechanism to release the average income of respondents with incomes that are within these bounds \citep{chen2016differentially}. This approach is discussed in more detail in Section \ref{sec:simulations}.

\begin{lemma}
    \citep{dwork2006calibrating} Given the function $f:\ndatSet\rightarrow \Theta\subset\rr^d,$ the \textit{Laplace mechanism}, $M(\ndat) = f(\ndat) + \boldsymbol{Z},$ where  $\boldsymbol{Z}[i] \sim  \textup{Laplace}(0, \Delta_f / \epsilon)$ for each $i\in\{1,\dots, d\},$ is $\epsilon-$DP.
\end{lemma}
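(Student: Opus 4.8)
The plan is to establish the $(\epsilon,0)$-DP guarantee by bounding the ratio of the two output densities pointwise and then integrating over the measurable set $B$. First I would write down the density of $M(\ndat)$ at an arbitrary point $\bz\in\rr^d$. Since the coordinates $\boldsymbol{Z}[i]$ are independent $\textup{Laplace}(0,\Delta_f/\epsilon)$ variables, this density factorizes as a product of one-dimensional Laplace densities centered at the coordinates of $f(\ndat)$, each with common scale $b=\Delta_f/\epsilon$. The same holds for $M(\ndatz)$, with the densities centered at the coordinates of $f(\ndatz)$ instead.

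Next, for neighboring datasets $\ndat,\ndatz\in\ndatSet$, I would form the ratio of the two densities at $\bz$. The scale factors and the normalizing constants cancel, leaving an exponential whose argument is $\frac{1}{b}\sum_{i=1}^d\bigl(|z_i - f(\ndatz)_i| - |z_i - f(\ndat)_i|\bigr)$. The key step is to control each summand using the reverse triangle inequality, $|z_i - f(\ndatz)_i| - |z_i - f(\ndat)_i| \leq |f(\ndat)_i - f(\ndatz)_i|$, which eliminates the dependence on $\bz$ and produces the $\bz$-uniform bound $\exp\bigl(\lVert f(\ndat) - f(\ndatz)\rVert_1 / b\bigr)$ on the density ratio.

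I would then invoke the sensitivity definitions to finish. Because $\ndat$ and $\ndatz$ are neighbors we have $\hamming(\ndat,\ndatz)=2$, so $\lVert f(\ndat) - f(\ndatz)\rVert_1 \leq LS_f(\ndat) \leq \Delta_f$ directly from the definitions of local and global sensitivity. Substituting $b=\Delta_f/\epsilon$ makes the exponent at most $\epsilon$, so the density ratio is bounded by $\exp(\epsilon)$ uniformly in $\bz$. Integrating this pointwise bound over any measurable $B\subset\Theta$ yields $P_M(M(\ndat)\in B) \leq \exp(\epsilon)\, P_M(M(\ndatz)\in B)$, which is exactly the $\epsilon$-DP condition with $\delta=0$.

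This argument is essentially routine, and the only genuinely substantive step is the pointwise density-ratio bound via the reverse triangle inequality combined with the identification of $\lVert f(\ndat) - f(\ndatz)\rVert_1$ with a quantity controlled by $\Delta_f$; the rest is bookkeeping. The one point requiring care is that the statement is only meaningful when $\Delta_f$ is finite and positive, so that the scale $b$ is well-defined; this is precisely the boundedness hypothesis on the global sensitivity emphasized in the discussion preceding the statement, and it is what restricts the Laplace mechanism to functions with bounded sensitivity.
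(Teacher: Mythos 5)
Your argument is correct: the pointwise density-ratio bound via the reverse triangle inequality, followed by the identification $\lVert f(\ndat)-f(\ndatz)\rVert_1 \leq LS_f(\ndat) \leq \Delta_f$ for neighboring datasets and integration over $B$, is exactly the standard proof of the Laplace mechanism. The paper itself states this lemma without proof, citing \cite{dwork2006calibrating}, and your write-up matches the canonical argument given there, so there is nothing to flag beyond the (correctly noted) caveat that $\Delta_f$ must be finite for the mechanism to be well-defined.
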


One relaxation of $(\epsilon,\delta)-$DP is $(\epsilon, \delta,\gamma)-$random differential privacy, which is defined below. This privacy definition limits inferences between neighboring datasets that are sufficiently likely to consist of draws from the same population distribution, rather than attempting to limit inferences between all pairs of neighboring datasets. Note that this definition does not assume that the data curator knows the population distribution is in a particular class of distributions.

\begin{definition}
    \citep{hall2013random} A randomized algorithm $M:\ndatSet \rightarrow \Theta$ satisfies $(\epsilon, \delta, \gamma)$-\textit{random differential privacy} (RDP) if, for all neighboring datasets $\ndat,\ndatz\in\ndatSet$ composed of observations drawn from the same population distribution $\Ndat,$ and any measurable set $S\subset \Theta,$ we have $P_{\Ndat}(P_M(M(\ndat)\in S) \leq  \exp(\epsilon) P_M(M(\ndatz)\in S) + \delta)\geq 1-\gamma.$
\end{definition}

\subsection{Smooth Sensitivity}

\cite{nissim2007smooth} provide a method to decrease the sensitivity value used within a DP mechanism such as the Laplace mechanism for functions that have high global sensitivity but most often have low local sensitivity. A common example of one such function is the univariate median of values in a sample within a bounded interval, \textit{i.e.}, while it is possible to have a local sensitivity equal to the length of this interval, which is thus also equal to the global sensitivity, changing one record typically changes the median by a much smaller amount.

\begin{definition} \label{neighbor_sens}
    \citep{nissim2007smooth} Let the \textit{local sensitivity at distance $k$} of $f:\ndatSet\rightarrow \Theta$ be defined as the max local sensitivity of $f(\cdot)$ over all datasets in $\{\ndatz \in\ndatSet: \hamming(\ndat, \ndatz) = 2 k\};$ in other words, the \textit{local sensitivity at distance $k$} is defined as

    \begin{align*}
        A_f^{(k)}(\ndat) = \max_{\ndatz \in\ndatSet: \hamming(\ndat, \ndatz) \leq 2 k} LS_f(\ndatz).
    \end{align*}

    The \textit{$\beta-$smooth sensitivity} of $f:\ndatSet \rightarrow \Theta$ is defined as

    \begin{align} \label{def:smoothSense}
        S_f^\star(\ndat) = \max_{k\in \{0,1,\dots\}} \exp(-k \beta) A_f^{(k)}(\ndat),
    \end{align}

    \noindent
    and we will refer to $S_f(\ndat) = \max_{k\in \{0,1,\dots\}} \exp(-k \beta) \widetilde{A}_f^{(k)}(\ndat),$ where $\widetilde{A}_f^{(k)}(\ndat)$ is an upper bound on $A_f^{(k)}(\ndat),$ as a \textit{$\beta-$smooth upper bound on the local sensitivity}.
\end{definition}

The following Lemma provides one way of deriving a $\beta-$smooth upper bound on the local sensitivity of a function, which we make use of below.

\begin{lemma} \label{lem:nissim_claim_3p2}
    \citep{nissim2007smooth} Let $\widetilde{A}_f^{(k)}:\ndat\rightarrow\rr_+$ be defined so that (1) for all $\ndat\in\ndatSet,$ $LS_f(\ndat) \leq \widetilde{A}_f^{(0)}(\ndat),$ and (2) for all neighbors $\ndat,\ndatz\in\ndatSet$ and all $k\in\zz,$ $\widetilde{A}_f^{(k)}(\ndat) \leq \widetilde{A}_f^{(k+1)}(\ndatz).$ Then $S_f(\ndat) = \max_{k\in \{0,1,\dots\}} \exp(-k \beta) \widetilde{A}_f^{(k)}(\ndat)$ is a $\beta-$smooth upper bound on the local sensitivity.
\end{lemma}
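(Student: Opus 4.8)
The plan is to show directly that $S_f$ is a $\beta$-smooth upper bound on $LS_f$, by which I mean it satisfies the two properties that characterize such a bound in the sense of \cite{nissim2007smooth}: it dominates the local sensitivity pointwise, $S_f(\ndat) \geq LS_f(\ndat)$ for all $\ndat \in \ndatSet$, and it is multiplicatively smooth across neighbors, $S_f(\ndat) \leq \exp(\beta)\, S_f(\ndatz)$ for every neighboring pair $\ndat, \ndatz \in \ndatSet$. These are the two features of $S_f$ that the smooth-sensitivity mechanism actually consumes, and the two hypotheses of the lemma are tailored to deliver them separately: condition (1) will give domination and condition (2) will give smoothness, so the cleanest route is to treat them one at a time rather than trying to prove a term-by-term bound $\widetilde{A}_f^{(k)} \geq A_f^{(k)}$.

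The domination step is immediate. Because $S_f(\ndat)$ is a maximum over $k \in \{0,1,\dots\}$ of the nonnegative terms $\exp(-k\beta)\widetilde{A}_f^{(k)}(\ndat)$, it is at least the $k=0$ term, which equals $\widetilde{A}_f^{(0)}(\ndat)$. Condition (1) states $LS_f(\ndat) \leq \widetilde{A}_f^{(0)}(\ndat)$, and chaining the two inequalities gives $S_f(\ndat) \geq LS_f(\ndat)$.

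The substance is the smoothness property, which I would obtain from condition (2) by a single index shift inside the maximum. Fixing neighbors $\ndat, \ndatz$ and an arbitrary $k$, condition (2) gives $\widetilde{A}_f^{(k)}(\ndat) \leq \widetilde{A}_f^{(k+1)}(\ndatz)$, so that
\begin{align*}
\exp(-k\beta)\widetilde{A}_f^{(k)}(\ndat) \leq \exp(-k\beta)\widetilde{A}_f^{(k+1)}(\ndatz) = \exp(\beta)\exp(-(k+1)\beta)\widetilde{A}_f^{(k+1)}(\ndatz) \leq \exp(\beta)\, S_f(\ndatz),
\end{align*}
where the final inequality holds because $\exp(-(k+1)\beta)\widetilde{A}_f^{(k+1)}(\ndatz)$ is one of the terms in the maximum defining $S_f(\ndatz)$. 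Taking the maximum over $k$ on the left-hand side then yields $S_f(\ndat) \leq \exp(\beta)\, S_f(\ndatz)$, completing the argument.

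The step to watch is the index bookkeeping in the smoothness inequality: condition (2) relates the level-$k$ bound at $\ndat$ to the level-$(k+1)$ bound at the neighbor $\ndatz$, and it is precisely this shift $k \mapsto k+1$ that converts the discount factor $\exp(-k\beta)$ into $\exp(\beta)\exp(-(k+1)\beta)$ and thereby produces the multiplicative constant $\exp(\beta)$. I would also confirm that every level-$(k+1)$ term genuinely appears in the maximum defining $S_f(\ndatz)$ (it does, since that maximum ranges over all nonnegative integers) and that nonnegativity of the $\widetilde{A}_f^{(k)}$ justifies lower-bounding the maximum by a single term in the domination step. Notably, this route never requires monotonicity of $k \mapsto \widetilde{A}_f^{(k)}(\ndat)$ for fixed $\ndat$, which is convenient because that monotonicity is not forced by conditions (1) and (2); working through the two smooth-bound properties directly sidesteps the issue.
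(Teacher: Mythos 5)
Your proof is correct. Note that the paper does not actually prove this lemma --- it imports it from \cite{nissim2007smooth} without argument --- so there is no in-paper proof to compare against; your argument is essentially the standard one from that reference. What you verify (pointwise domination $S_f(\ndat)\geq LS_f(\ndat)$ via the $k=0$ term, and the neighbor bound $S_f(\ndat)\leq \exp(\beta)S_f(\ndatz)$ via the index shift $k\mapsto k+1$) is exactly the pair of properties the paper later consumes in the proof of Theorem \ref{laplace_smooth_sense}, namely $S_f(\ndatz)\geq LS_f(\ndatz)$ and $S_f(\ndatz)/S_f(\ndat)\in[\exp(-\beta),\exp(\beta)]$ for neighbors (the latter following from your inequality applied in both directions, since the neighbor relation is symmetric). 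Your closing remark is also well taken: conditions (1) and (2) only force $\widetilde{A}_f^{(k)}(\ndat)\leq \widetilde{A}_f^{(k+2)}(\ndat)$, not monotonicity in steps of one, so a term-by-term comparison $\widetilde{A}_f^{(k)}\geq A_f^{(k)}$ (which the paper's informal phrasing in Definition \ref{neighbor_sens} might suggest) is not the right target, and working directly with the two defining properties of a $\beta$-smooth upper bound is the cleaner and correct route.
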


As an example, the lemma above implies that one $\beta-$smooth upper bound on the local sensitivity of the function $f(\cdot)$ can be derived by first defining the upper bound on the local sensitivity at distance $k$ as

\begin{align*}
    \widetilde{A}_f^{(k)}(\ndat)= \max_{\btheta_1,\btheta_2\in \Omega(k+1,\ndat)}\lVert \btheta_1 - \btheta_2 \rVert_1,
\end{align*}

\noindent
where $\Omega(k,\ndat)=\{f(\ndatz) : \ndatz\in\ndatSet, \; \hamming(\ndat, \ndatz) \leq 2 k\}$ and then defining the final $\beta-$smooth upper bound as $S_f(\ndat) = \max_{k\in \{0,1,\dots\}} \exp(-k \beta) \widetilde{A}_f^{(k)}(\ndat).$

The following lemma provides an example of a mechanism that uses a $\beta-$smooth upper bound on local sensitivity. 

\begin{lemma} \label{lem:example_ss_laplace_mech}
    \citep{nissim2007smooth} Suppose $\Theta\subset\rr^d$ and $\alpha=\epsilon/2.$ Also, if $d\geq 2,$ let $\beta=\epsilon/(4(d + \log(1/\delta)),$ and, if $d=1,$ let $\beta=\epsilon/(2\log(2/\delta)).$ 

    Then, for $f:\ndatSet\rightarrow \Theta,$ the mechanism that outputs $f(\ndat) + \boldsymbol{Z},$ where $\boldsymbol{Z}[i] \sim \textup{Laplace}(0, S_f(\ndat) / \alpha)$ for each $i\in\{1,\dots, d\},$ and $S_f(\ndat)$ is the $\beta-$smooth sensitivity of $f(\cdot)$ at $\ndat,$ satisfies $(\epsilon,\delta)-$DP.
\end{lemma}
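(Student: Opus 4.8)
The plan is to bound the likelihood ratio of the output densities at neighboring datasets and show that, outside an event of probability $\delta$, this ratio is controlled by $\exp(\epsilon)$. The key observation is that the mechanism adds independent Laplace noise to each coordinate, so the output density factors, and the challenge is that the noise scale $S_f(\ndat)/\alpha$ itself depends on the dataset. First I would fix neighboring datasets $\ndat, \ndatz$ and write the ratio of the two product densities. Writing $s = S_f(\ndat)$ and $s' = S_f(\ndatz)$, and letting $\bz = \by - f(\ndat)$ and $\bz' = \by - f(\ndatz)$ be the noise realizations under each dataset, the log-ratio of densities at an output $\by$ splits into two pieces: a \emph{scale term} $d \log(s'/s)$ coming from the normalizing constants, and a \emph{shift term} $\tfrac{\alpha}{s'}\lVert \bz' \rVert_1 - \tfrac{\alpha}{s}\lVert \bz \rVert_1$ coming from the exponents.

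Next I would control each piece using the defining property of smooth sensitivity, namely that $s$ and $s'$ differ by at most a factor $\exp(\beta)$ since $\ndat$ and $\ndatz$ are neighbors and $S_f(\cdot)$ is $\beta$-smooth (this is exactly the $\exp(-k\beta)$ decay built into Definition \ref{neighbor_sens}). This immediately bounds the scale term by $d\beta$. For the shift term, I would use the triangle inequality $\lVert \bz' \rVert_1 \le \lVert \bz \rVert_1 + \lVert f(\ndat) - f(\ndatz) \rVert_1$ together with $\lVert f(\ndat) - f(\ndatz) \rVert_1 \le LS_f(\ndat) \le s$, and again $s \le e^\beta s'$, so that the shift term is bounded by a constant of order $\alpha$ plus a term proportional to $(e^\beta - 1)\lVert \bz \rVert_1 / s$. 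The first two contributions are deterministic and absorb into $\epsilon/2 = \alpha$; the genuinely stochastic part is the term linear in $\lVert \bz \rVert_1$.

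\textbf{The main obstacle} is that the term proportional to $\lVert \bz \rVert_1$ is unbounded, so a pure pointwise $\exp(\epsilon)$ bound is impossible — this is precisely why the mechanism is only $(\epsilon,\delta)$-DP and not pure DP. Here I would introduce the ``bad'' event that the residual $\lVert \bz \rVert_1$ is atypically large, show that on its complement the entire log-ratio is at most $\epsilon$, and then argue that the bad event has probability at most $\delta$ under the Laplace law. Since each $\bz[i]$ is Laplace with scale $s/\alpha$, the relevant tail is that of a sum of $d$ independent Laplace magnitudes; the chosen value $\beta = \epsilon/(4(d+\log(1/\delta)))$ (resp.\ the $d=1$ case) is calibrated so that the factor $(e^\beta - 1) \lVert \bz \rVert_1 / s$ stays below $\epsilon/2$ except with probability $\delta$. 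I would make this precise via a union/tail bound on the Laplace coordinates, using $e^\beta - 1 \le 2\beta$ for small $\beta$ to linearize, and then verify that the constants line up so that scale term plus deterministic shift plus stochastic shift is at most $\epsilon$ off the $\delta$-exceptional set.

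Finally, I would assemble these estimates into the integrated DP inequality: for any measurable $B \subset \Theta$, split $P_M(M(\ndat)\in B)$ over the good event and its complement, bound the good-event contribution by $\exp(\epsilon) P_M(M(\ndatz)\in B)$ using the pointwise ratio bound, and bound the complementary contribution by $\delta$. Since the argument is symmetric in $\ndat$ and $\ndatz$, this establishes $(\epsilon,\delta)$-DP. This is the standard smooth-sensitivity calibration argument of \cite{nissim2007smooth}, and I would mainly be careful about tracking the dimension-dependence in the tail bound, which is the source of the factor $d$ appearing in the definition of $\beta$.
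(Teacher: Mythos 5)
Your plan is the standard Nissim--Raskhodnikova--Smith calibration argument, and at the level of ideas it matches how this result is actually established (and how the paper proves the sharper Theorem \ref{laplace_smooth_sense}: a scale-change step controlled by the $\beta$-smoothness of $S_f$, contributing $e^{\epsilon/2}$ plus the additive $\delta$ via a tail bound on the Laplace noise, composed with a shift step controlled by $LS_f(\cdot)\leq S_f(\cdot)$, contributing another $e^{\epsilon/2}$). Note that the paper itself does not reprove this lemma --- it cites \cite{nissim2007smooth} --- and in its proof of Theorem \ref{laplace_smooth_sense} it keeps the two steps separate, invoking the dilation inequality (\ref{appCRes_1}) and the sliding inequality (\ref{appCRes_2}) as one-step facts rather than merging them into a single density-ratio computation.

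The one genuine gap is exactly the step you defer: with the stated constants the budget is essentially tight, so your merged bookkeeping does not obviously close. Writing $s=S_f(\ndat)$, $s'=S_f(\ndatz)$, your bound on the shift term uses $\lVert f(\ndat)-f(\ndatz)\rVert_1\leq LS_f(\ndat)\leq s$ while normalizing by $s'$, which yields a deterministic contribution $\alpha\, s/s' \leq \alpha e^{\beta}$ rather than $\alpha=\epsilon/2$; and your stochastic term comes out as $(e^{\beta}-1)Y$ with $Y=\alpha\lVert\bz\rVert_1/s\sim \textup{Gamma}(d,1)$, whereas the NRS calibration $\beta=\epsilon/(4(d+\log(1/\delta)))$ is chosen so that $\beta\bigl(d+\rho\bigr)\leq\epsilon/2$ holds with equality at their quantile bound $\rho = d+2\log(1/\delta)$ --- there is no room left for replacing $\beta$ by $e^{\beta}-1$ or for the extra $\alpha(e^{\beta}-1)$, except by exploiting slack in the Gamma tail bound that you have not exhibited. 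The fix is either to bound the displacement by $LS_f(\ndatz)\leq s'$ (so the deterministic shift contributes exactly $\alpha$) and treat the scale change as a dilation of the \emph{standard} Laplace vector, where the relevant factor is $1-e^{-\lambda}\leq\beta$ with no second-order overshoot, or simply to prove the two inequalities (\ref{appCRes_2}) and (\ref{appCRes_1}) separately and compose them as the paper does. With that repair your argument goes through; as written, the claim that ``the constants line up'' for $\beta=\epsilon/(4(d+\log(1/\delta)))$, $\alpha=\epsilon/2$ is asserted but not established.
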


\subsection{Halfspace Depth} \label{sec:HalfSpaceDepthIntro}

\cite{tukey1975mathematics} introduced the concept of halfspace depth, which can be viewed as a multivariate generalization of rank. The halfspace depth of $\btheta\in\Theta$ is the minimum number of datapoints in a halfspace with a boundary passing through $\btheta,$ which is also defined below. 

\begin{definition} \label{ldepthDef}
    \citep{tukey1975mathematics,small1990survey} Given a dataset $\ndat\in\ndatSet$ consisting of observations $\bxi\in \Theta \subset \rr^d,$ the halfspace depth of $\btheta \in \Theta \subset \rr^d$ is defined as

    \begin{align*}
            \hdepth(\btheta, \ndat) = \min_{\bu\neq \boldsymbol{0}} \card \{ \bxi \in \ndat : \bu^\top (\bxi - \btheta) \geq 0 \}.
    \end{align*}
\end{definition}

This measure of depth has many advantageous properties, including that it is affine equivariant, which means that, for any  $A\in\rr^{d\times d}$ with full rank and $\boldsymbol{b}\in\rr^d,$ we have $\hdepth(\btheta, \ndat) = \hdepth(A \btheta + b, \{A \xi + b : \xi \in \ndat\})$ \citep{donoho1992breakdown}. Also, we will make use of the following property on each halfspace-depth contour of $\ndat,$ or the subset of $\Theta$ with half-space depth at least a given value.

\begin{lemma} \label{lem:diamHalfspace}
    \citep{donoho1992breakdown} For any $k\in\zz_{++},$ the halfspace depth contour $\Omega(k, \ndat) = \{  \btheta \in\Theta: \hdepth(\btheta, \ndat)  \geq k \}$ satisfies

    \begin{align*}
        & \max_{ \bz_1, \bz_2 \in \Omega(k, \ndat)} \lVert \bz_1 - \bz_2 \rVert_1 = \max_{ \bxi, \bxj \in A} \lVert \bxi - \bxj \rVert_1,
    \end{align*}

    \noindent
    where $A = \{\bxi \in \ndat : \hdepth(\bxi, \ndat)  \geq k \}.$
\end{lemma}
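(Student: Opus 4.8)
The plan is to reduce the claimed identity to a one‑direction‑at‑a‑time comparison of support functions. Write $\diam_1(B)=\max_{\bz_1,\bz_2\in B}\lVert \bz_1-\bz_2\rVert_1$. Since every $\bxi\in A$ satisfies $\bxi\in\Theta$ and $\hdepth(\bxi,\ndat)\ge k$, we have $A\subseteq\Omega(k,\ndat)$, so the right‑hand side is a maximum over a subset of the pairs available on the left, and the inequality $\max_{\bxi,\bxj\in A}\lVert\bxi-\bxj\rVert_1\le\max_{\bz_1,\bz_2\in\Omega(k,\ndat)}\lVert\bz_1-\bz_2\rVert_1$ is immediate. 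It remains to prove the reverse inequality, which is the substantive content.

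For the reverse inequality I would use the identity $\lVert\bu\rVert_1=\max_{\boldsymbol{s}\in\{-1,1\}^d}\boldsymbol{s}^\top\bu$ to write, for any set $B$,
\[
\diam_1(B)=\max_{\boldsymbol{s}\in\{-1,1\}^d}\bigl(h_B(\boldsymbol{s})+h_B(-\boldsymbol{s})\bigr),\qquad h_B(\boldsymbol{s}):=\sup_{\btheta\in B}\boldsymbol{s}^\top\btheta .
\]
Because $A\subseteq\Omega(k,\ndat)$ gives $h_A\le h_{\Omega(k,\ndat)}$ pointwise, it suffices to establish the matching bound $h_{\Omega(k,\ndat)}(\boldsymbol{s})\le h_A(\boldsymbol{s})$ for each of the finitely many sign vectors $\boldsymbol{s}$. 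Fixing $\boldsymbol{s}$, I would exploit the dual description of the depth region coming directly from Definition \ref{ldepthDef}: $\btheta\in\Omega(k,\ndat)$ iff every closed halfspace with $\btheta$ on its boundary contains at least $k$ observations, equivalently $\boldsymbol{u}^\top\btheta\le q_k(\boldsymbol{u})$ for all $\boldsymbol{u}\ne\boldsymbol{0}$, where $q_k(\boldsymbol{u})$ is the $k$‑th largest value among $\{\boldsymbol{u}^\top\bxi\}$. Taking $\boldsymbol{u}=\boldsymbol{s}$ already yields $h_{\Omega(k,\ndat)}(\boldsymbol{s})\le q_k(\boldsymbol{s})=\boldsymbol{s}^\top\bxj$, where $\bxj$ is an observation attaining the $k$‑th largest projection onto $\boldsymbol{s}$.

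The crux, and the step I expect to be the main obstacle, is upgrading this to $h_{\Omega(k,\ndat)}(\boldsymbol{s})\le h_A(\boldsymbol{s})$: that is, showing the directional extreme of the depth region is dominated by an observation that itself has depth at least $k$. The observation $\bxj$ realizing $q_k(\boldsymbol{s})$ need not lie in $A$, so the naive bound is too weak, and the real assertion of the lemma is that along each sign direction the depth region cannot protrude past the deepest observations. To organize this I would first invoke affine equivariance of $\hdepth$ through the sign‑flip map $\btheta\mapsto(s_1\theta_1,\dots,s_d\theta_d)$, which is linear of full rank and preserves depth, to reduce every $\boldsymbol{s}$ to the single direction $\boldsymbol{1}$, leaving only the functional $\btheta\mapsto\sum_j\theta_j$ to control. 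I would then try to show that the face of $\Omega(k,\ndat)$ maximizing this functional is supported by observations in $A$, by sliding a supporting hyperplane inward and tracking how the halfspace counts certifying depth $\ge k$ force an extreme observation of depth $\ge k$ onto the supporting level; the clean geometric form of this claim is $\Omega(k,\ndat)\subseteq\convhull(A)$, after which $\diam_1(\Omega(k,\ndat))\le\diam_1(\convhull(A))=\diam_1(A)$ closes the argument. Establishing this supporting‑observation property is the delicate part, and is the place where I would expect a general‑position hypothesis on $\ndat$ (no ties in the relevant projections, no degenerate collinearities) to be needed to rule out the depth contour extending strictly beyond every deep data point.
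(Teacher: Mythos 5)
Your reduction is sound as far as it goes: the easy inclusion $A\subseteq\Omega(k,\ndat)$, the support-function decomposition of the $L^1$ diameter, and the recognition that everything boils down to the containment $\Omega(k,\ndat)\subseteq\convhull(A)$ are all correct, and that containment is exactly the fact on which the paper's proof rests. But this is where your attempt stops rather than finishes: the ``slide a supporting hyperplane inward and force a deep observation onto the supporting level'' step is only a sketch, and it is precisely the entire content of the lemma, so as written the proposal defers the substantive claim instead of proving it. The paper handles that step by citation --- its proof is the single assertion $\Omega(k,\ndat)=\convhull(A)$, attributed to Donoho and Gasko (1992) --- whereas you neither prove nor cite it. Your instinct that some extra hypothesis may be needed is well founded and shows the deferred step is not a routine verification: take four points at the vertices of a square in $\rr^2$ (which are even in general position in the paper's sense). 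Every data point has halfspace depth $1$, yet the center has depth $2$, so $\Omega(2,\ndat)$ is nonempty while $A$ is empty; the containment, and the stated equality, cannot be pushed through by a generic supporting-hyperplane argument without confronting exactly this kind of configuration and the hypotheses under which the Donoho--Gasko identity is valid. So the proposal has a genuine gap at its central step.

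A secondary remark: the sign-vector and support-function machinery is heavier than needed. Once $\Omega(k,\ndat)\subseteq\convhull(A)$ is in hand, the map $(\bz_1,\bz_2)\mapsto\lVert\bz_1-\bz_2\rVert_1$ is convex on $\convhull(A)\times\convhull(A)$, so its maximum is attained at a pair of points of $A$; combined with $A\subseteq\Omega(k,\ndat)$ this yields the diameter equality in one line, which is in effect all the paper does after invoking the cited set identity.
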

\begin{proof}
    This follows from the fact that $\Omega(k, \ndat) = \convhull(A);$ see for example, \citep{donoho1992breakdown}.
\end{proof}

Maximizing $\hdepth(\btheta, \ndat) $ provides a measure of central tendency of a dataset that is known as the Tukey median, which is a multivariate generalization of the median. As described in the definition below, the maximizer of the halfspace depth is not generally unique, so, in cases in which this maximizer is not uniquely defined, we will define the Tukey median using a choice rule that outputs a unique element of $\Theta.$ The results in this paper only require that this choice rule outputs a point in the convex hull of data points with maximum halfspace depth, which is satisfied by any reasonable rule; one common approach in practice is to simply define the Tukey median as the arithmetic mean of the data points with maximum halfspace depth.

\begin{definition} \label{TukeyMedianDef}
    \citep{tukey1975mathematics} The \textit{Tukey median} is defined by
    \begin{align*}
        C(\{\bxi \in\ndat : \hdepth(\bxi, \ndat) = \max_{\btheta\in\Theta} \hdepth(\btheta, \ndat) \})
    \end{align*}
    where $C(\cdot)$ is any choice rule that selects a point from the convex hull of its input points.
\end{definition}

To describe a few advantageous properties of the Tukey median, some additional notation will be helpful. First, we will say that a dataset consists of observations in general position if there does not exist a $d-1$ dimensional hyperplane that contains more than $d$ observations. Note that, when this condition does not hold, it can be ensured with probability one by dithering the dataset, \textit{i.e.}, adding continuously distributed mean zero noise with a low scale to each observation; this will be discussed in more detail in Section \ref{sec:HalfSpaceDepthSens}. Second, the breakdown value of an estimator is defined as the smallest possible proportion of points that must be contaminated in order to make the estimator equal to an arbitrary value, as defined below more formally.

\begin{definition} 
    \citep{donoho1992breakdown} The \textit{breakdown value} of the estimator $f:\ndatSet\rightarrow \rr^d$ is defined as
    \begin{align*}
        \zeta^\star = \min \left\{\frac{m}{n + m} : \sup_{\ndatz\in \mdatSet} \lVert f(\ndat \cup \ndatz) - f(\ndat)  \rVert = \infty\right\}.
    \end{align*}
\end{definition}

One advantageous property of the Tukey median is that it is a robust estimator. For example, for datasets with observations in general position, its breakdown value is at least $1/(1+d),$ and one related property that we will use below is that, for any such dataset, the maximum halfspace depth is at least $\lceil n/(1+d) \rceil$ \citep{masse2002asymptotics, donoho1992breakdown}. The Tukey median is also consistent at the usual parametric rate of $O(1/\sqrt{n})$ when $\ndat$ is composed of independent observations from a population distribution. Detail on computational considerations will also be provided in the Section \ref{sec:ComputingDepth}, after outlining regression depth in the next section.

\subsection{Regression Depth}

\begin{figure}[ht]
    \centering
    \includegraphics[scale=.58]{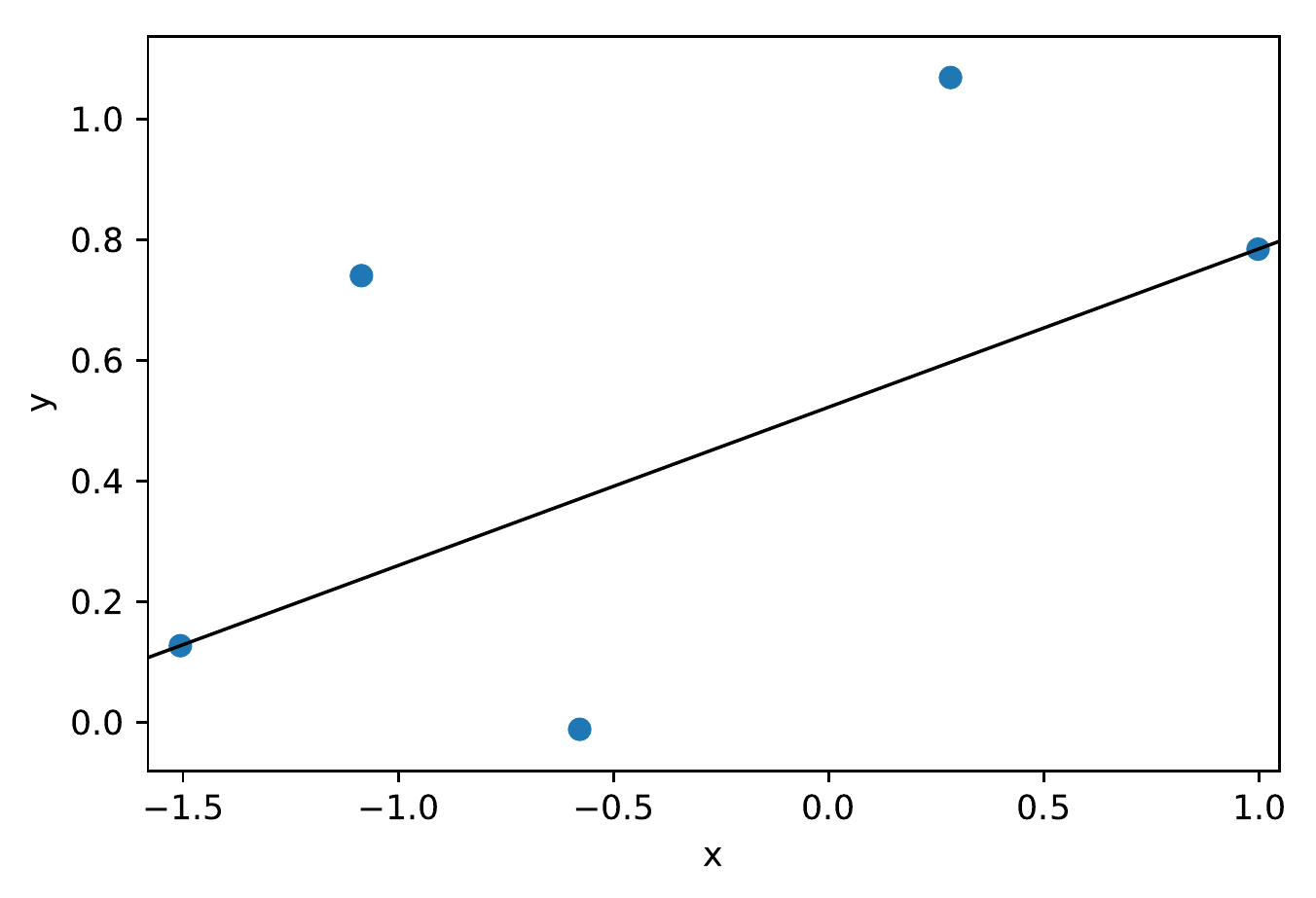}  \includegraphics[scale=.58]{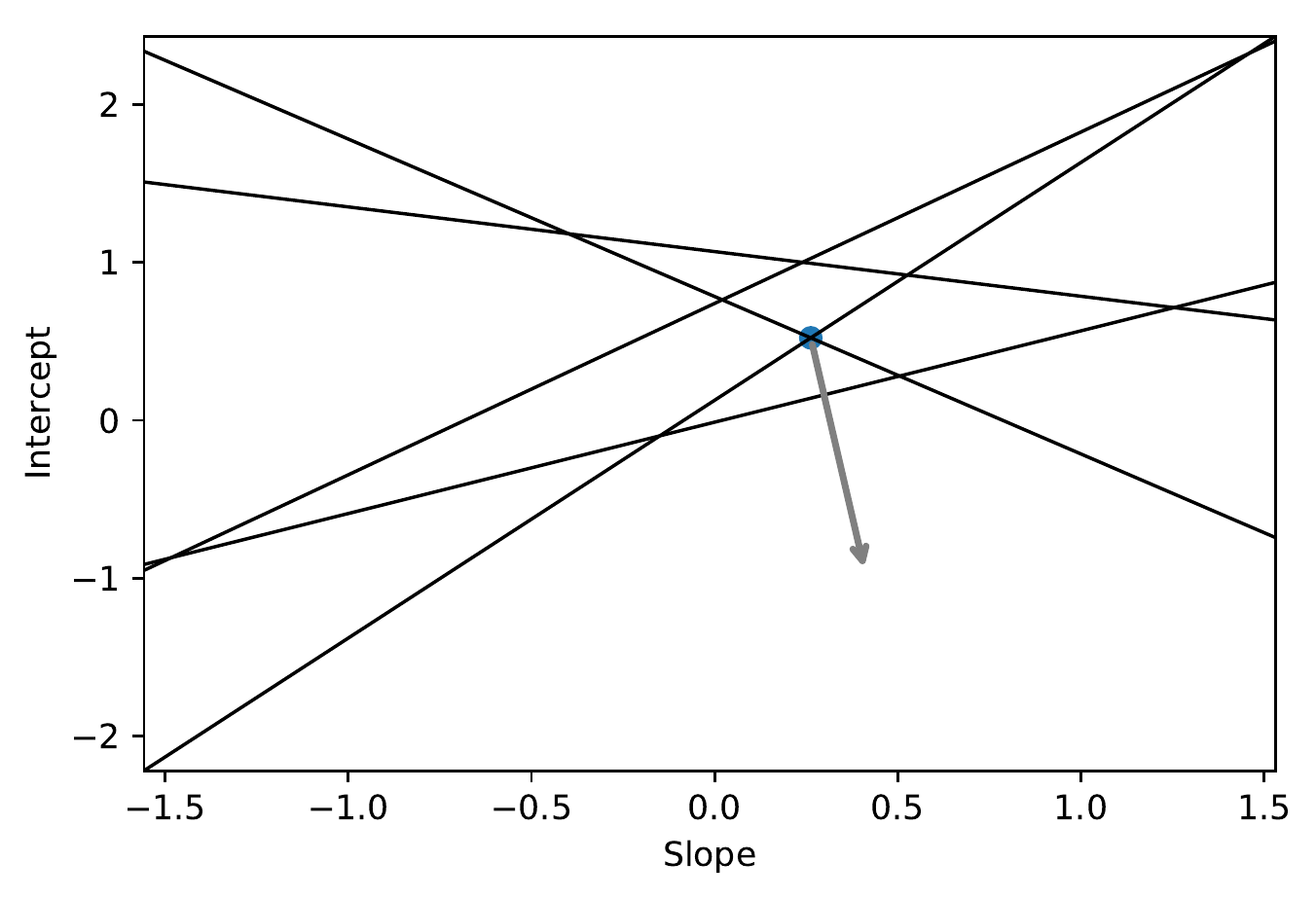}
    \caption{The left plot provides a scatter plot consisting of five datapoints in primal space, as well as a candidate regression line. These five datapoints correspond to the five lines in dual space on the right plot. The candidate regression line in the left plot corresponds to the blue point in the right dual space plot.}
    \label{fig:dualPlot}
\end{figure}

Some notation will be helpful before describing regression depth and its maximizer, the deepest regression hyperplane, which were both first proposed by \cite{rousseeuw1999regression}. In this section, and all sections related to the DP deepest regression estimator below, we will suppose the dataset $\ndat\in \ndatSet$ is composed of observations $(\bxi, y_i)\in \rr^d.$ We also only consider regressions with an intercept throughout the paper; in other words, we suppose that each vector $\btheta\in\Theta\subset\rr^d$ corresponds to the candidate fit $y = (1, \bx^\top)\btheta.$

Before providing the general definition of regression depth, we will start by considering the case of a simple linear regression, \textit{i.e.}, $\Theta \subset \rr^2.$ In this case, for any fixed value $u\notin \{x_i\}_i^n,$ the regression depth can be defined using the integers

\begin{align*}
    L^+_u(\btheta, \ndat) = \card \{(x_i,y_i)\in\ndat: (1, x_i)  \btheta \geq y_i , \; x_i < u\} \\
    R^+_u(\btheta, \ndat) = \card \{(x_i,y_i)\in\ndat: (1, x_i)  \btheta \geq y_i, \; x_i > u\} \\
    L^-_u(\btheta, \ndat) = \card \{(x_i,y_i)\in\ndat: (1, x_i)  \btheta \leq y_i, \; x_i < u\} \\
    R^-_u(\btheta, \ndat) = \card \{(x_i,y_i)\in\ndat: (1, x_i)  \btheta \leq y_i, \; x_i > u\},
\end{align*}

\noindent
which correspond to the number of data points either above or below the graph of the fit and either to the left or the right of $u\in \rr^1.$ Note that rotating the graph of the fit clockwise around the point $(u, (1, u)  \btheta)$ to a vertical line would intersect $L^-_u(\btheta, \ndat) + R^+_u(\btheta, \ndat)$ observations, whereas rotating this graph counterclockwise intersects $L^+_u(\btheta, \ndat) + R^-_u(\btheta, \ndat)$ observations. Thus, the regression depth in this two dimensional case is given by

\begin{align*}
    \rdepth(\btheta, \ndat) = \min_{u\in \rr^1/\{x_i\}_i^n} \min \left( L^+_u(\btheta, \ndat) + R^-_u(\btheta, \ndat), L^-_u(\btheta, \ndat) + R^+_u(\btheta, \ndat) \right).
\end{align*}

Regression depth can be defined in an analogous way in higher dimensional cases by replacing $u$ with a hyperplane in $\bx-$space \citep{rousseeuw1999regression}. We will use the definition below in this paper instead, which appears to have been first described by \citep{mizera2002depth}.

\begin{definition} \label{rdepthDef}
    \citep{rousseeuw1999regression, mizera2002depth} Given a dataset consisting of observations $ (\bxi, y_i)\in\ndat$ with $(\bxi, y_i) \in \rr^d,$ the regression depth of $\btheta \in \Theta \subset \rr^d$ is defined as

    \begin{align*}
        \rdepth(\btheta, \ndat) = \min_{\bu\neq \boldsymbol{0}} \card \{ (\bxi, y_i) \in \ndat : -\bu^\top \bxi \; \sign(y_i - (1, \bxi^\top) \btheta) \geq 0\}.
    \end{align*}
\end{definition}

This definition can be derived by considering the regression depth in the dual space. We will briefly outline this and a few related concepts, in part because the local sensitivity of the deepest regression has a straightforward geometric interpretation in the dual space. More detail on the dual space in the context of regression depth can be found in \citep{van2008efficient, rousseeuw1999regression}; for interesting historical connections to related regressions, see \citep{koenker2000galton}. We will use the example in Figure \ref{fig:dualPlot} to introduce these concepts. The left plot in Figure \ref{fig:dualPlot} depicts a set of five data points $(x_i,y_i)\in\ndat$ in $(x,y)-$space, or \textit{the primal space}. Each of these primal points can alternatively be encoded as lines in dual space using the mapping $(x_i, y_i) \mapsto \{\btheta \in \Theta : (1, x_i) \;\btheta = y_i \},$ or the set of all fits that pass through the data point in the primal space. The right plot in Figure \ref{fig:dualPlot} provides the dual lines corresponding to each of the primal points. Likewise, each point $\btheta \in \Theta$ in the dual space corresponds to a single regression line in primal space using the mapping $\btheta \mapsto \{ (x, y) \in \rr^2 : (1, x)\; \btheta = y \}.$ 

The act of rotating the graph of a fit to a vertical hyperplane in primal space corresponds to simply moving the point $\btheta \in \Theta$ in dual space corresponding to this fit along a ray. For example, in Figure \ref{fig:dualPlot}, we can see that the regression line on the left plot has a regression depth of three in two ways. First, we can count the minimum number of data points this line must intersect when rotated to a vertical line. In this case rotating this regression line counterclockwise to a vertical line around the intersection point of this line with the right hand side of the plot results in the line crossing three points in total. Second, we can alternatively count the minimum number of observation lines that a ray in dual space, originating from this regression coefficient's location, \textit{i.e.}, the blue point in the plot on the right side Figure \ref{fig:dualPlot}, must cross. The gray ray in Figure \ref{fig:dualPlot} is one such example; since this ray crosses three observation lines, including the two lines at the origin of the ray, the regression depth of this fit is three.

To derive definition \ref{rdepthDef}, note that the observation line $\{ \btheta \in \Theta :  y_i = (1, \bxi^\top) \btheta \}$ intersects the ray that originates from $\btheta$ and points in the direction $\bu \neq \boldsymbol{0}$ if and only if there exists $r\in [0, t]$ such that

\begin{align*}
    y_i &= (1, \bxi^\top) (\tilde{\btheta} + r \bu) \iff  \frac{y_i - (1, \bxi^\top) \tilde{\btheta}}{(1, \bxi^\top) \bu} \in [0,t].
\end{align*}

\noindent
After taking the limit as $t\rightarrow \infty,$ we have

\begin{align*}
    \frac{y_i - (1, \bxi^\top) \tilde{\btheta}}{(1, \bxi^\top) \bu} \geq 0 \iff (1, \bxi^\top) \bu \; \sign(y_i - (1, \bxi^\top) \tilde{\btheta}) \geq 0,
\end{align*}

\noindent
which implies Definition \ref{rdepthDef}.

Like halfspace depth, one advantageous property of regression depth is that it is affine equivariant. Also, below we will make use of the following property of regression depth contours, which are sets in dual space given by $\Omega(k, \ndat) = \{  \btheta \in\Theta : \rdepth(\btheta, \ndat)  \geq k \}.$ 

\begin{lemma} \label{lem:diamReg}
    \citep{rousseeuw1999regression} For any $k\in\zz_{++},$ the regression depth contour $\Omega(k, \ndat) = \{  \btheta \in\Theta: \rdepth(\btheta, \ndat)  \geq k \}$ satisfies

    \begin{align*}
        & \max_{ \btheta_1, \btheta_2 \in \Omega(k, \ndat)} \lVert \bz_1 - \bz_2 \rVert_1 = \max_{ \btheta_1, \btheta_2 \in A} \lVert \bxi - \bxj \rVert_1,
    \end{align*}

    \noindent
    where $A = \{\btheta^{(ij)} \in \Theta : \rdepth(\btheta^{(ij)}, \ndat)  \geq k, \; (1, \bxi^\top)\; \btheta^{(ij)} = y_i, \; (1, \bxj^\top)\; \btheta^{(ij)} = y_j, \textup{ and } (\bxi, y_i), (\bxj, y_j) \in \ndat \}.$
\end{lemma}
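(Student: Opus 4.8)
The plan is to mirror the proof of Lemma~\ref{lem:diamHalfspace}, replacing the identity $\Omega(k,\ndat)=\convhull(A)$ that held for halfspace depth by the analogous structural description of a regression depth contour as a convex polytope whose extreme points are exactly the intersection points $\btheta^{(ij)}$ appearing in the definition of $A$. I would first record the facts about $\rdepth(\cdot,\ndat)$ that make this work. Writing $H_i=\{\btheta\in\Theta:(1,\bxi^\top)\btheta=y_i\}$ for the dual hyperplane of the observation $(\bxi,y_i)$, the key observation is that $\rdepth(\btheta,\ndat)$ depends on $\btheta$ only through the sign pattern $(\sign(y_i-(1,\bxi^\top)\btheta))_{i=1}^n$ appearing in Definition~\ref{rdepthDef}. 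Since each such sign can change only as $\btheta$ crosses the corresponding $H_i$, the depth is constant on every cell of the arrangement $\{H_i\}_{i=1}^n$. Consequently $\Omega(k,\ndat)$ is a union of closed cells of this arrangement, so its boundary, and in particular each of its extreme points, is contained in $\bigcup_i H_i$.

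Second, I would invoke the known fact \citep{rousseeuw1999regression} that for each $k\in\zz_{++}$ and for data in general position the contour $\Omega(k,\ndat)$ is convex and bounded (assuming, as is implicit in Lemma~\ref{lem:diamHalfspace}, that $\Theta$ is large enough not to clip the contour, so that the restriction to $\Theta$ is inactive). Being a bounded convex set that is a finite union of arrangement cells, $\Omega(k,\ndat)$ is a polytope; let $V$ denote its vertex set. Each $v\in V$ is an extreme point, hence lies in $\bigcup_i H_i$; more precisely, as a vertex of a polytope in $\rr^d$ with $d\geq 2$ it satisfies at least $d\geq 2$ of the equalities $(1,\bxi^\top)v=y_i$. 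Choosing two such indices $i,j$ exhibits $v$ as a point satisfying $(1,\bxi^\top)v=y_i$ and $(1,\bxj^\top)v=y_j$, and since $v\in\Omega(k,\ndat)$ we have $\rdepth(v,\ndat)\geq k$. Hence $v=\btheta^{(ij)}\in A$, so $V\subseteq A\subseteq\Omega(k,\ndat)$.

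The conclusion then follows from a standard convexity argument, exactly as for halfspace depth. The map $(\btheta_1,\btheta_2)\mapsto\lVert\btheta_1-\btheta_2\rVert_1$ is convex, so its maximum over the compact convex set $\Omega(k,\ndat)\times\Omega(k,\ndat)=\convhull(V)\times\convhull(V)$ is attained at a pair of vertices in $V$. Therefore
\begin{align*}
\max_{\btheta_1,\btheta_2\in\Omega(k,\ndat)}\lVert\btheta_1-\btheta_2\rVert_1
=\max_{\btheta_1,\btheta_2\in V}\lVert\btheta_1-\btheta_2\rVert_1
\leq\max_{\btheta_1,\btheta_2\in A}\lVert\btheta_1-\btheta_2\rVert_1
\leq\max_{\btheta_1,\btheta_2\in\Omega(k,\ndat)}\lVert\btheta_1-\btheta_2\rVert_1,
\end{align*}
where the first inequality uses $V\subseteq A$ and the second uses $A\subseteq\Omega(k,\ndat)$. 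All three quantities therefore coincide, which is the claim (the symbols $\bz_1,\bz_2$ and $\bxi,\bxj$ in the statement are understood as $\btheta_1,\btheta_2$ ranging over the respective sets).

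I expect the main obstacle to be the structural step: justifying cleanly that $\Omega(k,\ndat)$ is a bounded convex polytope whose vertices lie on the observation hyperplanes. Convexity and boundedness are regression-depth facts I would cite from \citep{rousseeuw1999regression}, but the piecewise-constancy of $\rdepth$ on the dual arrangement, and hence the localization of the contour's extreme points on the $H_i$, is the part that does the real work and must be argued from Definition~\ref{rdepthDef}. The remaining delicacy is the interaction with the feasible set $\Theta$: if $\Theta$ cut through the contour, extreme points created by $\partial\Theta$ need not lie on any $H_i$ and would fall outside $A$; as in Lemma~\ref{lem:diamHalfspace}, I would dispose of this by assuming $\Theta$ contains the unrestricted contour, so that the intersection with $\Theta$ is inactive and the vertex set is genuinely $V\subseteq A$.
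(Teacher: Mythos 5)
Your overall route is different from the paper's: the paper proves this lemma in one line by citing the fact $\convhull(\Omega(k,\ndat))=\convhull(A)$ from \citet{rousseeuw1999regression} and leaving implicit that the $L^1$-diameter of a set equals that of its convex hull, whereas you rebuild the structural fact from scratch via piecewise constancy of $\rdepth$ on the dual hyperplane arrangement. That reconstruction is a reasonable and more self-contained plan, and your observations that the depth depends only on the sign pattern, that $\Omega(k,\ndat)$ is a union of closed faces of the arrangement, and that arrangement vertices lie on at least $d\geq 2$ of the dual hyperplanes (hence belong to $A$) are all sound.

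However, there is one genuine error: the step where you ``invoke the known fact \citep{rousseeuw1999regression} that the contour $\Omega(k,\ndat)$ is convex.'' Regression depth contours are \emph{not} convex in general; this is precisely where the regression case differs from halfspace depth, and it is why the paper's proof of this lemma asserts only $\convhull(\Omega(k,\ndat))=\convhull(A)$, in contrast to Lemma \ref{lem:diamHalfspace}, whose proof uses the genuinely stronger identity $\Omega(k,\ndat)=\convhull(A)$. (Already for three dual lines the set of fits with $\rdepth\geq 1$ is a triangle together with the three full lines, since a fit lying on an observation's dual hyperplane has that observation counted in every direction; such regions are neither convex nor, without $\Theta$, bounded.) Your argument uses convexity essentially, in the claims that $\Omega(k,\ndat)$ is a polytope and that $\Omega(k,\ndat)\times\Omega(k,\ndat)=\convhull(V)\times\convhull(V)$, so as written the proof does not go through. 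The repair is short and keeps your main idea: for any compact set $S$, $\max_{\btheta_1,\btheta_2\in S}\lVert\btheta_1-\btheta_2\rVert_1=\max_{\btheta_1,\btheta_2\in\convhull(S)}\lVert\btheta_1-\btheta_2\rVert_1$ by convexity of the map $(\btheta_1,\btheta_2)\mapsto\lVert\btheta_1-\btheta_2\rVert_1$; then note that every extreme point of $\convhull(\Omega(k,\ndat))$ lies in $\Omega(k,\ndat)$ and, by your own piecewise-constancy and closed-face argument, cannot lie in the relative interior of a positive-dimensional face contained in $\Omega(k,\ndat)$, hence is an arrangement vertex and so lies in $A$; the sandwich $V\subseteq A\subseteq\Omega(k,\ndat)$ then finishes exactly as you wrote. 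Your caveat about $\partial\Theta$ clipping the contour is well taken: the lemma as stated ignores it, and the paper only accounts for such boundary points later, in the definition of $\mathcal{A}(\ndat)$ in Lemma \ref{lem:regDepthSens}, so handling it by assuming the clipping is inactive (or by adding those boundary intersection points to $A$) is the right disposal.
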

\begin{proof}
This follows from the fact that $ \convhull(\Omega(k, \ndat)) = \convhull(A)$ \citep{rousseeuw1999regression}.
\end{proof}

One can also consider all other candidate fits to see that the fit in Figure \ref{fig:dualPlot} is the fit with the highest regression depth, or the \textit{deepest regression}, which is defined below. Like the Tukey median, the maximizer of regression depth need not be unique, so the definition below also uses a choice rule to ensure the deepest regression estimator is defined uniquely. Also like the Tukey median, our results only require that this choice rule outputs a final estimate in the convex hull of $\btheta\in\Theta$ with maximum regression depth. One common approach, and the approach we use in the simulations below, is to define this choice function so that it outputs the arithmetic mean of observation line intersections in dual space that have maximum regression depth.

\begin{definition} \label{DeepestRegDef}
    \citep{rousseeuw1999regression} The \textit{deepest regression} is defined as

    \begin{align*}
        C(\{\hbtheta\in\Theta : \rdepth(\hbtheta, \ndat) = \max_{\btheta\in\Theta} \rdepth(\btheta, \ndat) \}),
    \end{align*}

    \noindent
    where $C(\cdot)$ is any choice rule that outputs a point in the convex hull of its input.
\end{definition}

One advantageous property of this estimator is that it is robust, with a breakdown value of at least $1/(1+d)$ \citep{mizera2002depth}. In addition, the deepest regression obtains this high breakdown value without sacrificing the usual parametric rate of consistency of $O(1/\sqrt{n})$ to the conditional median of $y$ given $\bx;$ for more detail, see \citep{bai2008asymptotic,he1998asymptotics}. Note that a similar consistency result also holds for the least absolute deviation (LAD) regression \citep{koenker1978regression}, but the deepest regression has the advantage over the LAD regression of being robust to outliers in the independent variables, $\{\bxi\}_i.$ Also, one property that we will use below is that the regression depth of the deepest regression is at least $\lceil n/(1+d)\rceil,$ which, unlike the similar result described above for the halfspace depth of the Tukey median, has been shown to hold for datasets that are not in general position \citep{mizera2002depth,amenta2000regression}. 

\subsection{Statistical Depth: Computational Considerations} \label{sec:ComputingDepth}

In this subsection we will describe algorithms for computing halfspace and regression depth and their maximizers when $\Theta \subset \rr^d,$ where $d\geq 2.$ Both of the notions of statistical depth can be computed at a candidate estimate $\btheta\in\Theta$ in $O(n^{d-1} \log(n))$ time, and faster approximations for these methods are also available \citep{langerman2000optimal,rousseeuw1998computing}. 

In the case of the Tukey median, simply computing the halfspace depth at each datapoint provides an algorithm for computing the Tukey median in $O(n^d\log(n))$ time. Faster approaches are also available when $d=2$ \citep{miller2001fast,langerman2003optimization}. \cite{struyf2000high} also provide an approximation method for the Tukey median when the sample size and/or the dimension is prohibitively large.

In the case of regression depth, there are $O(n^d)$ candidate fits that pass through $d$ data points to consider, so the na\"ive approach of computing the regression depth for each of these fits has a time complexity of $O(n^{2 d - 1} \log(n)),$ and is most often prohibitively computationally expensive when $d>2.$ \cite{langerman2003complexity} provide a method for approximating the deepest regression that runs in $O(n\log(n))$ time when $d=2.$ \cite{van2002deepest} provide a method for approximating to the deepest regression in use cases in which the sample size and/or the dimension is higher.

We emphasize that the first four differentially private mechanisms provided below actually require more than the maximizer of a notion of statistical depth; in order to also compute a $\beta-$smooth upper bound on the local sensitivity of the estimator, these mechanisms also require upper bounds on diameters of the depth contours defined in Lemma \ref{lem:diamHalfspace} and Lemma \ref{lem:diamReg}.

Computationally efficient methods to compute upper bounds on these values are left as a topic for future research. For the purposes of these first four differentially private methods in this paper we simply use the two na\"ive approaches in order to explicitly find these diameters. Since this implies that the deepest regression mechanisms provided below are primarily applicable when $d=2,$ Section \ref{sec:approxDeepestReg} provides a mechanism for the more computationally efficient approximation of the deepest regression provided by \cite{van2002deepest}.

\subsection{Related Literature} \label{sec:relatedLit}

This section mainly focuses on providing some pointers into the DP literature for linear regression and Tukey median estimators. The only paper that we are of that provides a DP Tukey median estimator is \cite{ramsay2021differentially}, which uses a formulation based on an exponential mechanism rather than on the $\beta-$smooth sensitivity of the Tukey median, as is done below. One such class of DP estimators are those that pass sufficient statistics of a linear regression through a DP primitive mechanism; see for example, \citep{foulds2016theory,mcsherry2009differentially,vu2009differential,wang2018revisiting,dwork2014analyze}. A related class of DP estimators include the methods proposed by \cite{dwork2009differential,alabi2020differentially}, which are based on the Theil-Sen simple linear regression estimator \citep{theil1950rank,sen1968estimates}. \cite{bassily2014private} also propose a DP OLS method that estimates the regression parameters using a DP gradient descent method. One line of work that can be used to formulate a particularly broad class of DP estimators is based on perturbing the objective function of a convex optimization problem \citep{kifer2012private,awan2021structure}. There has also been work on DP inference; see for example, \citep{barrientos2019differentially,sheffet2017differentially,awan2021structure,karwa2018finite}. 

Since some of our results are related to the input requirement for most DP estimators of \textit{a priori} bounds on the observations, it is worth highlighting a few papers related to this issue. First, as described above, \cite{chen2016differentially} describe a DP mechanism that can be used to estimate a bounding box of the form $[-c,c]^d \subset \rr^d$ that contains a fixed proportion of the datapoints. These bounds can be used as input to subsequent DP mechanisms after removing all observations that are outside of this bounding box from the dataset. More detail on this approach is also given in the next subsection and Section \ref{sec:simulations}. Second, \cite{karwa2018finite} provide DP estimators for the univariate mean of the data without the requirement of input bounds on the observations. While the $(\epsilon,\delta,\gamma)-$RDP estimators proposed below also do not require input bounds on the data or the estimator, the $(\epsilon,\delta,\gamma)-$RDP estimators proposed below are different from the (pure and approximate) DP estimators proposed by \cite{karwa2018finite} for a few reasons. First, the accuracy of the approach described by \cite{karwa2018finite} is dependent on whether or not the input dataset satisfies their assumption that it is composed of independent draws from a Gaussian distribution; in contrast, the $(\epsilon,\delta,\gamma)-$RDP estimators proposed below do not require distributional assumptions on the population distribution. Second, since RDP is a relaxation of the privacy guarantee provided by DP, the privacy guarantee provided by the mechanisms proposed by \cite{karwa2018finite} is stronger than the corresponding guarantee provided by the estimators proposed below. Third, in the case of the RDP estimators proposed below, our avoidance of distributional assumptions requires an assumption on the sample size being above a given threshold, which is not required by the approach described by \cite{karwa2018finite}. Later work also generalized this approach to the multidimensional case, while maintaining the normality assumption on the population distribution; see for example, \cite{kamath2022private}.

\section{Improved Bounds for \texorpdfstring{$\beta-$}{beta-}Smooth Sensitivity} \label{sec:betaBounds}

This section provides improved bounds on the maximum value of $\beta$ that can be used in a smooth-sensitivity-based mechanism with noise drawn from a Laplace distribution while still satisfying $(\epsilon, \delta)-$DP. The corresponding bounds on $\beta$ that are provided by \cite{nissim2007smooth} in this case, which are also provided in Lemma \ref{lem:example_ss_laplace_mech}, prioritize tractability over tightness. For example, when $d=2$ and $\delta \leq 1/10^6,$ the bound proposed by \cite{nissim2007smooth} for this case is slightly less than half of the value provided here. The result that provides the improved lower bound on $\beta,$ \textit{i.e.}, Theorem \ref{laplace_smooth_sense}, follows from the next theorem, which provides three upper bounds on a certain quantile of the sum of exponential random variables, given in (\ref{bound1})-(\ref{bound3}). In the simulations provided in Section \ref{sec:simulations}, bound (\ref{bound1}) is used to set $\beta$ to the tightest bound we derive here, as described in Theorem \ref{laplace_smooth_sense}. 

\begin{theorem} \label{thm:prob_bound}
    Suppose $Y = \sum_{i}^d \boldsymbol{X}[i],$ where $ \boldsymbol{X}[i]\sim \textup{Exponential}(1)$ for each $i\in\{1,\dots,d\},$ and let $\rho(\delta, d)$ be defined as the $1-\delta$ quantile of $Y.$ If $d>1,$

    \begin{align} 
        \rho(\delta, d) & = q_{d,1}(1-\delta) \label{bound1}\\
        & \leq - d W_{-1}(-\delta^{1/d}/e) \label{bound2} \\
        & \leq \frac{\sqrt{2 \log (\delta) (2 \log (\delta)-9 d)}+3 d-2 \log (\delta)}{3} \label{bound3} 
    \end{align}

    \noindent
    where $q_{d,1} : [0,1]\rightarrow \rr_{+}$ is the quantile function of the gamma distribution with shape and rate parameters given by $d$ and $1$ respectively, and $W_{-1}:[-1/e,0)\rightarrow \rr_{-}$ is the secondary branch of the Lambert W function. 
\end{theorem}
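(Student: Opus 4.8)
The plan is to treat the three relations separately, observing that the equality (\ref{bound1}) is essentially definitional while the two inequalities (\ref{bound2}) and (\ref{bound3}) carry the content. Since $Y=\sum_{i}^d\boldsymbol{X}[i]$ is a sum of $d$ independent $\textup{Exponential}(1)$ variables, $Y$ follows a Gamma distribution with shape $d$ and rate $1$, so its $1-\delta$ quantile is by definition $q_{d,1}(1-\delta)$; this gives (\ref{bound1}) immediately. The general strategy for the inequalities is that any $t$ with $P(Y\geq t)\leq\delta$ is an upper bound for $\rho(\delta,d)$, so I would first produce an explicit tail bound and invert it, then replace the resulting Lambert-W expression by a closed form.

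For (\ref{bound2}) I would use the moment generating function $E[e^{sY}]=(1-s)^{-d}$ for $s\in(0,1)$, so that the Chernoff bound gives $P(Y\geq t)\leq e^{-st}(1-s)^{-d}$. Minimizing the right-hand side over $s$ yields the minimizer $s=1-d/t$ (valid when $t>d$) and the bound $P(Y\geq t)\leq (t/d)^d e^{d-t}$. Setting $(t/d)^d e^{d-t}=\delta$ and taking $d$-th roots reduces the problem to solving $u e^{-u}=\delta^{1/d}/e$ for $u=t/d$; rewriting this as $(-u)e^{-u}=-\delta^{1/d}/e$ and recognizing the defining equation of the Lambert W function, the solution with $u>1$ lies on the secondary branch, so $u=-W_{-1}(-\delta^{1/d}/e)$ and $t=-dW_{-1}(-\delta^{1/d}/e)$. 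Since $\delta\in(0,1)$ and $d>1$ force $-\delta^{1/d}/e\in(-1/e,0)$ and hence $W_{-1}<-1$, this $t$ indeed exceeds $d$, which keeps every preceding step in its valid range; as $P(Y\geq t)\leq\delta$ at this $t$, we conclude $\rho(\delta,d)\leq -dW_{-1}(-\delta^{1/d}/e)$.

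For (\ref{bound3}) I would bound the Lambert-W term itself. Writing $t=-dW_{-1}(-\delta^{1/d}/e)$ and $x:=t/d=-W_{-1}(-\delta^{1/d}/e)>1$, the identity $we^w=-\delta^{1/d}/e$ of $w=W_{-1}(\cdot)$ translates into the scalar relation $x-\log x = 1-(\log\delta)/d=:B$. To make this explicit I would replace $\log x$ by a rational upper bound chosen so that the transcendental relation becomes a quadratic; the inequality $\log x\leq \frac{(x-1)(x+5)}{2(2x+1)}$, valid for $x\geq 1$, is exactly calibrated for this purpose. Substituting it gives $x-\log x\geq \frac{3x^2-2x+5}{2(2x+1)}$, and setting the right-hand side equal to $B$ produces $3x^2-(2+4B)x+(5-2B)=0$. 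Because $f(x)=x-\log x$ is increasing on $[1,\infty)$ and dominates $\frac{3x^2-2x+5}{2(2x+1)}$ there, the larger root $x'$ of this quadratic satisfies $f(x')\geq B=f(x)$, hence $x\leq x'$; substituting $B=1-(\log\delta)/d$, simplifying the discriminant (which becomes $\tfrac{16(\log\delta)^2-72d\log\delta}{d^2}$), and multiplying by $d$ recovers precisely the right-hand side of (\ref{bound3}).

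The Chernoff optimization and the final algebra reducing the quadratic's larger root to the stated expression are routine. I expect the main obstacle to be the argument for (\ref{bound3}): one must identify the correct rational majorant of $\log$ — a looser or tighter bound would not reproduce the exact closed form — and then rigorously verify $\log x\leq \frac{(x-1)(x+5)}{2(2x+1)}$ on $[1,\infty)$ (for instance by checking the difference vanishes to the right order at $x=1$ and is monotone thereafter), together with the monotonicity step that licenses passing from ``$x$ solves the transcendental relation'' to ``$x$ is at most the larger quadratic root.'' A secondary subtlety running through both (\ref{bound2}) and (\ref{bound3}) is the consistent selection of the $W_{-1}$ branch and the verification that $t>d$, which is exactly what keeps the Chernoff minimizer, the range $x\geq 1$ needed for the rational bound, and the inversion itself all within their domains of validity.
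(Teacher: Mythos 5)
Your proposal is correct and follows essentially the same route as the paper: the same Chernoff bound $P(Y\geq t)\leq (t/d)^d e^{d-t}$, the same exact inversion via the secondary branch of the Lambert $W$ function for (\ref{bound2}), and the same rational majorant of the logarithm for (\ref{bound3}) --- your inequality $\log x\leq (x-1)(x+5)/(2(2x+1))$ on $[1,\infty)$ is precisely Tops{\o}e's bound $\log(1+y)\leq y(6+y)/(2(3+2y))$ with $x=1+y$, which is what the paper invokes. The only organizational difference is that you prove the middle inequality (\ref{bound2}) $\leq$ (\ref{bound3}) explicitly by bounding $-W_{-1}$ through the relation $x-\log x = 1-\log(\delta)/d$ and the larger quadratic root, whereas the paper derives (\ref{bound3}) directly as a relaxed sufficient condition on $r$ in the Chernoff inequality and leaves that ordering implicit.
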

\begin{proof}
    The equality given in (\ref{bound1}) simply follows from the fact that $Y\sim\textup{Gamma}(d, 1).$ The starting point for both of the two remaining bounds is the following Chernoff bound for $P(Y \geq d (1+r)),$ which is

    \begin{align*}
        P(Y \geq d (1+r)) \leq \frac{E(\exp(t Y))}{\exp(t d (1 + r))} = \frac{1}{(1-t)^d \exp(t d (1 + r))}.
    \end{align*}

    \noindent
    After plugging in $t=r/(r+1)$ into the equation above, which is the minimizer of the right hand side over $t\geq 0,$ and simplifying, we have

    \begin{align*}
        P(Y \geq d (1+r)) \leq  \frac{(1+r)^d}{\exp(r d)}.
    \end{align*}

    \noindent
    This theorem results from various definitions of $r$ that satisfy $(1+r)^d/\exp(r d)\leq\delta,$ and thus imply $\rho(\delta, d)\leq d (1+r)$ for each such value of $r.$

    Note that

    \begin{align}\label{pbound_int1}
        (1+r)^d /\exp(r d) \leq \delta \iff     (1+r)\exp(-r)\leq\delta^{1/d},
    \end{align}

    \noindent
    so we will focus on bounds for this latter inequality. \cite{topsoe2004some} provides the bound $\log(1+x)\leq x (6+x)/(2(3+2x)),$ which implies 

    \begin{align*}
        & (1+r) \exp(-r)\leq\delta^{1/d} \implies \exp\left(\frac{r (6 + r)}{2 (3 + 2 r)} - r\right) \leq \delta^{1/d} \iff \\
        & \frac{r (6 + r)}{2 (3 + 2 r)} - r \leq \log(\delta)/d \iff r \geq \frac{ \sqrt{2 \log (\delta) (2 \log (\delta)-9 d)}-2 \log (\delta)}{3 d},
    \end{align*}

    \noindent
    and thus this implies the second inequality bound in the statement of the theorem. The bound given in (\ref{bound2}) is simply the closed form solution of (\ref{pbound_int1}) when this inequality holds with equality;

    \begin{align*}
        (1+r)\exp(-r)=\delta^{1/d} \iff r = -1 - W_{-1}(-\delta^{1/d}/e).
    \end{align*}

    \noindent
    Note that using the secondary branch of the Lambert W function ensures that $r\geq 0$ for all $\delta\in [0,1]$ and $d\in\nn_{++}.$
\end{proof}

The next result describes how Theorem \ref{thm:prob_bound} can be used to derive possible values for $\beta.$

\begin{theorem} \label{laplace_smooth_sense}
    \citep{nissim2007smooth} For $\epsilon>0 $ and $\delta\in (0,1),$ the mechanism with output given by $M(\ndat) = f(\ndat) + S_f(\ndat)/\alpha \boldsymbol{Z},$ where $\boldsymbol{Z}[i] \sim \textup{Laplace}(0, 1)$ for each $i\in \{1,\dots, d\},$ satisfies $(\epsilon, \delta)-$differential privacy if $\alpha = \epsilon/2$ and $\beta = \epsilon/(2 \hat{\rho}(\delta, d)),$ where $\hat{\rho}(\delta, d)$ is defined so that $\rho(\delta, d) \leq \hat{\rho}(\delta, d)$ and $\rho(\delta, d)$ is defined as in Theorem \ref{thm:prob_bound}. 

    Also, if $d=1,$ $\beta$ can alternatively be defined as,

    \begin{align}
        \beta = W_{-1}\left(\delta \exp(\epsilon/2) \log \left(\delta\right)\right)- \log \left(\delta\right)-\epsilon/2.
    \end{align}
\end{theorem}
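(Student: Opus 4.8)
The plan is to cast this as the smooth-sensitivity Laplace argument of \cite{nissim2007smooth}, with the only new ingredient being that the failure probability is controlled by the sharp Gamma-quantile estimate of Theorem \ref{thm:prob_bound} rather than by a cruder tail bound. I would first record the two features of a $\beta$-smooth upper bound $S_f$ that the proof uses, both stated for neighbors $\ndat, \ndatz$: the sensitivity bound $\lVert f(\ndat) - f(\ndatz) \rVert_1 \leq LS_f(\ndat) \leq S_f(\ndat)$, and the smoothness bound $S_f(\ndat) \leq e^\beta S_f(\ndatz)$, both of which are immediate from the $\max_k e^{-k\beta}\widetilde{A}_f^{(k)}$ structure of Definition \ref{neighbor_sens} together with Lemma \ref{lem:nissim_claim_3p2}. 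Writing $S = S_f(\ndat)$, $S' = S_f(\ndatz)$, and $\lambda = \log(S'/S) \in [-\beta, \beta]$, I would then express both output laws as pushforwards of the single standard Laplace vector $\boldsymbol{Z}$: since $M(\ndat) = f(\ndat) + (S/\alpha)\boldsymbol{Z}$ and $M(\ndatz) = f(\ndatz) + (S'/\alpha)\boldsymbol{Z}$, for any measurable $\mathcal{S}$ one has $P(M(\ndat) \in \mathcal{S}) = \mu(\mathcal{S}_1)$ and $P(M(\ndatz) \in \mathcal{S}) = \mu(\mathcal{S}_2)$, where $\mu$ is the law of $\boldsymbol{Z}$ and $\mathcal{S}_2 = e^{-\lambda}\mathcal{S}_1 + \boldsymbol{v}$ is the image of $\mathcal{S}_1 = (\alpha/S)(\mathcal{S} - f(\ndat))$ under a dilation by $e^{-\lambda} = S/S'$ followed by a translation $\boldsymbol{v}$ with $\lVert\boldsymbol{v}\rVert_1 \leq \alpha e^{-\lambda}$.

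The core of the argument is then a single bad-set bound on $\mu$, which handles the location and scale perturbations simultaneously rather than through a lossy two-step composition. Changing variables through the affine map carrying $\mathcal{S}_1$ to $\mathcal{S}_2$ and splitting the resulting integral according to whether the pointwise ratio $g(\boldsymbol{w}) = h(\boldsymbol{w}) / (e^{-d\lambda} h(e^{-\lambda}\boldsymbol{w} + \boldsymbol{v}))$ exceeds $e^\epsilon$, where $h$ is the standard Laplace density, gives $\mu(\mathcal{S}_1) \leq e^\epsilon \mu(\mathcal{S}_2) + \mu(\{g > e^\epsilon\})$, with no extraneous factor multiplying the additive term. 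It therefore suffices to show $\mu(\{g > e^\epsilon\}) \leq \delta$. The triangle inequality for the $\ell_1$ norm yields $\log g(\boldsymbol{w}) \leq d\lambda + \lVert\boldsymbol{v}\rVert_1 + (e^{-\lambda} - 1)\lVert\boldsymbol{w}\rVert_1$, and the decisive observation is that under $\mu$ the quantity $\lVert\boldsymbol{w}\rVert_1 = \sum_{i=1}^d |\boldsymbol{Z}[i]|$ is a sum of $d$ independent $\textup{Exponential}(1)$ variables, that is, exactly the $\textup{Gamma}(d,1)$ variable $Y$ of Theorem \ref{thm:prob_bound}.

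I would finish by splitting on the sign of $\lambda$. When $\lambda \geq 0$ the coefficient $e^{-\lambda} - 1$ is nonpositive, so $\log g$ is bounded by the deterministic quantity $d\lambda + \lVert\boldsymbol{v}\rVert_1 \leq d\beta + \alpha$, which is at most $\epsilon$ once $\alpha = \epsilon/2$ and $\beta \leq \epsilon/(2d)$ (which holds since $\hat\rho(\delta,d) \geq d$), and the bad set is empty. When $\lambda < 0$ the bad event is $\{\lVert\boldsymbol{w}\rVert_1 > (\epsilon - d\lambda - \lVert\boldsymbol{v}\rVert_1)/(e^{-\lambda} - 1)\}$, whose worst case over $-\beta \leq \lambda < 0$ reduces to requiring a threshold of the form $(\epsilon + d\beta - \alpha e^\beta)/(e^\beta - 1)$ to be at least the $1-\delta$ quantile $\rho(\delta,d)$; invoking $\rho(\delta,d) \leq \hat\rho(\delta,d)$ and the choice $\beta = \epsilon/(2\hat\rho(\delta,d))$ makes this hold, so the event has $\mu$-measure at most $\delta$ by Theorem \ref{thm:prob_bound}. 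Combining the two cases gives $\mu(\mathcal{S}_1) \leq e^\epsilon \mu(\mathcal{S}_2) + \delta$, which is the claimed $(\epsilon,\delta)$-DP. For the $d = 1$ statement I would bypass the quantile bound altogether: there $\lVert\boldsymbol{w}\rVert_1$ is a single $\textup{Exponential}(1)$ variable whose tail $e^{-c}$ is explicit, so setting the failure probability equal to $\delta$ and solving the resulting transcendental equation for the largest admissible $\beta$ produces the stated closed form in the secondary branch $W_{-1}$.

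I expect the main obstacle to be the constant-level verification that $(\epsilon + d\beta - \alpha e^\beta)/(e^\beta - 1) \geq \rho(\delta,d)$ holds with $\alpha = \epsilon/2$ and $\beta = \epsilon/(2\hat\rho(\delta,d))$. After substituting $\beta\,\hat\rho(\delta,d) = \epsilon/2$ this becomes an inequality comparing $(e^\beta - 1)/\beta$ with a quantity of order $1 + d/\hat\rho(\delta,d)$, modified by the $\alpha e^\beta$ correction, and it is precisely here that one must keep the sharp form of $e^\beta$ rather than the loose linearizations used by \cite{nissim2007smooth}; retaining that sharpness, together with the improved quantile estimate of Theorem \ref{thm:prob_bound}, is what yields the roughly factor-of-two improvement over the value of $\beta$ in Lemma \ref{lem:example_ss_laplace_mech}.
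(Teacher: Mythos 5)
Your overall architecture is sound and, once you replace your bound $\lVert\boldsymbol{v}\rVert_1\leq\alpha e^{-\lambda}$ by the available sharper bound $\lVert\boldsymbol{v}\rVert_1\leq\alpha$ (since $\lVert f(\ndat)-f(\ndatz)\rVert_1\leq LS_f(\ndatz)\leq S_f(\ndatz)$), your single-shot bad-set split at level $e^\epsilon$ is essentially the same computation that the paper performs in two cited steps: a dilation inequality (\ref{appCRes_1}) at budget $\epsilon/2$ composed with a translation inequality (\ref{appCRes_2}) at budget $\epsilon/2$, with the bad-set probability controlled through $\lVert\boldsymbol{Z}\rVert_1\sim\textup{Gamma}(d,1)$ and Theorem \ref{thm:prob_bound}. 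The genuine gap is the step you yourself flag as the main obstacle and then assert without proof: that $(\epsilon+d\beta-\alpha e^\beta)/(e^\beta-1)\geq\rho(\delta,d)$ follows from $\alpha=\epsilon/2$ and $\beta=\epsilon/(2\hat\rho(\delta,d))$. It does not. Take $d=2,$ $\epsilon=12,$ $\delta=10^{-6},$ and $\hat\rho=\rho(\delta,2)\approx 16.7$ (a parameter combination actually used in the paper's simulations): then $\beta\approx 0.36,$ $e^\beta\approx 1.43,$ and your threshold is $(12+0.72-6\cdot 1.43)/0.43\approx 9.5,$ far below $16.7;$ even with the sharper $\lVert\boldsymbol{v}\rVert_1\leq\alpha$ the threshold is $(\epsilon/2+d\beta)/(e^\beta-1)\approx 15.5<16.7,$ so the bad event has probability roughly $3\delta,$ not $\delta.$ Structurally, the required inequality $(\epsilon/2+d\beta)/(e^\beta-1)\geq\hat\rho$ is equivalent to $(e^\beta-1)/\beta\leq 1+d/\hat\rho,$ which to first order demands $d\gtrsim\epsilon/4$ and therefore cannot be deduced from the stated choice of $\beta$ alone; with your lossier $\alpha e^\beta$ term it fails even at $\epsilon=4,$ $d=2.$ To complete your route you would need either an additional hypothesis of this type, extra slack built into $\hat\rho,$ or a redefinition of $\beta$ as the solution of the exact threshold equation rather than $\epsilon/(2\hat\rho).$ (The paper's own proof does not surface this issue because it simply cites (\ref{appCRes_1}) from Nissim et al.\ rather than re-verifying it at the enlarged $\beta;$ your one-shot argument makes the needed verification explicit, which is to its credit, but the verification is exactly what is missing.)

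Two smaller points. First, your parenthetical claim $\hat\rho(\delta,d)\geq d,$ used in the $\lambda\geq 0$ branch, holds for the Chernoff-type bounds (\ref{bound2})--(\ref{bound3}) but is not automatic for the exact quantile (\ref{bound1}) when $\delta$ is large (for $d=1$ and $\delta>e^{-1}$ one has $\rho(\delta,1)=-\log\delta<1$), so it needs either a restriction on $\delta$ or a case analysis. Second, for the $d=1$ alternative value of $\beta,$ ``solving the tail equation'' in your one-shot normalization produces the boundary equation $(e^\beta-1)\log(1/\delta)=\epsilon/2+\beta,$ whereas the stated closed form arises from the paper's equation $\exp(\beta)(2\beta+\epsilon)=2(\exp(\beta)-1)\log(1/\delta),$ which differs by a factor $e^\beta;$ so the advertised $W_{-1}$ expression would not drop out of your setup without redoing that derivation in the paper's orientation.
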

\begin{proof}
    In this proof, for any set $B\subset \rr^d$ and $x\in \rr^1,$ let $x B  = \{ x c : c \in B\}$ and $B + x = \{x + c : c \in B\}.$ We will also use two results from the DP literature. First, \cite{nissim2007smooth} show that for any measurable set $B\subset\rr^d,$ and $\lambda$ such that $\lvert \lambda\rvert \leq \beta,$ we have 

    \begin{align} \label{appCRes_1}
        P(\exp(\lambda) \boldsymbol{Z} \in B) \leq \exp(\epsilon/2) P(\boldsymbol{Z} \in B) + \delta.
    \end{align}

    \noindent
    Second, \cite{dwork2006calibrating} show that for any measurable set $B\subset\rr^d,$ and $\Delta$ such that $\lvert \Delta \rvert \leq \alpha,$ 

    \begin{align} \label{appCRes_2}
        P(\boldsymbol{Z} + \Delta \in B) \leq \exp(\epsilon/2) P(\boldsymbol{Z} \in B).
    \end{align}

    Suppose $\ndatz,\ndat\in\ndatSet$ are neighbors. The result requires that $P(M(\ndat) \in B)\leq \exp(\epsilon) P(M(\ndatz) \in B) + \delta.$ Starting from the left hand side of this inequality, we have,

    \begin{align*}
        & P\left(M(\ndat) \in B \right) = P\left(f(\ndat) + S_f(\ndat)/\alpha \boldsymbol{Z} \in B\right) \\
        & = P\left(\boldsymbol{Z} \in \alpha (B - f(\ndat))/S_f(\ndat)\right) \\
        & = P\left(\boldsymbol{Z} S_f(\ndat)/S_f(\ndatz) \in \alpha (B - f(\ndat))/S_f(\ndatz)\right) \\
        & \leq \exp(\epsilon/2) P\left( \boldsymbol{Z} \in \alpha (B - f(\ndat))/S_f(\ndatz) \right) + \delta \\
        & \leq \exp(\epsilon) P\left(\boldsymbol{Z} - \alpha (f(\ndat) - f(\ndatz))/S_f(\ndatz) \in \alpha (B - f(\ndat))/S_f(\ndatz)\right) + \delta \\
        & = \exp(\epsilon) P\left(\boldsymbol{Z} \in \alpha (B - f(\ndatz))/S_f(\ndatz)\right) + \delta \\
        & = \exp(\epsilon) P\left(M(\ndatz) \in B\right) + \delta,
    \end{align*}

    \noindent
    where the first inequality follows from (\ref{appCRes_1}) because the definition of $S_f(\cdot)$ ensures $S_f(\ndatz)/S_f(\ndat) \in [\exp(-\beta),$ $\exp(\beta)],$ and the second inequality follows from (\ref{appCRes_2}) because $ \lvert \alpha (f(\ndat) - f(\ndatz))/S_f(\ndatz) \rvert \leq   \alpha LS_f(\ndatz)/S_f(\ndatz)$ and $S_f(\ndatz)$ is an upper bound on $LS_f(\ndatz).$

    The final possible definition for $\beta$ in the case in which $d=1$ requires showing (\ref{appCRes_1}) holds for all $\lambda$ such that $\lvert \lambda\rvert$ is at most equal to this value of $\beta.$ We will also only consider the nontrivial case in which $\lambda < 0;$ to see that (\ref{appCRes_1}) is satisfied when $\lambda\geq 0$ for the value of $\beta$ we derive here, see \cite{nissim2007smooth}. 

    First, note that $\log(\delta)$ is the $\delta/2 \leq 1/2$ quantile of the standard Laplace distribution. Let $h(z)=\exp(-\lvert z\rvert )/2,$ and let $G$ denote the event, $\{\lvert \boldsymbol{Z}\rvert \leq \log(\delta) \}.$ Since this density is symmetric, $P(\boldsymbol{Z}\in S)\leq  P(\boldsymbol{Z}\in S \cap G) + P(G^C) = P(\boldsymbol{Z}\in S \cap G) + \delta,$ so it is sufficient to show $\log(\exp(-\beta) h(\exp(-\beta) z) / h(z)) \leq \epsilon/2 $ for all $z$ such that $ z \in [\log(\delta), \log(1/\delta)].$ Since this function is maximized over $ z$ in this range at $z\in \{\log(\delta), \log(1/\delta)\},$ we will define $\beta$ using

    \begin{align*}
        & \log(\exp(-\beta) h(\exp(-\beta) \log(\delta)) / h(\log(\delta))) = \epsilon/2  \iff \\ 
        & \exp(\beta) (2 \beta+\epsilon)=2 \left(\exp(\beta)-1\right) \log \left(1/\delta\right) \iff \\
        & \beta = W_{-1}\left(\delta \exp(\epsilon/2) \log \left(\delta\right)\right)- \log \left(\delta\right)-\epsilon/2.
    \end{align*}
\end{proof}

As described previously, in the simulations presented below, we use the tightest bound implied by this result, which is $\beta = \epsilon/(2 q(1-\delta; d, 1)).$ The next two sections will provide the DP mechanisms that make use of the $\beta-$smooth upper bounds on local sensitivity.

\section{DP Location and Regression Estimators}

The next two subsections will introduce the proposed formally private location and regression estimators, respectively. Specifically, both sections will start by introducing an $(\epsilon,\delta)-$DP estimator that, as described above, requires the user to input an \textit{a priori} feasible set for the estimator, and the proposed $(\epsilon,\delta,\gamma)-$RDP mechanisms are based on estimating a suitable feasible set.

Our results on the $(\epsilon,\delta,\gamma)-$RDP estimators will also provide sufficient conditions for these mechanisms to satisfy \textit{\nr} \textit{(\nrabr)}. This property ensures that, even for cases in which a privacy attacker knows all but one observation, the mechanism does not reveal any unknown observations with certainty, as defined in more detail below.

\begin{definition}
    The mechanism $M:\ndatSet\rightarrow\Theta$ is said to satisfy \textit{\nr} \textit{(\nrabr)} if for all datasets $\ndat\in\ndatSet,$  sets $C\subset\Theta$ such that $P(M(\ndat)\in C)>0,$ privacy attacker information sets $\ndatz \subset \ndat,$ unknown records $\bxi \in \ndat/\ndatz,$ and for all $\bz\in\rr^d,$ we have $P( \bxi =\bz : M(\ndat)\in C) < 1.$
\end{definition}

Note that all pure $\epsilon-$DP mechanisms satisfy this property. In practice, most commonly used approximate $(\epsilon,\delta)-$DP mechanisms satisfy this property as well, but it is also possible to construct counterexamples that do not. For example, consider the $(\epsilon,\delta)-$DP mechanism that outputs approximate total populations of each Census block. Suppose the true counts are given by $\btheta\in\nn^d_+.$ Note that the global sensitivity of this query is $2,$ so one $(\epsilon,\delta)-$DP mechanism could be defined using $\bz\in\rr^d,$ where $\bz[i]\sim \textrm{Laplace}(0, 2/\epsilon),$ and $b>0$ defined so that $P(\lVert \bz \rVert_1 \leq b) = 1-\delta.$ The output of the mechanism can be defined as $\btheta + \bz$ when $\lVert \bz \rVert_1 \leq b$ and $\btheta$ directly otherwise. Since the Laplace distribution is absolutely continuous, whenever this mechanism's output is a vector of integers, a privacy attacker would know that this output is the vector of true counts with probability one.

Despite the fact that there exists examples of $(\epsilon,\delta)-$DP mechanisms that do not satisfy \nrabr, these mechanisms are often avoided in practice. However, as described above, the original example of an $(\epsilon,\delta,\gamma)-$RDP estimator provided by \cite{hall2013random}, which was a sparse histogram estimator, did not satisfy \nrabr. This is because the input dataset itself was used to estimate the sparsity pattern of the population distribution, and no noise was added to histogram counts that were very likely to be zero. In cases in which a privacy attacker knows the data of all but one respondent and the unknown respondent changes the sparsity pattern of the true histogram, the data of the unknown respondent is revealed with certainty. In contrast, ensuring the $(\epsilon,\delta,\gamma)-$RDP estimators proposed here satisfy this property is straightforward because we use the input dataset for a more limited purpose.

\subsection{DP Tukey Median Estimators} \label{sec:HalfSpaceDepthSens}

In this section, we describe both an $(\epsilon,\delta)-$DP and an $(\epsilon,\delta,\gamma)-$RDP Tukey median estimators. Afterward, to take advantage of tighter probability bounds in the univariate case, we also provide a separate $(\epsilon,\delta,\gamma)-$RDP median estimator. Note that we provide more detail in the proofs in this section than the next section, which provides the corresponding results for the $(\epsilon,\delta)-$DP and $(\epsilon,\delta,\gamma)-$RDP deepest regression estimators, since the results for both of these estimators follow from identical arguments.

Even though the results on the deepest regression mechanisms follow from the same arguments as the corresponding results for the Tukey median mechanisms, the assumptions are not identical. Specifically, in the case of our proposed $(\epsilon,\delta)-$DP and $(\epsilon,\delta,\gamma)-$RDP Tukey median estimators when $d\geq 2,$ we will require the additional assumption that the dataset consists of datapoints in general position. This can be ensured without assuming the input dataset consists of draws from an absolutely continuous population distribution using dithering, as is also described in Section \ref{sec:HalfSpaceDepthIntro}. Specifically, each $\bxi\in\ndat$ can be redefined as $\tilde{\bxi} = \bxi + \bzi,$ where $\bzi\in\rr^d$ is a mean zero random variable with a distribution that has a low dispersion and is absolutely continuous. Note that the result of using this technique is that we are providing privacy guarantees to the records given by $\{\tilde{\bxi}\}_i,$ for a given set of realizations of the noise random variable $\{\bzi\}_i,$  rather than directly to $\{\bxi\}_i.$ The benefit of this approach is that it allows us to constrain our view of the universe of input datasets to those that are in general position; however, in use cases that require using sequential composition to release the output of several formally private mechanisms, including a formally private Tukey median estimate, ensuring the privacy guarantee provided by each mechanism holds on the same input dataset would require using $\{\tilde{\bxi}\}_i$ as the input dataset in all of these mechanisms. For this reason, after the next Lemma we describe how this assumption can be removed entirely using an alternative approach. In some cases, particularly when the \textit{a priori} bounds on $\Theta$ are relatively tight and/or the sample size is sufficiently large, the negative impact of using this alternative on the accuracy of the final DP estimator is likely small or even entirely nonexistant.

Both of the mechanisms provided in this section will use the following $\beta-$smooth upper bound on the local sensitivity of the Tukey median estimator.

\begin{lemma} \label{lem:HalfSpaceDepthSens}
    If $\Theta \subset \rr^d$ is closed and bounded and either $d=1$ or $\ndat\in\ndatSet$ consists of datapoints in general position, then

    \begin{align} \label{halfSpaceDepthSens1}
        S_{\hbtheta}(\ndat) = \max_{k\in\{0,\dots, n\}}  \exp(-k \beta) \widetilde{A}_{\hbtheta}^{(k)}(\ndat),
    \end{align}

    \noindent
    where

    \begin{align}
        \widetilde{A}_{\hbtheta}^{(k)}(\ndat) = \left\{
          \begin{array}{lr} 
              \max_{\btheta_1, \btheta_2 \in \Omega(m - 2 k - 2, \ndat)} \lVert\btheta_1-\btheta_2 \rVert_1 & k + 1 \leq \overline{k} \\
              \max_{\btheta_1, \btheta_2 \in \Omega(m - k - \overline{k} - 1, \ndat)} \lVert\btheta_1-\btheta_2 \rVert_1 & k + 1 > \overline{k}
              \end{array}
        \right. ,
    \end{align}

    \noindent
    $m = \max_{\btheta \in \Theta} \hdepth(\btheta, \ndat),$  $\Omega(k, \ndat) = \{ \btheta\in \Theta : \hdepth(\btheta, \ndat) \geq k\},$ and $\overline{k}= m - \lceil n/(1+d) \rceil,$ is a $\beta-$smooth upper bound on the local sensitivity of the Tukey median.

    Also, $S_{\hbtheta}(\ndat)$ can be found by only considering datapoints in the supremum in the definition of $\widetilde{A}_{\hbtheta}^{(k)}(\ndat),$ in the sense that

    \begin{align} \label{halfSpaceDepthSens2}
        S_{\hbtheta}(\ndat) =  \max_{k\in\{0,\dots,  n  \}}  \exp(-k \beta) \widehat{A}_{\hbtheta}^{(k)}(\ndat), 
    \end{align}

    \noindent
    where 

    \begin{align}
        \widehat{A}_{\hbtheta}^{(k)}(\ndat) = \left\{
            \begin{array}{lr} 
                \max_{\btheta_1, \btheta_2 \in \Omega(m - 2 k -2, \ndat)  \cap \widetilde{\ndat}} \lVert\btheta_1-\btheta_2 \rVert_1 & k + 1 \leq \overline{k} \\
                \max_{\btheta_1, \btheta_2 \in \Omega(m - k - 1 - \overline{k} , \ndat) \cap \widetilde{\ndat}} \lVert\btheta_1-\btheta_2 \rVert_1 & k + 1 \in (\overline{k}, m - \overline{k}) \\
                \max_{\btheta_1, \btheta_2 \in \Theta} \lVert\btheta_1-\btheta_2 \rVert_1 & k + 1 \geq m - \overline{k}
            \end{array}
        \right.
    \end{align}

    \noindent
    and $\widetilde{\ndat}$ consists of datapoints in $\ndat$ and all points at which a depth contour intersect the boundary of $\Theta.$
\end{lemma}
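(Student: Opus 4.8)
The plan is to establish the two displayed formulas separately, using Lemma~\ref{lem:nissim_claim_3p2} for (\ref{halfSpaceDepthSens1}) and Lemma~\ref{lem:diamHalfspace} for (\ref{halfSpaceDepthSens2}). Throughout I would lean on three elementary monotonicity facts, writing a neighbor as one substitution and $\hamming(\ndat,\ndatz)=2j$ as $j$ substitutions. First, for each fixed $\btheta$ the map $\ndat\mapsto\hdepth(\btheta,\ndat)$ changes by at most $j$ under $j$ substitutions, since removing and adding a point each change $\card\{\bxi:\bu^\top(\bxi-\btheta)\geq 0\}$ by at most one in every direction $\bu$, and the minimum over $\bu$ therefore moves by at most $j$. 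Second, $m=\max_\btheta\hdepth(\btheta,\ndat)$ consequently changes by at most $j$, and $\Omega(\ell,\ndatz)\subseteq\Omega(\ell-j,\ndat)$. Third, when $d\geq 2$ and the data are in general position (or when $d=1$), the maximal depth of any dataset is at least $\lceil n/(1+d)\rceil$, so $m_{\ndatz}\geq\lceil n/(1+d)\rceil$.

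For (\ref{halfSpaceDepthSens1}) I would verify the two hypotheses of Lemma~\ref{lem:nissim_claim_3p2}. The crucial geometric observation is that the Tukey median always lands in a depth contour of its \emph{own} dataset: since $C(\cdot)$ returns a point of the convex hull of the maximal-depth data points and depth contours are convex, $\hbtheta(\ndatz)\in\Omega(m_{\ndatz},\ndatz)$. Combining this with the three facts, for any $\ndatz$ obtained from $\ndat$ in $j$ substitutions one gets $\hbtheta(\ndatz)\in\Omega(\max(m-2j,\ \lceil n/(1+d)\rceil-j),\ndat)$, because $m_{\ndatz}\geq\max(m-j,\lceil n/(1+d)\rceil)$ and translating the contour back to $\ndat$ costs a further $j$ in the depth index. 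Taking $j=1$ and containing both $\hbtheta(\ndat)$ and $\hbtheta(\ndatz)$ in this set bounds $LS_{\hbtheta}(\ndat)$ by $\diam_1(\Omega(\max(m-2,\lceil n/(1+d)\rceil-1),\ndat))$; substituting $\lceil n/(1+d)\rceil=m-\overline{k}$, the $\max$ selects the first branch when $\overline{k}\geq 1$ and the second when $\overline{k}=0$, so this equals $\widetilde{A}_{\hbtheta}^{(0)}(\ndat)$, giving hypothesis~(1). For hypothesis~(2) I would compare $\widetilde{A}_{\hbtheta}^{(k)}(\ndat)$ with $\widetilde{A}_{\hbtheta}^{(k+1)}(\ndatz)$ for neighbors via $\diam_1(\Omega(\ell,\ndat))\leq\diam_1(\Omega(\ell-1,\ndatz))$, which reduces the claim to showing that the threshold of the latter is at most one below the threshold of the former; in each branch combination this reduces to $m_{\ndatz}\leq m+1$, to the identity $m_{\ndatz}-\overline{k}(\ndatz)=\lceil n/(1+d)\rceil=m-\overline{k}(\ndat)$, or to the defining inequality $k+1\leq\overline{k}(\ndat)$ of the active branch, while the remaining combination is vacuous because $|\overline{k}(\ndat)-\overline{k}(\ndatz)|\leq 1$. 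Notably, this step is purely algebraic in $m$, $m_{\ndatz}$, and the floors, so it holds for all neighbors.

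For (\ref{halfSpaceDepthSens2}) the claim is that the $1$-norm diameter of each clipped contour $\Omega(\ell,\ndat)=\convhull(A_\ell)\cap\Theta$, with $A_\ell$ the data points of depth at least $\ell$, is attained on $\widetilde{\ndat}$. Since $\lVert\btheta_1-\btheta_2\rVert_1$ is convex in $(\btheta_1,\btheta_2)$, its maximum over the contour is attained at a pair of extreme points, and the extreme points are either vertices of $\convhull(A_\ell)$ — which are data points of depth at least $\ell$ and hence lie in $\widetilde{\ndat}$ by Lemma~\ref{lem:diamHalfspace} — or points created where the contour boundary meets $\partial\Theta$, which are precisely the boundary intersections included in $\widetilde{\ndat}$. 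This yields the first two branches of $\widehat{A}_{\hbtheta}^{(k)}$, matching the thresholds $m-2k-2$ and $m-k-\overline{k}-1$ of $\widetilde{A}_{\hbtheta}^{(k)}$. For the third branch I would observe that once $k+1\geq m-\overline{k}=\lceil n/(1+d)\rceil$ the relevant threshold $m-k-\overline{k}-1$ is nonpositive, so $\Omega(\cdot,\ndat)=\Theta$ and the diameter is simply $\max_{\btheta_1,\btheta_2\in\Theta}\lVert\btheta_1-\btheta_2\rVert_1$.

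I expect the main obstacle to be the bookkeeping around the general-position floor in hypothesis~(1): the bound $m_{\ndatz}\geq\lceil n/(1+d)\rceil$ is used for the neighbor $\ndatz$, yet a neighbor of a general-position dataset can fail general position, so in the edge case $\overline{k}=0$ one must justify this floor carefully (it is handled by the dithering convention, i.e.\ restricting attention to the general-position universe, and removed entirely by the alternative approach described after the lemma; in the $d=1$ case the floor $\lceil n/2\rceil$ holds unconditionally). A secondary subtlety is the clipping of the contour by a possibly non-convex $\Theta$ in (\ref{halfSpaceDepthSens2}), where one must confirm that all extreme points of $\convhull(A_\ell)\cap\Theta$ relevant to maximizing the $1$-norm distance are captured by $\widetilde{\ndat}$.
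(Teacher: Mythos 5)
Your proposal is correct and takes essentially the same route as the paper: bound where the Tukey medians of nearby datasets can lie using the one-record monotonicity of halfspace depth together with the general-position floor $\lceil n/(1+d)\rceil$ (relying, as the paper does, on restricting the universe to general-position datasets), then invoke Lemma~\ref{lem:nissim_claim_3p2} for smoothness and Lemma~\ref{lem:diamHalfspace} for the reduction to $\widetilde{\ndat}$. Your unified $\max(m-2j,\lceil n/(1+d)\rceil-j)$ derivation of the two branches and your explicit case-by-case check of hypothesis (2) of Lemma~\ref{lem:nissim_claim_3p2} are only presentational refinements of the paper's argument, which asserts that step without detail.
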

\begin{proof}
    First we will consider the impact on the halfspace depth of $\btheta\in\Theta$ of removing the observation $\bxi\in\rr^d$ from $\ndat\in\ndatSet.$ There are two possibilities that we will consider, which both follow directly from the definition of halfspace depth, \textit{i.e.}, Definition \ref{ldepthDef}. First, we may have $\hdepth(\btheta, \ndat/\{\bxi\})=\hdepth(\btheta, \ndat),$ which occurs if and only if there is at least one halfspace in $\rr^d$ that has $\btheta$ on its boundary, contains $\hdepth(\btheta, \ndat)$ datapoints, and that does not contain the datapoint $\bxi.$ Second, in all other cases, removing this observation results in halfspace depth decreasing by one, so we have, $\hdepth(\btheta, \ndat/\{\bxi\})=\hdepth(\btheta, \ndat) - 1$ in this case. Note that removing a record cannot increase the halfspace depth, since this operation either does not change or decreases the number of datapoints in each halfspace with a boundary that contains $\btheta.$ Similar logic can be used to show that adding the observation $\bz\in\rr^d$ to $\ndat$ will either leave the halfspace depth of $\btheta$ unchanged or will increase the halfspace depth by one. 
    
    Taking the two observations in the preceding paragraph together implies that the operation defined by substituting $\bz$ for $\bxi$ results in $\argmax_{\btheta\in\Theta} \hdepth(\btheta, \ndat/ \bxi \cup \bz ) \subset \Omega(m-2, \ndat).$ This same logic can be applied $k>1$ times to show that, for any $\ndatz\in\ndatSet$ such that $\hamming(\ndat, \ndatz)=2k,$ we have $\argmax_{\btheta\in\Theta} \hdepth(\btheta, \ndatz) \subset \Omega(m-2k, \ndat).$ Let $\ndatz\in\ndatSet$ be a neighbor of $\ndatzz,$ $\hbtheta(\ndatz)\in\Theta$ be defined as any element of $\argmax_{\btheta\in\Theta} \hdepth(\btheta, \ndatz),$ and $\hbtheta(\ndatzz)\in\Theta$ be defined as any element of $\argmax_{\btheta\in\Theta} \hdepth(\btheta, \ndatzz).$ Since any neighbor $\ndatzz$ of $\ndatz$ satisfies $\hamming(\ndat, \ndatzz) \leq 2k + 2,$ we have $\hbtheta(\ndat), \hbtheta(\ndatzz) \in \Omega(m- 2k - 2, \ndat).$ This implies that one upper bound on the local sensitivity at distance $k$ is given by $\max_{\btheta_1, \btheta_2 \in \Omega(m - 2 k -2, \ndat)} \lVert\btheta_1-\btheta_2 \rVert_1,$ which is the bound defining $\widetilde{A}_{\hbtheta}^{(k)}(\ndat)$ for cases in which $k + 1 \leq\overline{k}.$

    Next we will derive a tighter upper bound on the local sensitivity at distance $k$ for any dataset $\ndat\in\ndatSet$ that satisfies $m = \lceil n/(1+d) \rceil$ by again considering the operation defined by substituting $\bz$ for $\bxi$ in this case. Suppose there exists $\btheta\in\Theta$ such that $\hdepth(\btheta, \ndat)=m-2.$ Since, for any $\btheta\in\Theta,$ removing an observation cannot increase the halfspace depth of $\btheta$ and adding an observation cannot decrease  the halfspace depth of $\btheta,$ the event that $\btheta\in \argmax_{\btheta\in\Theta} \hdepth(\btheta, \ndat / \bxi \cup \bz)$ is only possible if both $\max_{\btheta\in\Theta} \hdepth(\btheta, \ndat / \bxi\cup \bz)=m-1=\lceil n/(1+d) \rceil-1,$ and $\hdepth(\btheta, \ndat / \bxi\cup \bz) = m-1=\lceil n/(1+d) \rceil-1.$ However, since $\ndat/ \bxi\cup \bz\in\ndatSet$ and $\ndat/ \bxi\cup \bz$ is in general position, we have $\max_{\btheta\in\Theta} \hdepth(\btheta, \ndat / \bxi\cup \bz)\geq \lceil n/(1+d) \rceil,$ so the first of these two conditions is not possible \citep{donoho1992breakdown}. This implies that, for any $\ndat\in\ndatSet$ such that $\max_{\btheta\in\Theta} \hdepth(\btheta, \ndat)= \lceil n/(1+d) \rceil,$ we have $\argmax_{\btheta\in\Theta} \hdepth(\btheta, \ndat/ \bxi \cup \bz )$ must be a subset of $\Omega(m-1, \ndat).$ Thus, in the context of arbitrary $\ndat\in\ndatSet,$ not necessarily satisfying  $m = \lceil n/(1+d) \rceil,$ we only need to use the less tight bound on the local sensitivity at distance $k$ described in the preceding paragraph when $k \leq m - \lceil n/(1+d) \rceil-1 \iff k + 1 \leq\overline{k},$ and for any $k> m - \lceil n/(1+d) \rceil-1,$ we instead use the tighter bound on the local sensitivity at distance $k$ given by $\max_{\btheta_1, \btheta_2 \in \Omega(m - k - \overline{k} - 1, \ndat)} \lVert\btheta_1-\btheta_2 \rVert_1,$ as described in the definition of $\widetilde{A}^{(k)}_{\hbtheta}(\ndat)$ above.

    Thus, $\widetilde{A}_{\hbtheta}^{(k)}(\ndat)$ satisfies both requirements of Lemma \ref{lem:nissim_claim_3p2}, which implies (\ref{halfSpaceDepthSens1}) is a $\beta-$smooth upper bound on the local sensitivity of $\hbtheta.$ Also, the equivalence of (\ref{halfSpaceDepthSens1}) and (\ref{halfSpaceDepthSens2}) is a consequence of Lemma \ref{lem:diamHalfspace}.
\end{proof}
\begin{remark} \label{rem:remove_gen_pos_1}
    As described above, the assumption that the input dataset is in general position can also be removed entirely. This can be done by noting that the first bound in the proof did not use the fact that the input dataset was in general position. Thus, we can instead define $\widetilde{A}_{\hbtheta}^{(k)}(\ndat)$ and $\widehat{A}_{\hbtheta}^{(k)}(\ndat)$ as
    
    \begin{align*}
        \widetilde{A}_{\hbtheta}^{(k)}(\ndat) = \max_{\btheta_1, \btheta_2 \in \Omega(m - 2 k - 2, \ndat)} \lVert\btheta_1-\btheta_2 \rVert_1 \\
        \widehat{A}_{\hbtheta}^{(k)}(\ndat) = \left\{
            \begin{array}{lr} 
                \max_{\btheta_1, \btheta_2 \in \Omega(m - 2 k -2, \ndat)  \cap \widetilde{\ndat}} \lVert\btheta_1-\btheta_2 \rVert_1 & 2 k < m - 2 \\
                \max_{\btheta_1, \btheta_2 \in \Theta} \lVert\btheta_1-\btheta_2 \rVert_1 & 2 k \geq m - 2
            \end{array}
        \right. ,
    \end{align*}

    \noindent
    which are less tight bounds, to avoid this assumption entirely. Note that this definition will not alter the resulting $\beta-$smooth upper bound on the local sensitivity in some cases, particularly for input datasets with maximum halfspace depth that is sufficiently higher than $\lceil n/(1+d)\rceil.$ For many data generating processes, this condition occurs as the sample size diverges. \cite{donoho1992breakdown} provide sufficient conditions for the maximum halfspace depth to converge to a higher value than this lower bound. We are not aware of a case in which the lower bound on the maximum halfspace depth of $\lceil n/(1+d)\rceil$ fails to hold for a dataset that is not in general position, but we are also not aware of a similar result that avoids this assumption.

\end{remark}
\begin{remark} \label{rem:half_space_depth_sense}
    Note that the formula for $S_{\hbtheta}(\ndat)$ given in Lemma \ref{lem:HalfSpaceDepthSens} is not generally the lowest possible upper bound. For example, in the case in which $d=1,$ \cite{nissim2007smooth} provide a tight bound. In our notation, and with bounds specified on the feasible set of candidates rather than the observations, this bound is given by

    \begin{align} \label{medianSmoothSense}
        S^\star_{\textup{med}}(\ndat) = \max_{k\in\{0,1,\dots p+1\}} \exp(-\beta k) \max_{t\in \{0,\dots, k+1\}} \min(b, x_{(p+t)}) - \max(a, x_{(p+t-k-1)}),
    \end{align}

    \noindent
    where the median is defined so that it has order statistic $p = \lfloor n / 2 \rfloor,$ $\Theta = [a, b]\subset\rr,$ and, for notational convenience, for any $k\leq 0$ we let $x_{(k)}=-\infty,$ and, for any $k\geq n+1,$ we let $x_{(k)}=\infty$  \citep{nissim2007smooth}. In contrast, in this case our bound can also be written as

    \begin{align} \label{medianSmoothSenseNotTight}
        S_{\textup{med}}( \ndat) = \max_{k\in\{0,\dots, n\}} \exp(-\beta k) \left(\min (b, x_{(p+k+1)}) - \max(a, x_{(p-k-1)})\right).
    \end{align}

    \noindent
    Note that below we also provide an example that makes the equivalence of (\ref{medianSmoothSenseNotTight}) and (\ref{halfSpaceDepthSens1}) more clear. Setting aside the difference that here we specify bounds on the estimator rather than on the observations themselves, \cite{nissim2007smooth} describe an equivalent upper bound in their Claim 3.4, and also show this relaxation inflates the resulting $\beta-$smooth sensitivity bound by at most a multiplicative factor of two. Note that \cite{nissim2007smooth} provide an algorithm that can be used to evaluate equation (\ref{medianSmoothSense}) in $O(n\log(n))$ time.
\end{remark}

\begin{example}
    In this example we will consider the input dataset $\ndat=\{0,0,0,0,3\}$ and $\Theta = [-1, 1]$ and show that the $\beta-$smooth upper bound on the local sensitivity provided in Lemma \ref{lem:HalfSpaceDepthSens} can be found using (\ref{medianSmoothSenseNotTight}) in this case. We will use notation from Remark \ref{rem:half_space_depth_sense} and Lemma \ref{lem:HalfSpaceDepthSens} in this example.

    First, note that the depth of $\btheta=0$ is $4$ because the interval $(-\infty, 0]$ contains four observations, and the interval $[0,\infty)$ contains five observations. This logic can be repeated at each $\btheta\in\Theta$ to show that $\btheta=0$ is the unique maximizer of the halfspace depth in $\Theta,$ so we have $\hdepth(0,\ndat)=4=m.$ Also, for any $\btheta\in[-1,0)$ we have $\hdepth(\btheta,\ndat)=0,$ and for any $\btheta\in (0,1]$ we have $\hdepth(\btheta,\ndat)=1.$ Thus, for each $k\in\{2,3,4\},$ we have $\Omega(k, \ndat) \cap \widetilde{\ndat} = \{0\},$ and we also have $\Omega(1, \ndat) \cap \widetilde{\ndat} = \{0, 1\}.$ Also, note that $\overline{k}= m - \lceil n/(1+d) \rceil=1.$ After substituting these terms into the definition of $\widehat{A}^{(k)}_{\hbtheta}(\ndat),$ we have,

    \begin{align*}
        \widehat{A}^{(0)}_{\hbtheta}(\ndat) &= \max_{\btheta_1, \btheta_2 \in \Omega(2, \ndat)  \cap \widetilde{\ndat}} \lVert\btheta_1-\btheta_2 \rVert_1 = \min (b, x_{(4)}) - \max(a, x_{(2)}) = 0 \\
        \widehat{A}^{(1)}_{\hbtheta}(\ndat) &= \max_{\btheta_1, \btheta_2 \in \Omega(1, \ndat) \cap \widetilde{\ndat}} \lVert\btheta_1-\btheta_2 \rVert_1 = \min (b, x_{(5)}) - \max(a, x_{(1)}) = 1 \\
        \widehat{A}^{(2)}_{\hbtheta}(\ndat) &= \max_{\btheta_1, \btheta_2 \in \Theta} \lVert\btheta_1-\btheta_2 \rVert_1 = \min (b, \infty) - \max(a, -\infty) = 2,
    \end{align*}
    \noindent
    where, in the final line, we used the substitution $x_{(6)}=\infty$ and $x_{(0)}=-\infty.$
\end{example}

Algorithm \ref{alg:DPHalfspaceDepth} provides a mechanism to approximate the Tukey median that satisfies $(\epsilon,\delta)-$DP, which is an immediate consequence of the previous lemma and the main result of \cite{nissim2007smooth}, as stated in the following corollary. Note that it is also possible to use the $\beta-$smooth upper bound on the local sensitivity of the Tukey median provided in Lemma \ref{lem:HalfSpaceDepthSens} to formulate pure $\epsilon-$DP mechanisms, at the cost of the distribution of the resulting estimator not having exponential tails, as is described by \cite{nissim2007smooth} in more detail.

\begin{corollary} \label{cor:DPTukeyMedianDP}
    If $\Theta \subset \rr^d$ is closed and bounded and $\ndat\in\ndatSet$ consists of datapoints in general position then Algorithm \ref{alg:DPHalfspaceDepth} satisfies $(\epsilon,\delta)-$DP.
\end{corollary}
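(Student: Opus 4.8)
The plan is to observe that Algorithm \ref{alg:DPHalfspaceDepth} is nothing more than the smooth-sensitivity Laplace mechanism of Theorem \ref{laplace_smooth_sense} applied to the Tukey median, so the corollary reduces to checking the two hypotheses that theorem requires: that the Laplace noise is scaled by a valid $\beta$-smooth upper bound on the local sensitivity of the estimator, and that the parameters $\alpha$ and $\beta$ are set as prescribed there.

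First I would identify the function being privatized as $f = \hbtheta : \ndatSet \to \Theta \subset \rr^d$, namely the Tukey median under its fixed choice rule $C(\cdot)$. Under the corollary's hypotheses that $\Theta$ is closed and bounded and $\ndat$ is in general position, Lemma \ref{lem:HalfSpaceDepthSens} applies directly and certifies that $S_{\hbtheta}(\ndat)$ is a $\beta$-smooth upper bound on $LS_{\hbtheta}$. One point worth flagging is the role of the choice rule: the bound in that lemma is expressed through the diameter of a halfspace-depth contour $\Omega(k,\ndat)$, and since each such contour equals the convex hull of the data points it contains and is therefore convex (Lemma \ref{lem:diamHalfspace}), any $C(\cdot)$ returning a point in the convex hull of the maximum-depth points necessarily returns a point inside the relevant contour. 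Hence the local-sensitivity bound is valid independently of which admissible choice rule is used.

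Next I would read off from Algorithm \ref{alg:DPHalfspaceDepth} that it outputs $\hbtheta(\ndat) + (S_{\hbtheta}(\ndat)/\alpha)\boldsymbol{Z}$ with $\boldsymbol{Z}[i] \sim \textup{Laplace}(0,1)$, $\alpha = \epsilon/2$, and $\beta = \epsilon/(2\hat{\rho}(\delta,d))$ in the case $d \geq 2$ (and the stated univariate value when $d=1$), computing $S_{\hbtheta}(\ndat)$ with this same $\beta$. With both ingredients in hand the conclusion is immediate from Theorem \ref{laplace_smooth_sense}. The only genuine obstacle is a consistency check rather than a new argument: the $\beta$ appearing inside the definition of $S_{\hbtheta}(\ndat)$ must coincide with the $\beta$ demanded by the privacy theorem, because the ratio bound $S_{\hbtheta}(\ndatz)/S_{\hbtheta}(\ndat) \in [\exp(-\beta), \exp(\beta)]$ that drives the proof of Theorem \ref{laplace_smooth_sense} relies on exactly this matching. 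Once the algorithm is seen to use the same $\beta$ in both places, nothing further is required.
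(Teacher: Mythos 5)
Your proposal is correct and matches the paper's own argument, which likewise derives the corollary by combining the $\beta$-smooth upper bound on the local sensitivity of the Tukey median (Lemma \ref{lem:HalfSpaceDepthSens}) with the smooth-sensitivity Laplace mechanism guarantee of Theorem \ref{laplace_smooth_sense}; your additional checks on the choice rule and on the $\beta$ used in the noise scale matching the $\beta$ in the sensitivity definition are sound elaborations of what the paper leaves implicit. (The paper's one-line proof actually cites Lemma \ref{lem:diamHalfspace}, but the surrounding text makes clear the intended ingredient is Lemma \ref{lem:HalfSpaceDepthSens}, exactly as you use it.)
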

\begin{proof}
    This follows from Lemma \ref{lem:diamHalfspace} and Theorem \ref{laplace_smooth_sense}.
\end{proof}

\begin{algorithm} \label{alg:DPHalfspaceDepth}
    \DontPrintSemicolon
    \SetKwInOut{Output}{return}
    \SetKwInOut{Input}{input}
    \tcp{Define $\alpha,\beta$ using Theorem \ref{laplace_smooth_sense}:}
    $\beta \gets \epsilon/(2 \hat{\rho}(\delta, d))$ \;
    $\alpha \gets \epsilon/2$ \;
    $\hbtheta \gets $ Calculate\_Tukey\_Median$(\ndat)$ \;
    \tcp{Define $S_{\hbtheta}(\ndat)$ using equation (\ref{halfSpaceDepthSens2}) from Lemma \ref{lem:HalfSpaceDepthSens} when $d\geq 2$ or equation (\ref{medianSmoothSense}) when $d=1,$ which follows from \citep{nissim2007smooth}:}
    $S_{\hbtheta}(\ndat) \gets \textup{Find\_}\beta\textup{\_Smooth\_Local\_Sensitivity\_Bound}(\beta, \ndat, \Theta) $\;
    \Output{$\hbtheta + $Laplace$(\boldsymbol{\mu} = \boldsymbol{0}, b=S_{\hbtheta}(\ndat)/\alpha)$}
    \Input{$(\ndat, \Theta, \epsilon, \delta)$}
    \caption{DPTukeyMedian}
\end{algorithm}

The next theorem also provides a method to define the set $\Theta$ that is the input into  DPTukeyMedian$(\cdot)$ in a data dependent manner while satisfying $(\epsilon, \delta, \gamma)-$RDP when $d\geq 2.$ Note that we treat the case in which $d=1$ separately in Theorem \ref{thm:median_rdp}.

\begin{theorem} \label{thm:tukeyMedian_rdp}
    Suppose $d\geq 2,$ the dataset $\ndat\in\ndatSet$ consists of independent observations from the population distribution $\Ndat,$ and $\ndat\in\ndatSet$ consists of datapoints in general position. Let

    \begin{align*}
        \kappa^\star = \frac{\sqrt{4-2 (n-2) \log \left( \gamma /\left(16 \left(\left(n^2-1\right)^d+1\right)\right) \right)}+2}{2 (n-2)}.
    \end{align*}

    If $\lceil 2 n \kappa^\star \rceil < \lceil  n/(1+d)\rceil,$ then $(\ndat, \delta, \epsilon) \mapsto$ DPTukeyMedian$(\ndat, \widetilde{\Theta}, \delta, \epsilon),$ where $\widetilde{\Theta} = \{ \btheta\in \Theta  : \hdepth(\btheta, \ndat) \geq \hdepth(\hbtheta, \ndat) - 2 n \kappa^\star\},$ satisfies $(\epsilon, \delta, \gamma)-$RDP.

    In addition, this mechanism satisfies \nrabr.
\end{theorem}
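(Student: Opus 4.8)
My plan is to reduce the RDP guarantee to a high-probability stability statement about the data-dependent feasible set and then invoke the fixed-$\Theta$ guarantee of Corollary \ref{cor:DPTukeyMedianDP}. Write $\widetilde{\Theta}(\ndat)=\{\btheta\in\Theta:\hdepth(\btheta,\ndat)\geq \hdepth(\hbtheta,\ndat)-2n\kappa^\star\}$, let $\ndatz$ be a neighbor of $\ndat$ with both drawn from $\Ndat$, and let $E$ be the event $\widetilde{\Theta}(\ndat)=\widetilde{\Theta}(\ndatz)$. On $E$ the map $M$ restricted to $\{\ndat,\ndatz\}$ is exactly DPTukeyMedian run with a common, data-independent feasible set, whose hypotheses (closed, bounded, general position) are inherited by the contour $\widetilde{\Theta}$ since $\widetilde{\Theta}$ is a bounded depth contour inside $\Theta$ and $\ndat$ is in general position. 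Hence Corollary \ref{cor:DPTukeyMedianDP} gives $P_M(M(\ndat)\in S)\leq\exp(\epsilon)P_M(M(\ndatz)\in S)+\delta$ for every measurable $S$, deterministically. Because the DP inequality is strictly stronger than the RDP inequality, it therefore suffices to prove $P_{\Ndat}(E)\geq 1-\gamma$, which collapses the theorem to a stability claim for the contour.

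The probabilistic core is a uniform concentration bound for empirical halfspace depth around its population counterpart. Since $\hdepth(\btheta,\ndat)$ is $n$ times an empirical measure of a halfspace, and halfspaces in $\rr^d$ have a polynomial shatter function, I would combine a Hoeffding/Chernoff tail for a single halfspace with a union bound over the $O(n^{2d})$ distinct contour-defining configurations determined by the $n$ sample points; this is precisely what produces the factor $16((n^2-1)^d+1)$, and solving the resulting inequality $2(n-2)\kappa^2-4\kappa+\log(\gamma/(16((n^2-1)^d+1)))=0$ for its positive root yields the stated $\kappa^\star$. The buffer $2n\kappa^\star$ is then calibrated to the count-level deviation so that, on the concentration event, resampling one observation from $\Ndat$ moves neither the empirical maximum depth $\hdepth(\hbtheta,\ndat)$ nor the level-$(\hdepth(\hbtheta,\ndat)-2n\kappa^\star)$ contour, giving $E$. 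I expect this matching step to be the main obstacle: reconciling the combinatorial complexity of the depth contours and the exact form of $\kappa^\star$ with the effect of a single substitution is the delicate part, and it is also where the chain condition of Lemma \ref{lem:nissim_claim_3p2} (via Lemma \ref{lem:HalfSpaceDepthSens}) must be seen to transfer to the data-dependent feasible set. The side condition $\lceil 2n\kappa^\star\rceil<\lceil n/(1+d)\rceil$ makes the construction well posed: because $\ndat$ is in general position, $\hdepth(\hbtheta,\ndat)\geq\lceil n/(1+d)\rceil$ by \cite{donoho1992breakdown}, so the threshold is strictly positive, $\widetilde{\Theta}$ is a nonempty contour containing the Tukey median, and the buffer stays in the regime where this depth lower bound applies.

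For the \nrabr{} claim I would argue that the output law has full support on $\rr^d$ for every admissible value of the unknown record. Since the condition above forces $\widetilde{\Theta}$ to have positive $L^1$ diameter, the large-$k$ term $\exp(-k\beta)\max_{\btheta_1,\btheta_2\in\widetilde{\Theta}}\lVert\btheta_1-\btheta_2\rVert_1$ in the smooth sensitivity forces $S_{\hbtheta}(\ndat)>0$, so the Laplace perturbation with scale $S_{\hbtheta}(\ndat)/\alpha$ has strictly positive density everywhere, and this remains true for every completion of the unknown record by a point $\bz'\in\rr^d$ (each such completion yields a finite Tukey median, a positive noise scale, and hence positive output density). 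Consequently, for any attacker information set $\ndatz\subset\ndat$, any region $C$ with $P(M(\ndat)\in C)>0$, and any $\bz\in\rr^d$, more than one value of $\bxi$ is consistent with $M(\ndat)\in C$ with positive probability, so the posterior cannot place unit mass on $\bxi=\bz$; thus $P(\bxi=\bz:M(\ndat)\in C)<1$ and the mechanism satisfies \nrabr{}.
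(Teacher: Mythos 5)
Your reduction hinges on the event $E=\{\widetilde{\Theta}(\ndat)=\widetilde{\Theta}(\ndatz)\}$ having probability at least $1-\gamma$, and you propose to get this from uniform concentration of empirical halfspace depth. That step is the genuine gap: a uniform deviation bound of the type in Lemma \ref{LcontourBound} only controls depth \emph{values} to within roughly $n\kappa^\star$ counts; it says nothing about the two thresholded sets being \emph{identical}. The contour $\widetilde{\Theta}(\ndat)$ is a polytope whose boundary is determined by the sample points whose depth sits near the (data-dependent) threshold $\hdepth(\hbtheta,\ndat)-2n\kappa^\star$, and swapping a single observation generically changes both the maximum depth (by up to $2$) and the contour's boundary; for continuously distributed data $P(E)$ is essentially zero, not close to one, no matter how large $n$ is. So your calibration sentence --- that on the concentration event one substitution moves neither the maximum depth nor the level-$(\hdepth(\hbtheta,\ndat)-2n\kappa^\star)$ contour --- is exactly the step that fails, and the theorem cannot be recovered along this route. (Your use of Corollary \ref{cor:DPTukeyMedianDP} conditional on $E$ would be fine if $P(E)\geq 1-\gamma$ held; the failure is purely in that probability claim.)

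The paper never requires set equality. It introduces the population depth $\hdepth(\btheta,\Ndat)$, applies the VC-based bound of Lemma \ref{LcontourBound} separately to $\ndat$ and to $\ndatz$, each with budget $\gamma/2$ (this is why $16\left(\left(n^2-1\right)^d+1\right)$ rather than $8\left(\left(n^2-1\right)^d+1\right)$ appears inside the logarithm defining $\kappa^\star$; your quadratic for $\kappa^\star$ is the right one, but its role is different), and uses the triangle inequality through $\hdepth(\hbtheta(\Ndat),\Ndat)$ to conclude that with probability at least $1-\gamma$ one has $\hdepth(\hbtheta(\ndatz),\ndat)/n\geq \hdepth(\hbtheta(\ndat),\ndat)/n-2\kappa^\star$, i.e.\ $\hbtheta(\ndatz)\in\widetilde{\Theta}(\ndat)$. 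The buffer $2n\kappa^\star$ is thus the sum of two $\kappa^\star$-deviations from the population value, not a single-substitution stability margin, and the condition $\lceil 2n\kappa^\star\rceil<\lceil n/(1+d)\rceil$ together with the general-position depth lower bound of \cite{donoho1992breakdown} is what makes $\widetilde{\Theta}$ bounded so that the fixed-$\Theta$ $(\epsilon,\delta)$-DP guarantee can be invoked on this high-probability event. Your \nrabr{} argument (positive diameter of $\widetilde{\Theta}$ forces a strictly positive Laplace scale, hence an everywhere-positive output density for every completion of the unknown record) is essentially the paper's and is fine, but the main RDP step needs to be redone along the containment argument above rather than via exact equality of the data-dependent feasible sets.
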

\begin{proof}
    Since this proof considers the deepest regression for different datasets, we will denote the deepest regression for $\ndat\in\ndatSet$ by $\hbtheta(\ndat).$ We will also define the halfspace depth of $\btheta\in\Theta$ for the population distribution as $\hdepth(\btheta, \Ndat) = \sup_{\bu\neq \boldsymbol{0}} L(\btheta, \bu),$ where $L(\btheta, \bu) = \int \mathbf{1}_{[0,\infty)} \left( \bu^\top (\bx - \btheta) \right) \textup{d} \Ndat (\bx).$ Lemma \ref{LcontourBound} in Appendix \ref{appendixA} and the definition of $\kappa^\star$ imply

    \begin{align*}
        & P\left(\lvert \hdepth(\hbtheta(\Ndat), \Ndat) - \hdepth(\hbtheta( \ndat), \ndat)/n \rvert \geq \kappa^\star \right) \leq 8 \left((n^2-1)^d + 1\right) \exp\left(2 \kappa^\star (2+(2-n) \kappa^\star)\right) \\
        & \iff  P\left( \lvert \hdepth(\hbtheta(\Ndat), \Ndat) - \hdepth(\hbtheta( \ndat), \ndat)/n \rvert \geq \kappa^\star\right) \leq \gamma/2.
    \end{align*}

    \noindent
    Since a similar bound can be constructed for any alternative dataset $\ndatz \in\ndatSet,$ also defined by independent observations from $\Ndat,$ we have

    \begin{align*}
        & P\left(\lvert \hdepth(\hbtheta(\ndatz), \ndat)/n - \hdepth(\hbtheta(\ndat), \ndat)/n \rvert \geq 2 \kappa^\star \right) \\
        &  \leq P\left( \lvert  \hdepth^\star(\hbtheta(\Ndat), \Ndat) - \hdepth(\hbtheta(\ndatz), \ndatz)/n \rvert \geq \kappa^\star \right)  + P\left(\lvert \hdepth^\star(\hbtheta(\Ndat), \Ndat) - \hdepth(\hbtheta(\ndat), \ndat)/n \rvert  \geq \kappa^\star \right)  \\
        & \leq  \gamma/2 + \gamma/2 = \gamma.
    \end{align*}

    \noindent
    Thus, for $\widetilde{\Theta},$ defined using $\ndat,$ and any such dataset $\ndatz,$ we have $P(\hbtheta(\ndatz) \in \widetilde{\Theta})\geq 1-\gamma.$ Since the mechanism DPTukeyMedian$(\cdot),$ is $(\epsilon, \delta)-$DP when $\Theta$ is deterministic, this implies $(\ndat, \delta, \epsilon) \mapsto$ DPTukeyMedian$(\ndat, \widetilde{\Theta}, \delta, \epsilon),$ is $(\epsilon, \delta, \gamma)-$RDP.

    Note that $\lceil 2 n \kappa^\star \rceil < \lceil n/(1+d) \rceil$ is a sufficient condition to ensure $\widetilde{\Theta}$ is bounded with probability one because, for any sample of observations in general position, we have $\hdepth(\hbtheta, \ndat) \geq \lceil n/(1+d)\rceil$ \citep{donoho1992breakdown}.
    
    Note that the additional use of the data in this mechanism, \textit{i.e.}, defining $\widetilde{\Theta},$ only impacts the output distribution through the scale of the Laplace noise, and the mechanism satisfies \nrabr whenever $\widetilde{\Theta}$ is not equal to a singleton set. Thus, our assumption that the data are in general position is sufficient to ensure the mechanism satisfies \nrabr.
\end{proof}
\begin{remark}
    Recall that Remark \ref{rem:remove_gen_pos_1} described how the assumption that the datapoints are in general position can be removed. This assumption is used in the result above in two places. First, this assumption was used to derive the sample size bound that is required for the mechanism. In the absence of a generalization of the lower bound $\hdepth(\hbtheta, \ndat) \geq \lceil n/(1+d)\rceil$ provided by \cite{donoho1992breakdown} that avoids the assumption that the datapoints are in general position, one could resort to stating the privacy guarantee as being conditional on the assumption that $\hdepth(\hbtheta, \ndat) \geq \lceil n/(1+d)\rceil$ holds for datasets that are not in general position. Second, we also use this assumption to show the mechanism satisfies \nrabr. This use of the assumption can be avoided more easily than the first; for example, this can be done by simply redefining $\widetilde{\Theta}$ as an enlargement, as described in more detail in Theorem \ref{thm:deepestReg_rdp}.
\end{remark}

The final result of this section improves on Theorem \ref{thm:tukeyMedian_rdp} for the case in which $d=1.$ Note that the assumption that the datapoints are in general position is not required for this Theorem.

\begin{theorem} \label{thm:median_rdp}
    Suppose $d=1,$ the dataset $\ndat\in\ndatSet$ consists of independent observations from the population distribution $\Ndat,$ and 

    \begin{align*}
        n \geq 2 \left(-\log (\gamma)+\sqrt{(\log (\gamma)-4) \log (\gamma)+3}+2\right).
    \end{align*}

    \noindent
    Also, let $c>0$ and 

    \begin{align*}
        \kappa^\star = \sqrt{\frac{1-\log(\gamma)}{n}}.
    \end{align*}

    Then, the mechanism defined as $(\ndat, \delta, \epsilon) \mapsto$ DPTukeyMedian$(\ndat, \widetilde{\Theta}, \delta, \epsilon),$ where $\widetilde{\Theta} = [x_{(\lfloor n/2 - n \kappa^\star \rfloor )} - c, x_{(\lceil n/2 + n \kappa^\star \rceil)} + c],$ satisfies $(\epsilon, \delta, \gamma)-$RDP for any $c\geq 0.$
    
    Also, the mechanism satisfies \nrabr  for any $c > 0.$
\end{theorem}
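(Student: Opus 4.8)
The plan is to follow the template of the proof of Theorem \ref{thm:tukeyMedian_rdp}: reduce the $(\epsilon,\delta,\gamma)$-RDP claim to a single high-probability containment statement, and then invoke the fact that DPTukeyMedian$(\cdot,\Theta)$ is $(\epsilon,\delta)$-DP whenever $\Theta$ is deterministic. Concretely, writing $\hbtheta(\ndat)$ for the sample median of $\ndat$ (which, for $d=1$, is the Tukey median and requires no general-position assumption), it suffices to show that for two datasets $\ndat,\ndatz\in\ndatSet$ of independent draws from $\Ndat$ we have $P_\Ndat(\med(\ndatz)\in\widetilde{\Theta})\geq 1-\gamma$, where $\widetilde{\Theta}=[x_{(\lfloor n/2-n\kappa^\star\rfloor)}-c,\,x_{(\lceil n/2+n\kappa^\star\rceil)}+c]$ is built from $\ndat$. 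Once this containment holds, the argument closing the proof of Theorem \ref{thm:tukeyMedian_rdp} applies verbatim, since the only data-dependence introduced by $\widetilde{\Theta}$ enters through the scale of the Laplace noise via the $\beta$-smooth bound of Lemma \ref{lem:HalfSpaceDepthSens}.

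The core step is the containment bound, and the point of the univariate treatment is that it can be made sharp enough to use the half-width $n\kappa^\star$ rather than the $2n\kappa^\star$ forced by the union bound in the multivariate case. First I would reduce to order statistics: the lower escape event $\med(\ndatz)<x_{(k)}$, with $k=\lfloor n/2-n\kappa^\star\rfloor$, is exactly the event that at least $m=\lfloor n/2\rfloor$ of the observations of $\ndatz$ fall below $x_{(k)}$, so each of the two escape probabilities is a binomial tail (equivalently, via the probability-integral transform and monotonicity of $F^{-1}$, a statement about two independent samples of uniform order statistics, which reduces to a distribution-free hypergeometric tail). I would bound each of the two escape events by $\gamma/2$ using an exact Chernoff bound of the same flavor as the manipulation in Theorem \ref{thm:prob_bound}; optimizing the free threshold and solving the resulting inequality for the admissible deviation is what produces $\kappa^\star=\sqrt{(1-\log\gamma)/n}$. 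The sample-size hypothesis enters in two ways that I would verify explicitly: it guarantees the quadratic arising from the Chernoff bound has the required root (note $(\log\gamma-4)\log\gamma+3=(\log(1/\gamma)+1)(\log(1/\gamma)+3)$, so the hypothesis is exactly $n\geq 2(a+\sqrt{a^2-1})$ with $a=\log(1/\gamma)+2$), and it forces $1\leq\lfloor n/2-n\kappa^\star\rfloor$ and $\lceil n/2+n\kappa^\star\rceil\leq n$, so that $\widetilde{\Theta}$ is a well-defined bounded interval.

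For the $\nrabr$ claim with $c>0$, I would argue as in the corresponding part of Theorem \ref{thm:tukeyMedian_rdp}: the additional use of the data affects the output only through the noise scale $S_{\hbtheta}(\ndat)/\alpha$, so it is enough that $\widetilde{\Theta}$ is not a singleton. Taking $c>0$ makes $\widetilde{\Theta}$ an interval of length at least $2c$, and the large-$k$ branch of the $\beta$-smooth bound in Lemma \ref{lem:HalfSpaceDepthSens} contains the term $\max_{\btheta_1,\btheta_2\in\widetilde{\Theta}}\lVert\btheta_1-\btheta_2\rVert_1\geq 2c$, so $S_{\hbtheta}(\ndat)>0$. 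The output $\hbtheta+$Laplace then has strictly positive scale, hence an everywhere-positive density, so conditioning on $M(\ndat)\in C$ cannot pin any unknown record to a fixed value with probability one, giving $\nrabr$. When $c=0$ the bracketing order statistics can coincide, collapsing $\widetilde{\Theta}$ to a point and the noise scale to zero, which is why $c>0$ is required there but $c\geq 0$ suffices for RDP.

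The main obstacle I anticipate is the concentration constant rather than the overall structure: a naive two-sided Hoeffding split loses a factor in the exponent (each side concentrates only at half-width $n\kappa^\star/2$), so obtaining a bound that is genuinely $\leq\gamma$ for the stated $\kappa^\star$ requires the exact, non-asymptotic Chernoff/beta computation together with careful bookkeeping of the floor and ceiling operations, which is precisely where the sample-size lower bound does its work. A secondary technical point is handling population distributions with atoms or a non-unique median; I would address this by working with $F^{-1}$ and the distribution-free ordering reduction, so that ties only help.
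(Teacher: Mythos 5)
Your overall architecture matches the paper's proof: you reduce the RDP claim to the single containment statement $P_{\Ndat}\bigl(\hbtheta(\ndatz)\in\widetilde{\Theta}\bigr)\geq 1-\gamma$ for an independent dataset $\ndatz,$ then invoke the fact that DPTukeyMedian$(\cdot)$ is $(\epsilon,\delta)-$DP for deterministic $\Theta$ (as in Theorem \ref{thm:tukeyMedian_rdp}); your reading of the sample-size hypothesis as equivalent to $n/2-n\kappa^\star\geq 1$ (via the factorization $(\log\gamma-4)\log\gamma+3=(1-\log\gamma)(3-\log\gamma)$) is correct and is exactly how the paper uses it; and your \nrabr{} argument ($c>0$ makes $\widetilde{\Theta}$ nondegenerate, hence the Laplace scale is strictly positive) is the paper's argument.

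The gap is the core concentration step, which you describe but do not prove. The paper's proof is a direct application of the two-sample Kolmogorov--Smirnov / Dvoretzky--Kiefer--Wolfowitz-type inequality $P\bigl(\sup_x\lvert T_n(x,\ndat)-T_n(x,\ndatz)\rvert\geq\kappa\bigr)\leq C\exp(-n\kappa^2)$ with the finite-sample constant $C=e$ established by \cite{wei2012two}; the value $\kappa^\star=\sqrt{(1-\log\gamma)/n}$ is nothing but the calibration $e\exp(-n(\kappa^\star)^2)=\gamma.$ You replace this cited tool with an unexecuted hypergeometric/Chernoff computation and assert that optimizing it ``produces'' $\kappa^\star;$ your own last paragraph flags this as the main obstacle, and it is genuinely unresolved. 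Quantitatively: you need each one-sided escape probability to be at most $\gamma/2,$ i.e.\ a per-side bound with exponent essentially $n\kappa^2$ and no prefactor loss. The naive split through the population CDF, and also the straightforward bounding of the exact Gnedenko--Korolyuk ratio $\binom{2n}{n-k}\big/\binom{2n}{n},$ only give $\exp(-n\kappa^2/2),$ which at $\kappa=\kappa^\star$ equals $e^{-1/2}\sqrt{\gamma}$ and exceeds $\gamma/2$ for every $\gamma\leq 1;$ moreover the deficit worsens as $\gamma\to 0$ uniformly in $n,$ so the stated sample-size hypothesis cannot repair it. This also means your attribution of that hypothesis to ``the quadratic arising from the Chernoff bound'' misplaces its role: in the paper it enters only to guarantee $\lfloor n/2-n\kappa^\star\rfloor\geq 1,$ so that $\widetilde{\Theta}$ is a well-defined bounded interval, and plays no part in the probability bound. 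To close the argument you should either invoke the two-sample inequality with constant $e$ as the paper does, or actually carry out a finite-sample computation sharp enough to give a per-side bound of at most $\gamma/2$ at the stated $\kappa^\star$ for all $n$ permitted by the hypothesis --- which is essentially re-deriving a result of Wei--Dudley strength, not a routine variant of the manipulation in Theorem \ref{thm:prob_bound}.
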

\begin{proof}
    Let the empirical distribution function for the dataset $\ndat\in\ndatSet$ be denoted by $T_n(x, \ndat) = \frac{1}{n} \sum_{x_i\in\ndat} \mathbf{1}_{(-\infty, x]}(x_i)$ and the median of the dataset $\ndat$ by $\hbtheta(\ndat).$ This result follows from inverting the two sample Kolmogorov–Smirnov test. Specifically, for $\ndat,\ndatz\in\ndatSet$ composed of independent draws from the same population distribution, the two sample Kolmogorov–Smirnov test implies the two sample Dvoretzky-Kiefer-Wolfowitz type inequality \citep{dvoretzky1956asymptotic}
    
    \begin{align*}
        P(\sup_{x} \lvert T_n(x,\ndat) -  T_n(x,\ndatz) \rvert \geq \kappa ) \leq C \exp(-n \kappa^2),
    \end{align*}

    \noindent
    where $C>0$ is a constant. \cite{wei2012two} show that using $C=e,$ \textit{i.e.}, setting $C$ to Euler's number, is sufficient for all sample sizes.

    The definition of $\kappa^\star$ implies

    \begin{align*}
        P(\sup_{x} \lvert T_n(x,\ndat) -  T_n(x,\ndatz) \rvert \geq \kappa^\star ) \leq \gamma
    \end{align*}

    \noindent
    This implies,

    \begin{align*}
        P(T_n(\hbtheta(\ndat),\ndatz) \in [1/2 - \kappa^\star, 1/2 + \kappa^\star]) \geq 1-\gamma  \iff P(\hbtheta(\ndatz)  \in [x_{(\lfloor n/2 - n \kappa^\star \rfloor )}, x_{(\lceil n/2 + n \kappa^\star \rceil)}]) \geq 1-\gamma 
    \end{align*}

    Note that the lower bound on the sample size holds if and only if $ n/2 - n \kappa^\star \geq 1,$ so this condition ensures that $\widetilde{\Theta}$ is bounded.
    
    The final result follows from the same logic that was used in Theorem \ref{thm:tukeyMedian_rdp}, as $c>0$ is a sufficient condition to ensure $\widetilde{\Theta}$ is not a singleton set.
\end{proof}

\subsection{DP Regression Estimators} \label{sec:DPRegDepthSens}

In this section, we will provide deepest regression mechanisms that satisfy formal privacy guarantees. We will start by providing the regression depth counterpart to Lemma \ref{lem:HalfSpaceDepthSens}. 

\begin{lemma} \label{lem:regDepthSens}
    If $\Theta \subset \rr^d$ is closed and bounded, then

    \begin{align} \label{regDepthSens1}
        S_{\hbtheta}(\ndat) = \max_{k\in\{0,1,\dots\}}  \exp(-k \beta) \widetilde{A}_{\hbtheta}^{(k)}(\ndat),
    \end{align}

    \noindent
    where

    \begin{align}
        \widetilde{A}_{\hbtheta}^{(k)}(\ndat) = \left\{
          \begin{array}{lr} 
              \max_{\btheta_1, \btheta_2 \in \Omega(m - 2 k - 2, \ndat)} \lVert\btheta_1-\btheta_2 \rVert_1 & k + 1 \leq \overline{k} \\
              \max_{\btheta_1, \btheta_2 \in \Omega(m - k - 1 - \overline{k}, \ndat)} \lVert\btheta_1-\btheta_2 \rVert_1 & k + 1 > \overline{k}
              \end{array}
        \right. ,
    \end{align}

    \noindent
    $m = \max_{\btheta \in \Theta} \rdepth(\btheta, \ndat),$  $\Omega(k, \ndat) = \{ \btheta\in \Theta : \rdepth(\btheta, \ndat) \geq k\},$ and $\overline{k}= m - \lceil n/(1+d) \rceil,$ is a $\beta-$smooth upper bound on the local sensitivity of the deepest regression.

    Also, $S_{\hbtheta}(\ndat)$ can be found by only considering fits that pass through each combination of $d$ datapoints in the maximum in the definition of $\widetilde{A}_{\hbtheta}^{(k)}(\ndat),$ in the sense that
    \begin{align} \label{regDepthSens2}
        S_{\hbtheta}(\ndat) =  \max_{k\in\{0, 1, \dots\}}  \exp(-k \beta) \widehat{A}_{\hbtheta}^{(k)}(\ndat), 
    \end{align}

    \noindent
    where

    \begin{align}
        \widehat{A}_{\hbtheta}^{(k)}(\ndat) = \left\{
            \begin{array}{lr} 
                \max_{\btheta_1, \btheta_2 \in \Omega(m - 2 k - 2, \ndat)  \cap \mathcal{A}(\ndat)} \lVert\btheta_1-\btheta_2 \rVert_1 & k + 1 \leq \overline{k} \\
                \max_{\btheta_1, \btheta_2 \in \Omega(m - k - 1 - \overline{k} , \ndat) \cap \mathcal{A}(\ndat)} \lVert\btheta_1-\btheta_2 \rVert_1 & k+1 \in (\overline{k}, m - \overline{k}) \\
                \max_{\btheta_1, \btheta_2 \in \Theta} \lVert\btheta_1-\btheta_2 \rVert_1 & k + 1 \geq m - \overline{k}
            \end{array}
        \right.
    \end{align}

    \noindent
    and $\mathcal{A}(\ndat)\subset \Theta$ is the set of $\btheta \in \Theta$ such that either $\card \{(\bxi,y_i)\in\ndat : (1, \bxi^\top) \btheta = y_i\} \geq d$ or $\btheta\in\Theta$ is in an intersection of the boundary of $\Theta$ and a boundary of a depth contour.
\end{lemma}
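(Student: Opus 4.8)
The plan is to mirror the proof of Lemma \ref{lem:HalfSpaceDepthSens} essentially verbatim, since the regression depth analogue is structurally identical once we use the dual-space picture. The key observation driving everything is how regression depth responds to a single record substitution. First I would argue that adding a single observation $(\bz, w)$ to $\ndat$ can increase the regression depth of any fixed $\btheta\in\Theta$ by at most one, and removing a single observation can decrease it by at most one. This follows directly from Definition \ref{rdepthDef}: for any fixed direction $\bu\neq\boldsymbol{0}$, a single record contributes at most one to the count $\card\{(\bxi,y_i)\in\ndat : -\bu^\top\bxi\,\sign(y_i-(1,\bxi^\top)\btheta)\geq 0\}$, so the per-direction counts, and hence their minimum over $\bu$, move by at most one under a single add or remove. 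Consequently a substitution (one add plus one remove) changes the depth by at most two, giving $\argmax_{\btheta\in\Theta}\rdepth(\btheta,\ndat/\{(\bxi,y_i)\}\cup\{(\bz,w)\})\subset\Omega(m-2,\ndat)$.

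Iterating this monotone reasoning $k$ times yields that for any $\ndatz$ with $\hamming(\ndat,\ndatz)=2k$ the deepest regression of $\ndatz$ lies in $\Omega(m-2k,\ndat)$, and stepping once more to a neighbor $\ndatzz$ of $\ndatz$ places both relevant maximizers in $\Omega(m-2k-2,\ndat)$. Taking the $L^1$ diameter of this contour gives the first branch $\max_{\btheta_1,\btheta_2\in\Omega(m-2k-2,\ndat)}\lVert\btheta_1-\btheta_2\rVert_1$ of $\widetilde{A}_{\hbtheta}^{(k)}(\ndat)$, valid when $k+1\leq\overline{k}$. For the tighter second branch I would invoke the lower bound $\rdepth(\hbtheta,\ndat)\geq\lceil n/(1+d)\rceil$ from \cite{mizera2002depth}, exactly as in the halfspace case: once the maximum depth of a perturbed dataset cannot dip below $\lceil n/(1+d)\rceil$, the substitution can only drop the depth by one rather than two, which improves the contour index to $m-k-1-\overline{k}$ for $k+1>\overline{k}$. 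With both requirements of Lemma \ref{lem:nissim_claim_3p2} verified---namely $LS_{\hbtheta}(\ndat)\leq\widetilde{A}_{\hbtheta}^{(0)}(\ndat)$ and the cross-neighbor monotonicity $\widetilde{A}_{\hbtheta}^{(k)}(\ndat)\leq\widetilde{A}_{\hbtheta}^{(k+1)}(\ndatz)$---equation (\ref{regDepthSens1}) follows as a $\beta$-smooth upper bound.

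For the second claim, that the maximum can be restricted to $\mathcal{A}(\ndat)$, I would appeal to Lemma \ref{lem:diamReg}, which states precisely that the $L^1$ diameter of a regression depth contour is attained at fits passing through $d$ data points (the set $A$ in that lemma). This reduces each contour diameter in the first two branches to a maximum over $\Omega(\cdot,\ndat)\cap\mathcal{A}(\ndat)$ without loss. The third branch, active when $k+1\geq m-\overline{k}=\lceil n/(1+d)\rceil$, simply uses the full feasible diameter $\max_{\btheta_1,\btheta_2\in\Theta}\lVert\btheta_1-\btheta_2\rVert_1$; this is justified because once the contour index $m-k-1-\overline{k}$ drops to zero or below, $\Omega(\cdot,\ndat)=\Theta$, so the diameter saturates to that of the whole feasible set.

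The main obstacle I anticipate is the initial add/remove-by-one step: unlike halfspace depth, where the geometric picture of a single point entering or leaving a halfspace is immediate, the regression depth count ranges over the sign pattern $\sign(y_i-(1,\bxi^\top)\btheta)$ coupled with the projection $-\bu^\top\bxi$, so I must be careful that a single record changes each directional count by at most one \emph{uniformly} in $\bu$, and that passing to the minimum over $\bu$ preserves the at-most-one bound even when the minimizing direction changes. The dual-space interpretation developed after Definition \ref{rdepthDef}---where depth counts observation lines crossed by a ray---makes this transparent, since adding or deleting one observation line alters the crossing count of any ray by at most one; I would lean on that picture to make the monotonicity rigorous rather than manipulating the sign expressions directly.
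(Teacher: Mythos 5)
Your proposal is correct and takes essentially the same route as the paper: the paper's proof simply states that the first claim follows from the same logic as Lemma \ref{lem:HalfSpaceDepthSens} (single-record add/remove changing depth by at most one, the resulting contour containments under substitutions, and the tightened branch via the bound $\rdepth(\hbtheta, \ndat) \geq \lceil n/(1+d)\rceil$, which for regression depth needs no general-position assumption), and that the equivalence of (\ref{regDepthSens1}) and (\ref{regDepthSens2}) follows from Lemma \ref{lem:diamReg}. You have merely written out the details the paper leaves implicit, including the correct observation that for $k+1 \geq m - \overline{k}$ the contour index is nonpositive so the diameter saturates to that of $\Theta$.
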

\begin{proof}
    The proof of the first inequality follows from the same logic as Lemma \ref{lem:HalfSpaceDepthSens}. The equivalency between equation (\ref{regDepthSens1}) and (\ref{regDepthSens2}) follows from Lemma \ref{lem:diamReg}.
\end{proof}

Algorithm \ref{alg:DPDeepestReg} provides an $(\epsilon,\delta)-$DP mechanism for approximating the deepest regression, which is an immediate consequence of the Lemma \ref{lem:regDepthSens} and the main result of \cite{nissim2007smooth}, as stated in the following corollary. Note that the same comment preceding Corollary \ref{cor:DPTukeyMedianDP} also holds in this case; it is also straightforward to use Lemma \ref{lem:regDepthSens} to formulate pure $\epsilon-$DP deepest regression estimators \citep{nissim2007smooth}.

\begin{corollary}
    If $\Theta \subset \rr^d$ is closed and bounded, DPDeepestReg$(\ndat, \Theta, \epsilon, \delta),$ as described in Algorithm \ref{alg:DPDeepestReg}, satisfies $(\epsilon,\delta)-$DP.
\end{corollary}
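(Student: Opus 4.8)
The plan is to obtain this result as an immediate combination of Lemma \ref{lem:regDepthSens} and Theorem \ref{laplace_smooth_sense}, exactly paralleling the proof of Corollary \ref{cor:DPTukeyMedianDP} for the Tukey median. First I would recall that Algorithm \ref{alg:DPDeepestReg} mirrors DPTukeyMedian: it sets $\alpha = \epsilon/2$ and $\beta = \epsilon/(2 \hat{\rho}(\delta, d)),$ computes the deepest regression $\hbtheta,$ evaluates $S_{\hbtheta}(\ndat)$ via equation (\ref{regDepthSens2}), and returns $\hbtheta + \textup{Laplace}(\boldsymbol{0}, S_{\hbtheta}(\ndat)/\alpha).$ The first step is thus simply to observe that these are precisely the parameter choices required by Theorem \ref{laplace_smooth_sense}.

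The second step is to invoke Lemma \ref{lem:regDepthSens} to certify that the quantity $S_{\hbtheta}(\ndat)$ computed by the algorithm is a genuine $\beta-$smooth upper bound on the local sensitivity of the deepest regression. Here the hypothesis that $\Theta$ is closed and bounded does the essential work: it guarantees that each diameter $\max_{\btheta_1, \btheta_2 \in \Omega(\cdot, \ndat)} \lVert \btheta_1 - \btheta_2 \rVert_1$ appearing in the definition of $\widetilde{A}_{\hbtheta}^{(k)}(\ndat)$ is finite, so that $S_{\hbtheta}(\ndat) < \infty$ and the Laplace mechanism is well defined. I would also note that the equivalence of (\ref{regDepthSens1}) and (\ref{regDepthSens2}), which rests on Lemma \ref{lem:diamReg}, confirms that the value the algorithm actually computes coincides with the intended $\beta-$smooth upper bound. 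With both pieces in place, the final step is to apply Theorem \ref{laplace_smooth_sense} with $f = \hbtheta,$ which yields that the output $\hbtheta + S_{\hbtheta}(\ndat)/\alpha \, \boldsymbol{Z}$ satisfies $(\epsilon, \delta)-$DP for exactly the $\alpha$ and $\beta$ fixed by the algorithm.

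Since the argument is purely a composition of two earlier results, I do not anticipate a substantive obstacle; the only point that warrants care is that, unlike Corollary \ref{cor:DPTukeyMedianDP}, no general-position assumption on $\ndat$ is imposed here. This omission is legitimate because Lemma \ref{lem:regDepthSens} is itself stated without that assumption, which in turn relies on the fact that the lower bound $\rdepth(\hbtheta, \ndat) \geq \lceil n/(1+d) \rceil$ used to define $\overline{k}$ holds for arbitrary datasets, not only those in general position \citep{mizera2002depth, amenta2000regression}. I would therefore make explicit that the $\overline{k}-$dependent tightening in $\widetilde{A}_{\hbtheta}^{(k)}(\ndat)$ remains valid in full generality, which is what permits the cleaner hypotheses relative to the halfspace-depth case.
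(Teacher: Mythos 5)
Your proposal is correct and matches the paper's own argument, which proves the corollary as an immediate consequence of Lemma \ref{lem:regDepthSens} and Theorem \ref{laplace_smooth_sense}. Your added remarks on why no general-position hypothesis is needed (the bound $\rdepth(\hbtheta, \ndat) \geq \lceil n/(1+d)\rceil$ holding for arbitrary datasets) are accurate and consistent with the paper's discussion.
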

\begin{proof}
    This follows from Lemma \ref{lem:regDepthSens} and Theorem \ref{laplace_smooth_sense}.
\end{proof}

\begin{algorithm} \label{alg:DPDeepestReg}
\DontPrintSemicolon
\SetKwInOut{Output}{return}
\SetKwInOut{Input}{input}
\tcp{Define $\alpha,\beta$ using Theorem \ref{laplace_smooth_sense}}
$\beta \gets \epsilon/(2 \hat{\rho}(\delta, d))$ \;
$\alpha \gets \epsilon/2$ \;
$\hbtheta \gets $ Calculate\_Depth\_Regression$(\ndat)$ \;
\tcp{Define $S_{\hbtheta}(\ndat)$ using equation (\ref{regDepthSens2}) from Lemma \ref{lem:regDepthSens}:}
$S_{\hbtheta}(\ndat) \gets \textup{Find\_}\beta\textup{\_Smooth\_Local\_Sensitivity\_Bound}(\beta, \ndat) $\;
\Output{$\hbtheta + $Laplace$(\boldsymbol{\mu} = \boldsymbol{0}, b=S_{\hbtheta}( \ndat)/\alpha)$}
\Input{$(\ndat, \Theta, \epsilon, \delta)$}
\caption{DPDeepestReg}
\end{algorithm}

The next theorem also provides a method to set the input $ \Theta$ of DPDeepestReg$(\ndat, \Theta, \epsilon, \delta)$ using the data directly while satisfying $(\epsilon, \delta, \gamma)-$RDP. This mechanism can be made to satisfy \nrabr by simply expanding this feasible set input, as described in more detail in the theorem. The simulations in Section \ref{sec:simulations} provide evidence that even a large expansion of this feasible set input may not have any impact on the distribution of the resulting estimator when the sample size is sufficiently large.

\begin{theorem} \label{thm:deepestReg_rdp}
    Suppose the dataset $\ndat\in\ndatSet$ consists of independent observations from the population distribution $\Ndat.$ Let

    \begin{align*}
        \kappa^\star = \frac{\sqrt{4-2 (n-2) \log \left(\gamma/\left(128 \left(\left(n^2-1\right)^{d-1}+1\right)^4\right) \right)}+2}{2 (n-2)}.
    \end{align*}

    If $\lceil 2 n \kappa^\star \rceil < \lceil n/(1+d) \rceil,$ then $(\ndat, \delta, \epsilon) \mapsto$ DPDeepestReg$(\ndat, \widetilde{\Theta}, \delta, \epsilon),$ where

    \begin{align}
        \widetilde{\Theta} = \{ \btheta + \bz : \btheta\in \Theta, \; \rdepth(\btheta, \ndat) \geq \rdepth(\hbtheta, \ndat) - 2 n \kappa^\star, \; \bz \in\rr^d, \; \lVert \bz\rVert_p \leq c \},
    \end{align}

    \noindent
    $p\in [1, \infty],$ and $c\geq 0,$ satisfies $(\epsilon, \delta, \gamma)-$RDP. 

    Also, if $c>0,$ the mechanism satisfies \nrabr. 
\end{theorem}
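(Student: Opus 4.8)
The plan is to mirror the argument of Theorem \ref{thm:tukeyMedian_rdp}, substituting the regression-depth concentration bound for the halfspace-depth one. Write $\hbtheta(\ndat)$ for the deepest regression of $\ndat$, and define the population regression depth $\rdepth(\btheta, \Ndat)$ as the analogue of Definition \ref{rdepthDef} obtained by replacing the empirical count with the corresponding $\Ndat$-probability. I would first invoke the regression-depth counterpart of Lemma \ref{LcontourBound} (the version established for $\rdepth$ in the same appendix), which bounds the deviation of the empirical maximal depth $\rdepth(\hbtheta(\ndat), \ndat)/n$ from the population quantity $\rdepth(\hbtheta(\Ndat), \Ndat)$ by a tail of the form $64\left((n^2-1)^{d-1}+1\right)^4 \exp\!\left(2\kappa(2+(2-n)\kappa)\right)$. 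Setting this tail equal to $\gamma/2$ and solving the resulting quadratic $2(n-2)\kappa^2 - 4\kappa + \log\!\left(\gamma/(128((n^2-1)^{d-1}+1)^4)\right) = 0$ for its larger root reproduces exactly the stated $\kappa^\star$; the heavier combinatorial factor here, compared with the $(n^2-1)^d+1$ of the halfspace case, reflects the larger complexity of the arrangement that cuts out regression-depth contours in the dual.

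With this bound in hand, I would apply it both to $\ndat$ and to an arbitrary neighbor $\ndatz$ composed of independent draws from the same $\Ndat$, obtaining in each case that the empirical maximal depth is within $\kappa^\star$ of $\rdepth(\hbtheta(\Ndat), \Ndat)$ with probability at least $1-\gamma/2$. A triangle inequality and a union bound then give $\lvert \rdepth(\hbtheta(\ndatz), \ndat)/n - \rdepth(\hbtheta(\ndat), \ndat)/n \rvert \leq 2\kappa^\star$ with probability at least $1-\gamma$, so $\hbtheta(\ndatz)$ lies in the core set $\{\btheta : \rdepth(\btheta, \ndat) \geq \rdepth(\hbtheta, \ndat) - 2 n \kappa^\star\}$ and hence, taking $\bz=\boldsymbol{0}$ (admissible since $c\geq 0$), in $\widetilde{\Theta}$, with probability at least $1-\gamma$. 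Because DPDeepestReg$(\cdot)$ is $(\epsilon,\delta)-$DP whenever its feasible-set argument is deterministic, and $\widetilde{\Theta}$ influences the output distribution only through the Laplace scale via Lemma \ref{lem:regDepthSens}, this high-probability event is precisely the inner $(\epsilon,\delta)-$DP guarantee demanded by the RDP definition, yielding $(\epsilon,\delta,\gamma)-$RDP.

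For boundedness of $\widetilde{\Theta}$ I would combine the hypothesis $\lceil 2 n \kappa^\star \rceil < \lceil n/(1+d) \rceil$ with the fact that $\rdepth(\hbtheta, \ndat) \geq \lceil n/(1+d)\rceil$ for every dataset. Crucially, this lower bound holds with no general-position assumption \citep{mizera2002depth, amenta2000regression}, which is exactly why, unlike Theorem \ref{thm:tukeyMedian_rdp}, no such assumption appears in the hypotheses here. The condition forces the core set to have strictly positive minimal depth, hence to be a compact convex contour, and enlarging it by an $\lVert \cdot \rVert_p$-ball of finite radius $c$ preserves boundedness.

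Finally, the ISIN claim is the one genuinely new ingredient relative to the Tukey argument, and it is where the enlargement earns its keep: since the extra use of the data affects the output only through the noise scale, the mechanism can fail ISIN only if $\widetilde{\Theta}$ degenerates to a singleton. Taking $c>0$ guarantees that $\widetilde{\Theta}$ contains a nondegenerate ball about any of its points and so is never a singleton, which delivers ISIN directly and sidesteps the reliance on general position invoked in Theorem \ref{thm:tukeyMedian_rdp}. I expect the only real subtlety to be bookkeeping the regression-depth concentration tail so that its constants align with the given $\kappa^\star$; the DP-to-RDP reduction, the containment step, and the ISIN argument are all routine once that bound is quoted.
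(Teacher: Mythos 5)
Your proposal is correct and takes essentially the same route as the paper's proof, which simply transfers the argument of Theorem \ref{thm:tukeyMedian_rdp} using the uniform error bound of Lemma \ref{RcontourBound} (your quadratic indeed reproduces the stated $\kappa^\star$), obtains boundedness of $\widetilde{\Theta}$ from $\rdepth(\hbtheta, \ndat) \geq \lceil n/(1+d)\rceil$ without any general-position assumption, and gets \nrabr{} from $c>0$ ruling out a singleton feasible set. One minor caution: your parenthetical claim that the core set is a \emph{convex} contour is not needed and is not guaranteed for regression depth (Lemma \ref{lem:diamReg} only asserts $\convhull(\Omega(k,\ndat))=\convhull(A)$); boundedness via containment in $\convhull(A)$ is all the argument requires.
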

\begin{proof}
    This can be proved in a similar manner as Theorem \ref{thm:tukeyMedian_rdp}. In this case Lemma \ref{RcontourBound} in Appendix \ref{appendixB} provides the required uniform error bound. Also, as in Theorem \ref{thm:tukeyMedian_rdp}, the bound on $\kappa^\star,$ \textit{i.e.}, $\lceil 2 n \kappa^\star \rceil < \lceil n/(1+d) \rceil,$ ensures $\widetilde{\Theta}$ is bounded because $\rdepth(\hbtheta, \ndat) \geq \lceil n/(1+d)\rceil$ \citep{mizera2002depth,amenta2000regression}.

    The final part of the result follows from $c>0$ being sufficient to ensure $\widetilde{\Theta}$ is not a singleton set.
\end{proof}

\subsubsection{Approximate Deepest Regression} \label{sec:approxDeepestReg}

As described previously, the deepest regression methods described in the previous section are most often limited to the case in which $d=2,$ so this section provides a computationally efficient method for computing an $(\epsilon,\delta)-$DP approximate deepest regression estimate called Medsweep that was proposed by \cite{van2002deepest}. In this section it will be helpful to define $X\in\rr^{n\times (d-1)}$ so that $X[i,\cdot]=\bxi$ and $\by\in \rr^n$ so that $\by[i]=y_i.$

Informally, the method can be viewed as similar to applying the Frish-Waugh-Lovell theorem when finding an ordinary least squares estimator, in the sense that the method proceeds by parsing (or ``sweeping") $X[\cdot,i]$ out of $X[\cdot, k]$ by approximating the univariate regression of $X[\cdot,i]$ on $X[\cdot, k]$ and then redefining $X[\cdot,k]$ as the residual of this regression. Afterward, a similar approach is used to iteratively parse each column $X[\cdot,k]$ out of $\by.$

There are a few  possible ways to formulate DP Medsweep mechanisms. The simplest approach is to replace each median evaluation within Medsweep with a DP median mechanism. Here we will describe a slightly more involved formulation that has the advantage of providing a DP estimate with exponential tails conditional on the data. Specifically, this requires formulating DP variants of two methods that are used within Medsweep. The first is simply the median; below we use the $\beta-$smooth sensitivity of the median provided by \cite{nissim2007smooth} to define the scale of a Laplace mechanism, as described in equation (\ref{medianSmoothSense}).

The second method is $r:\rr^n\times \rr^n \rightarrow \rr,$ and is defined as

\begin{align*}
    r(\boldsymbol{u}, \boldsymbol{v}) = \textup{med} \left(\frac{\boldsymbol{u} - \textup{med}(\boldsymbol{u})}{\boldsymbol{v} - \textup{med}(\boldsymbol{v})}\right).
\end{align*}

\noindent
The following lemma provides the $\beta-$smooth upper bound on the local sensitivity of $r(\cdot)$ when its output is truncated to the range $[L, U]\subset\rr.$ This method is used in two places in Medsweep algorithm, with inputs $\bu,\bv\in \rr^n$ are given by either columns of $X$ or one column of $X$ and the vector $\by.$ Note that the final DP mechanism actually imposes bounds on the output of these intermediate mechanisms, rather than the final estimator.

\begin{lemma} \label{lem:ssMedsweepPreliminary}
    Suppose $\boldsymbol{u},\boldsymbol{v}\in\rr^n$ and $L,U\in\rr$ satisfy $L<U.$ Let $m=\lfloor  n/2 \rfloor$ and
    
    \begin{align*}
        \textup{clip}(z, (L,U)) = \left\{
        \begin{array}{lr} z & z \in [L, U] \\
        L & z < L \\
        U & z > U 
        \end{array}
        \right..
    \end{align*}
    
    \noindent
    Given $\bu, \bv \in\rr^n,$ a $\beta-$smooth upper bound on the local sensitivity of $c(\boldsymbol{u},\boldsymbol{v},L,U) = \textup{clip}(r(\boldsymbol{u},\boldsymbol{v}), (L,U)),$ is given by

    \begin{align} \label{med_demed_ratio_ss}
        S_c(\boldsymbol{u},\boldsymbol{v}) = \max_{k\in\{0,1,\dots\}} \;\exp\left(-k\beta\right)\;\min\left\{U-L, \widetilde{A}^{(k)}_r(\boldsymbol{u}, \boldsymbol{v}) \right\},
    \end{align}

    \noindent
    where
    \begin{align} \label{med_demed_ratio_ls_at_k}
        \widetilde{A}^{(k)}_r(\boldsymbol{u}, \boldsymbol{v}) = \sup_{(\tilde{u}, \tilde{v}), (\tilde{u}^\prime, \tilde{v}^\prime) \in B(k, \bu,\bv), \; t\in\{0,\dots,k+1\}} \left\{
        \begin{array}{lr} \left(\frac{\bu-\tilde{u} }{\bv-\tilde{v}}\right)_{(m+t)}- \left(\frac{\bu - \tilde{u}^\prime}{\bv-\tilde{v}^\prime}\right)_{(m+t-k-1)} & m - k \geq 2 \\
            \infty & m - k < 2
            \end{array}
        \right.,
    \end{align}

    \noindent
    and $B(k, \bu, \bv) = [\bu_{(m-k-1)}, \bu_{(m+k+1)} ) \times [\bv_{(m-k-1)}, \bv_{(m+k+1)}).$
\end{lemma}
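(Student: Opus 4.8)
The plan is to apply Lemma \ref{lem:nissim_claim_3p2} with $f = c$ and with candidate bounds $\widetilde{A}_c^{(k)}(\bu,\bv) := \min\{U-L,\,\widetilde{A}_r^{(k)}(\bu,\bv)\}$, so that the claimed $S_c$ is literally $\max_{k}\exp(-k\beta)\widetilde{A}_c^{(k)}(\bu,\bv)$. Here a record is a coordinate pair $(u_i,v_i)$, and two inputs are neighbors when they differ in exactly one such pair. It then suffices to check the two hypotheses of that lemma: (1) $LS_c(\bu,\bv)\le\widetilde{A}_c^{(0)}(\bu,\bv)$, and (2) $\widetilde{A}_c^{(k)}(\bu,\bv)\le\widetilde{A}_c^{(k+1)}(\bu',\bv')$ for all neighbors $(\bu,\bv),(\bu',\bv')$ and all $k$. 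Throughout I would first pass to the untruncated map $r$, using that $z\mapsto\textup{clip}(z,(L,U))$ is a $1$-Lipschitz contraction into $[L,U]$; this gives $|c(\bu,\bv)-c(\bu',\bv')|\le\min\{U-L,\,|r(\bu,\bv)-r(\bu',\bv')|\}$ for any two inputs, so the factor $\min\{U-L,\cdot\}$ is exactly what the clip contributes and it remains only to control the sensitivity of $r$ by $\widetilde{A}_r^{(k)}$.

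The main computation is to show that the diameter of $\Omega_r(k+1,(\bu,\bv)):=\{\,r(\bu',\bv'):\hamming((\bu,\bv),(\bu',\bv'))\le 2(k+1)\,\}$ is at most $\widetilde{A}_r^{(k)}(\bu,\bv)$. I would decompose the effect of altering up to $k+1$ of the $n$ entries on $r(\bu',\bv')=\med\big((\bu'-\med(\bu'))/(\bv'-\med(\bv'))\big)$ into two layers. First, altering $k+1$ entries shifts each inner median only within a window of order statistics: by the interlacing of order statistics under substitutions, $\med(\bu')\in[\bu_{(m-k-1)},\bu_{(m+k+1)})$ and $\med(\bv')\in[\bv_{(m-k-1)},\bv_{(m+k+1)})$, i.e.\ the de-median pair ranges exactly over $B(k,\bu,\bv)$. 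Second, fixing a de-median pair $(\tilde u,\tilde v)\in B(k,\bu,\bv)$, the outer $\med$ is the $m$-th order statistic of the ratio vector, at most $k+1$ of whose coordinates have been altered; the univariate median-sensitivity bound then sandwiches this value between the $(m+t-k-1)$-th and $(m+t)$-th order statistics of $(\bu-\tilde u)/(\bv-\tilde v)$ for some offset $t\in\{0,\dots,k+1\}$. Maximizing the upper endpoint and minimizing the lower endpoint independently over the box and over $t$ only enlarges the sandwich, and produces precisely $\widetilde{A}_r^{(0)}$ at $k=0$, giving hypothesis (1) since $LS_c\le\min\{U-L,LS_r\}\le\min\{U-L,\diam\Omega_r(1,(\bu,\bv))\}$, and $\widetilde{A}_r^{(k)}$ in general.

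For hypothesis (2) I would verify the monotonicity of the explicit formula directly. Passing from $(\bu,\bv)$ to a neighbor $(\bu',\bv')$ changes one coordinate, so order statistics interlace as $\bu_{(j)}\in[\bu'_{(j-1)},\bu'_{(j+1)}]$ (and likewise for $\bv$), whence the box grows, $B(k,\bu,\bv)\subseteq B(k+1,\bu',\bv')$, and the index window $\{m+t-k-1,\dots,m+t\}_{t\le k+1}$ used at level $k$ sits inside the wider window $\{m+t'-k-2,\dots,m+t'\}_{t'\le k+2}$ used at level $k+1$, with the corresponding ratio order statistics again interlacing under the single change. Taken together these inclusions show $\widetilde{A}_r^{(k)}(\bu,\bv)\le\widetilde{A}_r^{(k+1)}(\bu',\bv')$, and applying $\min\{U-L,\cdot\}$ to both sides preserves the inequality. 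The degenerate regime $m-k<2$ is handled by the $\infty$ branch: there the index $m-k-1$ is not a valid order-statistic position, and indeed a single further change can send a denominator $v_i-\med(\bv')$ through zero and push the outer median off to infinity, so $\diam\Omega_r(k+1,(\bu,\bv))=\infty$ and setting $\widetilde{A}_r^{(k)}=\infty$ is both correct and, since $m$ depends only on $n$, consistent for neighbors; the truncation $\min\{U-L,\cdot\}$ makes (2) hold trivially across the finite/infinite boundary.

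The step I expect to be the main obstacle is the two-layer decomposition above, because the de-median values $\med(\bu'),\med(\bv')$ and the altered ratio coordinates are coupled through the same $k+1$ substitutions: treating the de-median as a free parameter over the entire box while simultaneously allowing $k+1$ ratios to be arbitrary is a genuine relaxation, and I must argue it inflates rather than weakens the bound. The key point is that every admissible $(\bu',\bv')$ does induce a de-median pair inside $B(k,\bu,\bv)$ and a ratio vector agreeing with $(\bu-\tilde u)/(\bv-\tilde v)$ off the $k+1$ changed coordinates, so its outer median is trapped between the stated order statistics; allowing the upper and lower endpoints to use different de-median pairs $(\tilde u,\tilde v)$ and $(\tilde u',\tilde v')$ — as the definition of $\widetilde{A}_r^{(k)}$ does — can only widen the interval and hence keeps it a sound over-estimate of $\diam\Omega_r(k+1,(\bu,\bv))$. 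Verifying that the two window relaxations (over the box and over the offset $t$) compose correctly, and exclude no reachable configuration, is where the care is needed.
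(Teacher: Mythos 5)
Your framing via Lemma \ref{lem:nissim_claim_3p2} and the observation that clipping is $1$-Lipschitz (contributing the $\min\{U-L,\cdot\}$) are fine, but the step you yourself flag as the main computation fails, and it fails for a structural reason. Writing $\bz=(\bu-\tilde u)/(\bv-\tilde v)$ and $\bz'=(\bu-\tilde u')/(\bv-\tilde v')$ for centerings in $B(k,\bu,\bv),$ the formula (\ref{med_demed_ratio_ls_at_k}) uses the \emph{same} offset $t$ in both order statistics: it is a sliding window of rank width $k+1,$ exactly as in the median formula (\ref{medianSmoothSense}), and this coupling encodes the fact that the two datasets compared in $A^{(k)}_r$ differ from \emph{each other} in only one record (one is within distance $k$ of $(\bu,\bv),$ the other is its neighbor). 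Your claim that ``maximizing the upper endpoint and minimizing the lower endpoint independently over the box and over $t$ \ldots produces precisely $\widetilde{A}^{(k)}_r$'' misreads this: independent maximization over $t$ gives $\sup\,\bz_{(m+k+1)}-\bz'_{(m-k-1)},$ a rank gap of $2k+2,$ which is what $\diam\,\Omega_r(k+1,(\bu,\bv))$ actually behaves like and which strictly exceeds $\widetilde{A}^{(k)}_r$ whenever the relevant order statistics are distinct. Already at $k=0$: altering one record in two different ways yields $r$-values near $\bz_{(m+1)}$ and near $\bz'_{(m-1)}$ respectively, so $\diam\,\Omega_r(1)$ is of size $\bz_{(m+1)}-\bz'_{(m-1)},$ while $\widetilde{A}^{(0)}_r$ only contains terms of the form $\bz_{(m+1)}-\bz'_{(m)}$ and $\bz_{(m)}-\bz'_{(m-1)}.$ Hence your intermediate claim $\diam\,\Omega_r(k+1,(\bu,\bv))\le\widetilde{A}^{(k)}_r(\bu,\bv)$ is false in general, and your chain $LS_c\le\min\{U-L,\diam\,\Omega_r(1)\}\le\widetilde{A}^{(0)}_c$ breaks at the second inequality; the same problem recurs at every $k.$

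The repair is to keep the neighbor structure instead of passing to a diameter, which is what the paper does: it bounds $A^{(k)}_r$ directly by taking $(\bu',\bv')$ with $\hamming(\{(\bu'[i],\bv'[i])\}_i,\{(\bu[i],\bv[i])\}_i)\le 2k$ together with a neighbor of it, and relaxes \emph{only} the inner centering medians, letting them range (independently for the two terms) over vectors within distance $k+1$ of $\bu$ and $\bv$ — legitimate since the median of any such vector lies in $[\bu_{(m-k-1)},\bu_{(m+k+1)}],$ which is where $B(k,\bu,\bv)$ comes from — and then applies the closed-form bound (\ref{medianSmoothSense}) for the median's local sensitivity at distance $k$ to the resulting ratio vectors; the common offset $t$ in (\ref{med_demed_ratio_ls_at_k}) comes from that formula, not from independent maximization of the two endpoints. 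Your plan to verify condition (2) of Lemma \ref{lem:nissim_claim_3p2} by interlacing of order statistics is reasonable extra diligence (the paper itself only proves the upper bound on the local sensitivity at distance $k$ and invokes its Definition \ref{neighbor_sens} convention), but it does not rescue the gap above, since your verification of condition (1) also routes through the false diameter claim.
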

\begin{proof}
    We will show that $\widetilde{A}_r^{(k)}(\bu, \bv)$ is an upper bound on the local sensitivity of $r(\bu,\bv)$ at distance $k,$ \textit{i.e.,} $A_r^{(k)}(\bu, \bv).$ Let $C(k, \bu, \bv) = \{\bu^\prime, \bv^\prime \in\rr^n :  \hamming(\{(\bu^\prime[i], \bv^\prime[i])\}_{i=1}^n, \{(\bu[i], \bv[i])\}_{i=1}^n) \leq 2 k\}$ and $E(k, \bu) = \{\bu^\prime \in\rr^n :  \hamming( \bu^\prime, \bu) \leq 2 k\}.$ By the definition of $A_r^{(k)}(\bu, \bv)$
    
    \begin{align*}
        & A_r^{(k)}(\bu, \bv) = \sup_{(\bu^\prime, \bv^\prime) \in C(k, \bu, \bv), \; (\tilde{\bu}^\prime, \tilde{\bv}^\prime) \in C(1, \bu^\prime, \bv^\prime)} \lvert r(\bu^\prime, \bv^\prime) - r(\tilde{\bu}^\prime, \tilde{\bv}^\prime) \rvert \\
        & = \sup_{(\bu^\prime, \bv^\prime) \in C(k, \bu, \bv), \; (\tilde{\bu}^\prime, \tilde{\bv}^\prime) \in C(1, \bu^\prime, \bv^\prime)} \left\lvert \textup{med}\left(\frac{\bu^\prime- \med(\bu^\prime) }{\bv^\prime- \med(\bv^\prime) }\right)-\textup{med}\left(\frac{\tilde{\bu}^\prime - \med(\tilde{\bu}^\prime)}{\tilde{\bv}^\prime-\med(\tilde{\bv}^\prime)}\right) \right\rvert \\
        & \leq \widehat{A}_r^{(k)}(\bu, \bv) = \sup_{\bu^\prime, \bv^\prime, \tilde{\bu}^\prime, \tilde{\bv}^\prime, \check{\bu}^\prime, \check{\bv}^\prime, \hat{\bu}^\prime, \hat{\bv}^\prime \in \rr^n} \left\lvert \textup{med}\left(\frac{\bu^\prime - \med(\check{\bu}^\prime) }{\bv^\prime - \med(\check{\bv}^\prime) }\right) - \textup{med}\left(\frac{\tilde{\bu}^\prime - \med(\hat{\bu}^\prime)}{\tilde{\bv}^\prime-\med(\hat{\bv}^\prime)}\right) \right\rvert \textup{  such that:} \\
        & i) \; (\bu^\prime, \bv^\prime) \in C(k, \bu, \bv) \\
        & ii) \;(\tilde{\bu}^\prime, \tilde{\bv}^\prime) \in C(1, \bu^\prime, \bv^\prime) \\
        & iii) \; \check{\bu}^\prime, \hat{\bu}^\prime \in E(k+1, \bu) \\ 
        & iv) \; \check{\bv}^\prime, \hat{\bv}^\prime \in E(k+1, \bv), \\
    \end{align*}

    \noindent
    where the inequality above follows from the fact that $ \widehat{A}_r^{(k)}(\bu, \bv)$ is defined by a relaxation of the optimization problem that defines $A_r^{(k)}(\bu, \bv).$ Specifically, the optimization problem defining $A_r^{(k)}(\bu, \bv)$ can be derived by starting from $\widehat{A}_r^{(k)}(\bu, \bv)$ and adding the constraints $\hat{\bu}^\prime = \tilde{\bu}^\prime,$ $\hat{\bv}^\prime = \tilde{\bv}^\prime,$ $\check{\bu}^\prime = \bu^\prime,$ and $\check{\bv}^\prime = \bv^\prime.$ 

    Recall the closed form $\beta-$smooth sensitivity of the median is given in (\ref{medianSmoothSense}). This definition, and the fact that, for any $\bz^\prime \in E(k+1, \bz),$ we have $\textup{med}(\bz^\prime) \in [\bz_{(m-k-1)}, \bz_{(m+k+1)}],$ implies $\widehat{A}_r^{(k)}(\bu, \bv) = \widetilde{A}^{(k)}_r(\boldsymbol{u}, \boldsymbol{v}).$
\end{proof}

The function $\widetilde{A}^{(k)}_r(\boldsymbol{u}, \boldsymbol{v})$ can alternatively be defined using a concept from computational geometry literature; some additional notation will be helpful to describe this connection. We will let $\Gamma$ denote a set of curves in $\rr^3,$ with curve $i\in\{1,\dots,n\}$ defined as $ \gamma(i, \tilde{u}, \tilde{v}) = (\bu[i]-\tilde{u} )/(\bv[i]-\tilde{v}).$ Let $\mathcal{A}(\Gamma)$ denote the arrangement of $\Gamma,$ which is defined as the collection of graphs of functions in $\Gamma.$ More detail on arrangements and the other concepts from the computational geometry that we briefly describe here can be found in \cite{edelsbrunner1987algorithms}. We will define the level of a point $p\in\rr^3$ as the number of curves in $\Gamma$ that either pass below or through $p,$ and define the $k-$level of $\mathcal{A}(\Gamma),$ or $L(k, \Gamma),$ as the points on one or more curves in $\Gamma$ with level $k.$ Also, a $\zeta-$approximate $k-$level of an arrangement of curves is defined as a curve that is within the $k-\zeta$ and $k+\zeta$ levels of an arrangement, which we will denote by $L_\zeta(k, \Gamma).$ Using this notation, we have, 

\begin{align} \label{tildeAReform}
    &\widetilde{A}^{(k)}_r(\bu, \bv) = \sup_{j,t, \tilde{u}, z,  \tilde{u}^\prime, z^\prime} z - z^\prime \textup{ such that:} \\
    & i)\; \; t\in\{0,\dots,k+1\} \nonumber\\
    & ii) \; (\tilde{u}, \tilde{v}, z) \in L(m + t, \Gamma)\nonumber\\
    & iii) \;(\tilde{u}^\prime, \tilde{v}^\prime, z^\prime) \in L(m + t-k-1, \Gamma)\nonumber\\
    & iv)\; \tilde{u},\tilde{u}^\prime \in  [\bu_{(m-k-1)}, \bu_{(m+k+1)}) \nonumber\\
    & v)\; \tilde{v},\tilde{v}^\prime \in  [\bv_{(m-k-1)}, \bv_{(m+k+1)}), \nonumber
\end{align}

Unfortunately, we are not aware of a computationally efficient method of computing approximations of all levels of $\{L(k, \Gamma)\}_k$ for this use case, in which the curves in $\Gamma$ are nonlinear and defined in $\rr^3.$\footnote{\cite{agarwal1990partitioning} provides a related method, which provides approximations to all levels when $\Gamma$ is a collection of lines in $\rr^2.$ This can used within a method to compute a $\beta-$smooth upper bound on the local sensitivity using some of the same techniques in the approach described below. For example, upper bounds on the order statistics of $\{\gamma(i, \tilde{u}, \tilde{v})\}_{i=1}^n$ can be computed for $\tilde{v}$ in each partition element $q_j = (j c, (j+1) c],$ where $c>0,$ by constructing lines that are upper bounds of $\tilde{u} \mapsto \gamma(i,\tilde{u},\tilde{v})$ over all $\tilde{v}\in q_j.$ However, there are some issues with using this approach in practice, such as the large constants in the asymptotic time complexity of the approach described by \cite{agarwal1990partitioning}.} Algorithm \ref{alg:ssMedsweep} provides an alternative approach, which is also used in the simulations provided in the next section. This approach uses the choice parameters $c_u,c_v>0$ to define the partitions of $\rr^1$ given by $q_u=\{\cdots, [c_u (j-1), c_u j), [c_u j, c_u (j + 1)), \cdots\}_{j=-\infty}^\infty$ and $ q_v=\{\cdots, [c_v (j-1), c_v j), [c_v j, c_v (j + 1)), \cdots\}_{j=-\infty}^\infty.$ After defining a partition of $[\bu_{(m-k)}, \bu_{(m+k)}] \times [\bv_{(m-k)}, \bv_{(m+k)}] \subset \rr^2$ as 

\begin{align}
    B(k, \bu, \bv) = \{q \in q_u : q \cap [\bu_{(m-k)}, \bu_{(m+k)}] \neq \emptyset \} \times \{q \in q_v : q \cap [\bv_{(m-k)}, \bv_{(m+k)}] \neq \emptyset \},
\end{align}

\noindent
we bound each of the order statistics of $\{\gamma(i,\tilde{u}, \tilde{v})\}_i,$ for each $(\tilde{u}, \tilde{v})$ in the partition element $ c \in B(k,\bu,\bv),$ between the corresponding order statistic of $ \{ \inf_{(\tilde{u}, \tilde{v}) \in c} \gamma(i,\tilde{u}, \tilde{v}) \}_i$ and $\{\sup_{(\tilde{u}, \tilde{v}) \in c} \gamma(i,\tilde{u}, \tilde{v})\}_i.$ However, if $c_u,c_v$ are defined to be $O(1/n^{\alpha })$ for a choice parameter $\alpha > 0$ and $\{(\boldsymbol{u}[i], \boldsymbol{v}[i])\}_i$ consists of independent and identically distributed observations, this algorithm's output still converges to zero and has a time complexity of $O_p(n^{1 + 2\alpha } \log(n)),$ as described in the next Lemma. In the simulations in the next section we defined $c_u$ and $c_v$ as $8/n^{3/4},$ which results in a time complexity of $O_p(n^{2.5}\log(n)).$

\begin{lemma} \label{lemma:ssMedsweep}
    The output of Algorithm \ref{alg:ssMedsweep} is a $\beta-$smooth upper bound on the local sensitivity of $\textup{clip}(r(\boldsymbol{u},\boldsymbol{v}), (L,U)).$ When $\{(\boldsymbol{u}[i], \boldsymbol{v}[i])\}_i$ consists of independent and identically distributed observations, the algorithm has a time complexity of $O_p(n^{1 + 2 \alpha } \log(n)).$

    Also, if $\{(\bu[i], \bv[i])\}_{i=1}^n$ is composed of independent observations from an absolutely continuous distribution with a convex support, then the output of Algorithm \ref{alg:ssMedsweep} converges in probability to zero.
\end{lemma}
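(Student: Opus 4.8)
The plan is to establish the three claims of Lemma \ref{lemma:ssMedsweep} in order: (i) correctness, \emph{i.e.,} that the algorithm outputs a valid $\beta-$smooth upper bound on the local sensitivity of $\textup{clip}(r(\boldsymbol{u},\boldsymbol{v}), (L,U))$; (ii) the time complexity of $O_p(n^{1+2\alpha}\log(n))$; and (iii) convergence in probability to zero under the stated i.i.d.\ assumptions. For correctness, the key observation is that Lemma \ref{lem:ssMedsweepPreliminary} already provides the exact $\beta-$smooth upper bound $S_c(\boldsymbol{u},\boldsymbol{v})$ in closed form, with $\widetilde{A}^{(k)}_r(\boldsymbol{u},\boldsymbol{v})$ defined as a supremum of differences of order statistics of $\{\gamma(i,\tilde{u},\tilde{v})\}_i$ over $(\tilde{u},\tilde{v})$ ranging in a box $B(k,\bu,\bv)$. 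The algorithm replaces this continuous supremum by a partition of the box into cells $c$ of side length $c_u \times c_v$ and, on each cell, bounds each $\gamma(i,\tilde{u},\tilde{v})$ between $\inf_{(\tilde u,\tilde v)\in c}\gamma(i,\tilde u,\tilde v)$ and $\sup_{(\tilde u,\tilde v)\in c}\gamma(i,\tilde u,\tilde v)$. First I would argue that, on each cell, the $(m+t)$-th order statistic of the family $\{\gamma(i,\cdot,\cdot)\}_i$ is dominated by the $(m+t)$-th order statistic of the cellwise suprema, and likewise bounded below by the order statistic of the cellwise infima; taking the max over cells therefore yields a quantity no smaller than $\widetilde{A}^{(k)}_r(\boldsymbol{u},\boldsymbol{v})$. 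Since increasing a $\beta-$smooth upper bound on local sensitivity preserves the property (the defining conditions of Lemma \ref{lem:nissim_claim_3p2} are inequalities that only relax upward), the algorithm's output remains a valid $\beta-$smooth upper bound.

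For the complexity claim, I would count the work per level $k$: the partition $B(k,\bu,\bv)$ has $O(1 + (\bu_{(m+k)}-\bu_{(m-k)})/c_u)$ cells in the $\tilde u$-direction and analogously in $\tilde v$, and on each cell one computes the relevant order statistics of the $n$ cellwise extrema at a cost of $O(n\log n)$. The crux is to bound the number of cells. Under the i.i.d.\ assumption, the spacing $\bu_{(m+k)}-\bu_{(m-k)}$ between order statistics symmetric about the median is $O_p(k/n)$ near the median by the usual quantile-density argument, so with $c_u,c_v = O(1/n^{\alpha})$ the cell count at level $k$ is $O_p(k n^{\alpha - 1} \cdot n) = O_p(k n^{\alpha})$ per axis, hence $O_p(k^2 n^{2\alpha})$ cells. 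Summing the per-level cost $O_p(k^2 n^{2\alpha} \cdot n\log n)$ over the relevant range of $k$ — which is effectively truncated by the $\exp(-k\beta)$ factor and by $m-k\geq 2$ — and absorbing the geometric decay yields the stated $O_p(n^{1+2\alpha}\log n)$. I would flag that care is needed to confirm the $k$-summation telescopes to the claimed exponent rather than introducing an extra polynomial factor; this is the one place where the constants and the exact truncation of $k$ must be tracked.

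For convergence to zero, the idea is that as $n\to\infty$ with $c_u,c_v\to 0$, the discretization error vanishes and $\widetilde{A}^{(k)}_r(\boldsymbol{u},\boldsymbol{v})$ itself concentrates at zero because the local sensitivity of a ratio-of-demedianed-medians stabilizes. Concretely, under absolute continuity with convex support, the empirical order statistics $\gamma(i,\tilde u,\tilde v)_{(m+t)}$ and $\gamma(i,\tilde u,\tilde v)_{(m+t-k-1)}$ both converge to the same population quantile for the dominant small-$k$ terms, so each $\widetilde{A}^{(k)}_r$ is $o_p(1)$; the $\exp(-k\beta)$ weighting then controls the tail in $k$, and the cellwise sup/inf overapproximation contributes an additional error that is $O_p(c_u + c_v) = o_p(1)$ by continuity of $\gamma$ in $(\tilde u,\tilde v)$ away from the poles $\bv[i]=\tilde v$. \textbf{The main obstacle} I anticipate is handling the singularities of $\gamma(i,\tilde u,\tilde v)=(\bu[i]-\tilde u)/(\bv[i]-\tilde v)$: when $\tilde v$ approaches some $\bv[i]$, the cellwise supremum and infimum can blow up, so both the complexity bound and the convergence argument require showing that such problematic cells either fall outside the box $B(k,\bu,\bv)$ for the dominant levels, or that their contribution is suppressed by the $\min\{U-L,\cdot\}$ clipping in \eqref{med_demed_ratio_ss} and by the $m-k<2 \Rightarrow \infty$ convention being weighted by a negligible $\exp(-k\beta)$ mass. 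I would isolate this as a separate lemma bounding the probability that a cell straddles a pole, using the absolute continuity of $\bv$ to show the median of $\bv$ is bounded away from the sample points $\bv[i]$ with high probability.
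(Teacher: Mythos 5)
Your treatment of correctness and of convergence in probability follows essentially the same route as the paper: the algorithm only replaces the exact order statistics of $\{\gamma(i,\tilde u,\tilde v)\}_i$ in (\ref{med_demed_ratio_ls_at_k}) by order statistics of cellwise infima and suprema, which can only enlarge the bound at each distance $k$, so $\beta-$smoothness is inherited from Lemma \ref{lem:ssMedsweepPreliminary}; and the convergence claim rests on the cell diameters $c_u,c_v\to 0$ together with the vanishing of the order-statistic spacings $\bu_{(m+k)}-\bu_{(m-k)}$, $\bv_{(m+k)}-\bv_{(m-k)}$, and the corresponding spacings for the ratios. In fact your explicit flag about cells straddling a pole $\bv[i]=\tilde v$ (where the cellwise supremum is $+\infty$) addresses a subtlety the paper's proof passes over silently; your proposed resolution -- that only $o(n)$ observations can have a pole inside the relevant $\tilde v$-range, so the order statistics near index $m$ are unaffected, and otherwise the $\min\{U-L,\cdot\}$ clipping caps the contribution -- is the right way to make that step rigorous.

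The genuine gap is in the complexity argument. First, the $\exp(-k\beta)$ factor does not truncate the $k$-loop: it only discounts the value recorded in the running maximum, while the loop itself terminates only at $k=m-1$ or when the break condition $\textup{local\_sens}\geq U-L$ fires, so "absorbing the geometric decay" cannot rescue a per-level count that is summed over $k$. Second, your per-level accounting ignores the memoization in Algorithm \ref{alg:ssMedsweep}: since $B(k,\bu,\bv)\subset B(k+1,\bu,\bv)$, the dictionaries ensure each distinct cell's vector of cellwise extrema is computed and sorted exactly once, so the paper's accounting is simply the total number of distinct cells ever touched, which is $O_p(n^{2\alpha})$ when $c_u,c_v=O(1/n^{\alpha})$ and the relevant order-statistic ranges are $O_p(1)$, times $O(n\log n)$ per cell, giving $O_p(n^{1+2\alpha}\log(n))$. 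Recomputing cells level by level, as you do, overcounts and leads to the extra polynomial factor you yourself worry about; there is also an arithmetic slip in your per-axis cell count (a spacing of $O_p(k/n)$ divided by $c_u=O(1/n^{\alpha})$ gives $O_p(k\,n^{\alpha-1})$ cells, not $O_p(k\,n^{\alpha})$). Replacing the per-level summation by the cached, nested-boxes accounting closes the gap and recovers the stated bound.
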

\begin{proof}
    The only difference between the bound provided by the output of Algorithm \ref{alg:ssMedsweep} and the one described in Lemma \ref{lem:ssMedsweepPreliminary} is that the former does not use the exact values for the order statistics of $\{\gamma(i,\tilde{u}, \tilde{v})\}_i,$ as described in the paragraph preceding this Lemma. Since using inexact bounds on these order statistics can only increase the upper bound on the local sensitivity at distance $k$ relative to that of (\ref{med_demed_ratio_ls_at_k}), Algorithm \ref{alg:ssMedsweep} provides a $\beta-$smooth upper bound on the local sensitivity.

    Since the area of each cell $c \in B(k,\bu,\bv)$ converges to zero, the upper and lower bounds on the order statistics of $\{\gamma(i,\tilde{u}, \tilde{v})\}_i$ converge to the corresponding true order statistic of $\{\gamma(i,\tilde{u}, \tilde{v})\}_i.$ The conditions of the lemma imply, for any fixed $k\in\zz,$ $\bu_{(m+k)} - \bu_{(m-k)},$ $\bv_{(m+k)} - \bv_{(m-k)},$ and $\bz_{(m+k)} - \bz_{(m-k)},$ where $\bz[i]=(\bu[i]-\tilde{u})/(\bv[i]-\tilde{v}),$ are each $o_p(1),$ so the output of Algorithm \ref{alg:ssMedsweep} is also $o_p(1).$

    The time complexity follows from the algorithm evaluating $O_p(n^{2 \alpha})$ sort operations on vectors of length $n.$ 
\end{proof}

\begin{algorithm} \label{alg:ssMedsweep}
\DontPrintSemicolon
\SetKwInOut{Output}{return}
\SetKwInOut{Input}{input}
$n\gets \textup{length}(\bu)$ \;
$m\gets \lfloor  n/2 \rfloor $ \;
lower\_bounds\_dict, upper\_bounds\_dict $\gets \{\}, \{\}$ \;
smooth\_sensitivity $ \gets -\infty$ \;
\For{$k \in \{0,\dots, m-1\}$}{
    $B(k, \bu,\bv) \gets  \{q \in q_u : q \cap [\bu_{(m-k-1)}, \bu_{(m+k+1)}] \neq \emptyset \} \times \{q \in q_v : q \cap [\bv_{(m-k-1)}, \bv_{(m+k+1)}] \neq \emptyset \} $\;
    \For{$c \in B(k,\bu, \bv)$}{
        \If{$c \notin  \textup{keys(lower\_bounds\_dict)}$}{
            lower\_bounds\_dict$[c],$ upper\_bounds\_dict$[c] \gets \emptyset, \; \emptyset$\;
            \For{$i\in\{1,\dots,n\}$}{
                lower\_bounds\_dict$[c] \gets $ lower\_bounds\_dict$[c] \cup \inf_{(\tilde{u}, \tilde{v}) \in c} (\bu[i] - \tilde{u}) / (\bv[i] - \tilde{v})$  \;
                upper\_bounds\_dict$[c] \gets $ upper\_bounds\_dict$[c] \cup \sup_{(\tilde{u}, \tilde{v}) \in c} (\bu[i] - \tilde{u}) / (\bv[i] - \tilde{v})$  \;
            }
        }
    }
    local\_sens $ \gets -\infty$\;
    \For{$t \in \{0,\dots, k+1\}$}{
        \For{$c \in B(k,\bu, \bv)$}{
            local\_sens$\gets\max(\textup{local\_sens}, \;\textup{upper\_bounds\_dict}[c]_{(m + t)} - \textup{lower\_bounds\_dict}[c]_{(m+t-k-1)})$\;
        }
    }
    smooth\_sensitivity $\gets \max( \textup{smooth\_sensitivity}, \exp(-\beta k) \min(\textup{local\_sens}, U-L))$ \;
    \If{$\textup{local\_sens} \geq U -L $}{
        \textbf{Break} \;
    }
}
\Output{$\textup{smooth\_sensitivity}$}
\Input{$(\bu, \bv, \beta,  L, U, q_u, q_v, \Theta)$}
\caption{Medsweep\_Smooth\_Sensitivity}
\end{algorithm}

\begin{algorithm} \label{alg:dpmedsweep}
\DontPrintSemicolon
\SetKwInOut{Output}{return}
\SetKwInOut{Input}{input}
\tcp{Find the total number of required primitive mechanism calls:}
\eIf{$d = 2$}{$ \textup{n\_mechs} \gets (d - 1) \textup{ max\_iter } + 1$}{$\textup{n\_mechs} \gets (d - 1)  (d - 2) / 2 + (d - 1) \textup{ max\_iter } + 1$}
\tcp{Each primitive mechanism will be $(\epsilon/\textup{n\_mechs}, \delta/\textup{n\_mechs})-$DP; Theorem \ref{laplace_smooth_sense} provides $\alpha,\beta:$}
$\tilde{\epsilon}, \; \tilde{\delta} \gets \epsilon / \textup{n\_mechs}, \;  \delta / \textup{n\_mechs}$\;
$\alpha, \; \beta \gets \tilde{\epsilon} / 2, \;  W_{-1}(\tilde{\delta} \log(\tilde{\delta}) \exp(\tilde{\epsilon} / 2)) - \log(\tilde{\delta}) - \tilde{\epsilon} / 2$ \;
\If{ $d \geq 3$}{
    $\btheta_x \gets \boldsymbol{0}_{d\times d}$ \;
    \For{$k\in\{2, \dots d -1\}$}{
        \tcp{Parse (or ``sweep") $X[:,i]$ out of $X[:, k]$ using a univariate approximation to the deepest regression:}
        \For{ $i\in \{1, \dots, k-1\}$}{
            $S_{\textup{MDR}}\gets \textup{smooth\_upper\_bound\_on\_MDR\_local\_sensitivity}(X[:, k], X[:, i], \beta, U, L)$ \;
            $\btheta_x[i, k] \gets \textup{med}\left(\frac{X[:, k] - \textup{med}(X[:, k])}{X[:, i] - \textup{med}(X[:, i])}\right) + $ Laplace$(\mu = 0, b=S_{\textup{MDR}}/\alpha)$ \;
        }
        \For{$i\in \{1, \dots, k\}$}{
            $X[:, k] \gets X[:, k] - X[:, i] \btheta_x[i, k]$\;
        }
    }
}
$\btheta_y, \; \hbtheta  \gets \boldsymbol{0}_{d, \textup{max\_iter}}, \; \boldsymbol{0}_d$ \;
\For{$j\in\{1, \dots, \textup{max\_iter}\}$}{
    \For{$k\in \{1, \dots, d - 1\}$}{
        \tcp{Parse (or ``sweep") $X[:,k]$ out of $\by$ using a univariate approximation to the deepest regression:}
        $S_{\textup{MDR}}\gets \textup{smooth\_upper\_bound\_on\_MDR\_local\_sensitivity}(X[:, k], \by, \beta, U, L)$ \;
        $\btheta_y[k, j] \gets \textup{med}\left(\frac{\by - \textup{med}(\by)}{X[:, k] - \textup{med}(X[:, k])}\right) + $Laplace$(\mu = 0, b=S_{\textup{MDR}}/\alpha)$ \;
        $\hbtheta[k + 1] \gets \hbtheta[k + 1]  +  \btheta_y[k, j]$ \;
        $\by \gets \by - \btheta_y[k, j] X[:, k]$ \;
    } 
    \If{$ \lvert \btheta_y[k, j] \rvert \leq \textup{tol}$ \textup{ for all } $k\in \{1, \dots, d - 1\}$}{
        Break \;
    }
}
\tcp{An algorithm for computing the $\beta-$smooth sensitivity of the median is provided by \cite{nissim2007smooth}; see also equation (\ref{medianSmoothSense}):}
$S_{\textup{med}} \gets \textup{smooth\_upper\_bound\_on\_med\_local\_sensitivity}(\by, \beta, U, L)$ \;
$\hbtheta[0] \gets \textup{med}(\by) + $Laplace$(\boldsymbol{\mu} = 0, b=S_{\textup{med}}/\alpha)$ \;
\For{$k \in \{d-1, d-2, \dots, 2\}, \; i \in \{k-1, k-2, \dots, 1\}$}{
    $\hbtheta[k + 1] \gets \hbtheta[k + 1] - \hbtheta[i + 1] \btheta_x[i, k]$ \;
}
\Output{$\hbtheta$} 
\Input{$(X \in \rr^{n \times d-1}, \by \in\rr^n, U,L, \epsilon, \delta,\textup{tol}, \textup{max\_iter})$} 
\caption{DPMedsweep}
\end{algorithm}

Algorithm \ref{alg:dpmedsweep} provides pseudocode for the approximate DP variant of Medsweep; the following theorem provides the privacy guarantee of this mechanism.

\begin{lemma}
    The mechanism DPMedsweep, as described in Algorithm \ref{alg:dpmedsweep}, satisfies $(\epsilon, \delta)-$DP.
\end{lemma}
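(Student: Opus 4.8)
The plan is to exhibit DPMedsweep as an adaptive sequential composition of $\text{n\_mechs}$ primitive $(\tilde\epsilon,\tilde\delta)$-DP mechanisms, with $\tilde\epsilon = \epsilon/\text{n\_mechs}$ and $\tilde\delta = \delta/\text{n\_mechs}$, and then to invoke sequential composition together with invariance to post-processing. Every randomized step of the algorithm has the form ``compute a univariate statistic (a median, or the clipped med-of-demeaned-ratios $r$) and add $\textup{Laplace}(0, S/\alpha)$ noise, where $S$ is a $\beta$-smooth upper bound on its local sensitivity.'' Since each such release is one-dimensional, I would apply Theorem~\ref{laplace_smooth_sense} in the case $d=1$ with $\alpha = \tilde\epsilon/2$ and $\beta = W_{-1}(\tilde\delta\log(\tilde\delta)\exp(\tilde\epsilon/2)) - \log(\tilde\delta) - \tilde\epsilon/2$ — which is exactly how the algorithm sets $\alpha$ and $\beta$ — to certify that each step is $(\tilde\epsilon,\tilde\delta)$-DP, provided the $S$ it uses is a genuine $\beta$-smooth upper bound on the local sensitivity of the statistic being released.

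Next I would verify those smooth-sensitivity bounds for the two kinds of primitive mechanism. The closing release of the median of $\by$ uses $S_{\textup{med}}$, the $\beta$-smooth sensitivity of the median given in \eqref{medianSmoothSense} from \cite{nissim2007smooth}. Each call to the clipped ratio $r$ instead uses the bound supplied by Lemma~\ref{lem:ssMedsweepPreliminary} and its algorithmic counterpart Lemma~\ref{lemma:ssMedsweep}. The one point needing care is that these bounds must remain valid with respect to the Hamming metric on the \emph{raw} dataset even though later mechanisms act on residualized columns. This holds because every update, namely $X[l,k]\gets X[l,k]-X[l,i]\,\btheta_x[i,k]$ and $\by[l]\gets \by[l]-\btheta_y[k,j]\,X[l,k]$, is applied record-by-record with the multipliers $\btheta_x,\btheta_y$ treated as fixed constants (they are previously released noisy outputs). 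Hence a single record change in $\ndat$ induces a single coordinate change in each transformed vector, so neighbors map to neighbors and the computed $S$ correctly governs the local sensitivity of each step viewed as a function of the raw data with the earlier outputs held fixed, which is precisely the adaptive-composition setup.

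Then I would count the primitive mechanisms and match $\text{n\_mechs}$. When $d\geq 3$, the first block makes $\sum_{k=2}^{d-1}(k-1) = (d-1)(d-2)/2$ calls to the ratio mechanism (one per $\btheta_x[i,k]$), the second block makes at most $(d-1)\,\textup{max\_iter}$ such calls (the \textbf{Break} can only reduce this), and the final median of $\by$ contributes one call, for a total of at most $(d-1)(d-2)/2 + (d-1)\,\textup{max\_iter} + 1 = \text{n\_mechs}$; when $d=2$ the first block is skipped, leaving $(d-1)\,\textup{max\_iter}+1 = \text{n\_mechs}$, matching the algorithm's definitions. By adaptive sequential composition, releasing this sequence of at most $\text{n\_mechs}$ mechanisms, each $(\tilde\epsilon,\tilde\delta)$-DP, is $(\text{n\_mechs}\cdot\tilde\epsilon,\ \text{n\_mechs}\cdot\tilde\delta)$-DP, i.e. $(\epsilon,\delta)$-DP. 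The remaining operations — accumulating the entries of $\hbtheta$ and the final back-substitution loop — are deterministic functions of these released quantities, so invariance to post-processing preserves $(\epsilon,\delta)$-DP for the returned $\hbtheta$.

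The main obstacle is the data-dependent number of invocations caused by the \textbf{Break}, together with the adaptivity of the composition (each mechanism's input is a residual depending on earlier noisy releases). I would handle this by framing the whole procedure as adaptive composition in which the stopping decision is itself a post-processing of already-published DP outputs (the $\btheta_y[k,j]$), so no extra budget is charged for it, and by bounding the realized number of steps uniformly by $\text{n\_mechs}$, since terminating early only lowers the cumulative privacy loss. The only other thing demanding attention is confirming the record-wise structure of the residualizing maps described above, which is what makes the per-step smooth-sensitivity computations legitimate under the raw-data neighbor relation.
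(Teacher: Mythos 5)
Your proposal is correct and follows essentially the same route as the paper, whose proof is a one-line appeal to the fact that DPMedsweep consists of $\textup{n\_mechs}$ primitive mechanisms, each $(\epsilon/\textup{n\_mechs}, \delta/\textup{n\_mechs})$-DP, composed sequentially. Your write-up simply fills in the details the paper leaves implicit (the count of primitive calls, the validity of the smooth-sensitivity bounds under the record-wise residualization, the adaptive composition, the \textbf{Break} as post-processing), all of which are handled correctly.
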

\begin{proof}
    This follows from the fact that the primitive mechanisms used within DPMedsweep are $(\epsilon/\textup{n\_mechs},$ $\delta/\textup{n\_mechs})-$DP and that the primitive mechanisms are called $\textup{n\_mechs}$ times.
\end{proof}

The primary advantage of the approach used in Algorithm \ref{alg:dpmedsweep} is that the noise added to the output has exponential tails. However, there are several alternatives to this approach that are worth mentioning as well, particularly in cases in which noise with exponential tails is not required. First, as described above, one can alternatively formulate a DP variant of Medsweep by replacing each median evaluation with a DP median mechanism. This can be done using a variety of DP median mechanisms, including those that use smooth sensitivity, as described by \cite{nissim2007smooth}, or an approach based on the exponential mechanism provided by \cite{mcsherry2007mechanism}. Second, since the mechanism below uses several calls to primitive mechanisms, especially in the case of multivariate regressions, $\rho-$concentrated differential privacy ($\rho-$CDP) variants are also worth mentioning because this privacy guarantee provides tight sequential composition. This privacy guarantee was first described by \cite{bun2016concentrated}; also, \cite{bun2019average} describe how $\rho-$CDP mechanisms can be formulated using $\beta-$smooth upper bounds on local sensitivity. 

\section{Simulations} \label{sec:simulations}

This section provides simulations to compare the accuracy of DPDeepestReg$(\cdot)$ and DPMedsweep$(\cdot)$ with three other DP regression methods for three different data generating processes (DGPs). Specifically, for each $F\in \{N(\mu=0, \sigma^2=1), \textup{Laplace}(\mu=0, b=1), t(\mu=0, \nu=3)\},$ each $i\in\{1,\dots,n\},$ and each $n\in \{50,100,200,300,400,500\},$ we define the dataset $\ndat$ as $\{(x_i, y_i)\}_{i=1}^n,$ where 

\begin{align} \label{eqn:DGPsDescriptions}
    y_i = 1 + 2 x_i + v_i \textup{ and } x_i, v_i \sim F.
\end{align}

\noindent
Note that, since each $F\in \{N(\mu=0, \sigma^2=1), \textup{Laplace}(\mu=0, b=1), t(\mu=0, \nu=3)\}$ is symmetric, the mean of $y$ conditional on $x$ is equal to the median of $y$ conditional on $x,$ so the DP conditional mean and median estimators described in the next subsection estimate the same population value. These simulations were performed for each $\epsilon\in \{4,12\},$ and, for each $(\epsilon,\delta)-$DP mechanism, we set $\delta$ to $10^{-6}.$

For each simulation iteration, we estimate $\btheta$ using DPDeepestReg$(\cdot),$ DPMedsweep$(\cdot),$ and the three classic DP estimators described in the next subsection. As described previously, our aim with the implementation choices described in the next section was to err on the side of understating the relative accuracy of the two proposed estimators, particularly for choices related to both datapoint bounds (in the case of two of the three classic DP methods) and estimator bounds (in the case of the proposed approaches), which are described in more detail in the final paragraph of the next subsection.

\subsection{Tested DP Estimators}

The implementation for DPMedsweep$(\cdot)$ used the pseudocode given in Algorithm \ref{alg:dpmedsweep}. The implementation of DPDeepestReg$(\cdot)$ used the pseudocode given in Algorithm \ref{alg:DPDeepestReg}; to compute the deepest regression estimator, we simply computed the depth of all candidate regressions that pass through each pair of data points, using the function rdepth2$(\cdot)$ from \cite{segaert2017mrfdepth} that was rewritten in python.

All three of the existing DP regression estimators are based on python code from \cite{alabi2020differentially}, which is available at \url{https://github.com/anonymous-conf/dplr}. The first method is DPMedTS\_exp$(\cdot),$ which is a differentially private Theil-Sen estimator \citep{theil1950rank,sen1968estimates}. This $\epsilon-$DP mechanism is based on the mechanism provided by \cite{dwork2009differential}, but also incorporates subsequent results in the DP literature that improve accuracy of $\epsilon-$DP median mechanisms.

Second, NoisyStats$(\cdot)$ is an $\epsilon-$DP OLS estimator. This mechanism works by passing sufficient statistics for the OLS estimator through a Laplace mechanism. One disadvantage of this approach is that, whenever the $\epsilon-$DP approximation of var$(x) = \sum_i (x_i- \overline{x})^2/n$ is negative, this mechanism uses $2/3$ of the total privacy-loss budget without actually providing an estimate. For the purposes of calculating the error metrics that are reported below, these simulation iterations were simply removed before computing these error metrics; the count of the simulation iterations for which no output was returned from this mechanism is provided in Table \ref{tab:NoisyStatsFailures}. Note that this occurred most often for smaller sample sizes, the $\epsilon=4$ simulations, and the DGPs with thicker tails that we consider. 

\begin{table} 
\centering
\begin{tabular}{ c c | c c c c c c}
$\epsilon$ & DGP & $n=$ 50 & $n=$ 100 & $n=$ 200 & $n=$ 300 & $n=$ 400 & $n=$ 500\\ \hline
4 & $x_i, v_i \sim N(\mu=0, \sigma^2=1)$ & 41 & 36 & 19 & 14 & 6 & 3 \\
4 & $x_i, v_i \sim \textup{Laplace}(\mu=0, b=1)$ & 39 & 31 & 21 & 17 & 11 & 9 \\
4 & $x_i, v_i \sim t(\mu=0, \nu=3)$  & 47 & 30 & 29 & 25 & 18 & 12 \\
12 & $x_i, v_i \sim N(\mu=0, \sigma^2=1)$ & 27 & 6 & 2 & 1 & 0 & 0 \\
12 & $x_i, v_i \sim \textup{Laplace}(\mu=0, b=1)$ & 30 & 11 & 4 & 0 & 1 & 0 \\
12 & $x_i, v_i \sim t(\mu=0, \nu=3)$ & 30 & 21 & 13 & 2 & 1 & 1 \\
\end{tabular}
\caption{The number of simulation iterations, out of a total of 100 iterations, for which NoisyStats$(\cdot)$ failed to produce an estimate, for each combination of $\epsilon,$ DGP, and sample size.} \label{tab:NoisyStatsFailures}
\end{table}

Third, DPGradDesc$(\cdot)$ is an $(\epsilon,\delta)-$DP mechanism that provides a DP OLS estimate using a DP gradient descent method. Like the methods proposed here, we used $\delta=1/10^6$ for this mechanism. We initialized these gradient descent steps at the origin and chose an iteration limit of 30. and we also set the iteration limit to 30. Another choice parameter was set so that the gradient was clipped to the unit cube $[-10, 10]^2.$ These choices appeared to increase accuracy in our simulations relative to their default values, but it is possible that other choices would provide a further improvement. 

These three existing mechanisms were designed to return estimates of the dependent variable at two values of the independent variable, so we also modified DPDeepestReg$(\cdot)$ and DPMedsweep$(\cdot)$ to output estimates of the dependent variable at two values of the independent variable. Specifically, in each simulation iteration, we saved the errors of the dependent variable estimate at the 0.25 and 0.75 quantiles of the random variable $x_i$ for each of the five DP estimators considered, and, after 100 simulation iterations, we computed the mean squared error and median absolute error for each of these estimators, which are provided in Figures \ref{fig:sims_MSE} and \ref{fig:sims_MAE} respectively. These results are also discussed in more detail in the next subsection.

The first two existing DP mechanisms described above require bounds on $\{(x_i,y_i)\}_i.$ To define these bounds, we use an approach that was inspired by the DP mechanism provided by \cite{chen2016differentially}, which, as described above, outputs an approximation of the bounding box given by $[-c, c]^2,$ where $c\in \rr_+$ is as small as possible such that at least a proportion $\psi\in (0,1)$ of the data points are in $[-c, c]^2.$ Setting this choice parameter in practice requires balancing two tradeoffs. First, higher values typically result in larger datapoint bounds, which leads to a higher global sensitivity and thus a higher variance of the noise in these mechanisms. Second, since all datapoints outside of these bounds are ignored in these mechanisms, lower values tend to decrease the statistical efficiency and increase the bias of the DP estimator. Rather than use this DP mechanism, we simply set the bounds on $\{(x_i,y_i)\}_i$ using this definition of $[-c, c]^2$ directly, with $\psi=0.98$ in the simulation results presented in this section. Since this choice parameter did have a significant impact on the performance of both of these classical DP estimators, Appendix \ref{appendixC} provides the simulation results for both $\psi=0.95$ and $ \psi=1.$ The approach taken here is intended to simulate an ideal realization of the output of the preliminary DP mechanism provided by \cite{chen2016differentially}. Note that we do not account for the privacy-loss budget that would be required to compute these data-dependent bounds in an actual use case in order to err on the side of overstating the accuracy of these existing methods for a given total privacy-loss budget. In contrast, for the two proposed DP estimators, we use the non-data-dependent bounds on $\btheta$ of $\Theta=[-50, 50]^2.$ The impact of this choice is described in the final paragraph of the next subsection. As described in the paragraph preceding Lemma \ref{lemma:ssMedsweep} in more detail, we defined $c_u,c_v$ as $8/n^{0.75},$ which results in DPMedsweep$(\cdot)$ having a time complexity of $O(n^{2.5}\log(n)).$ 

\begin{figure}[ht]
    \centering
    \includegraphics[scale=.58]{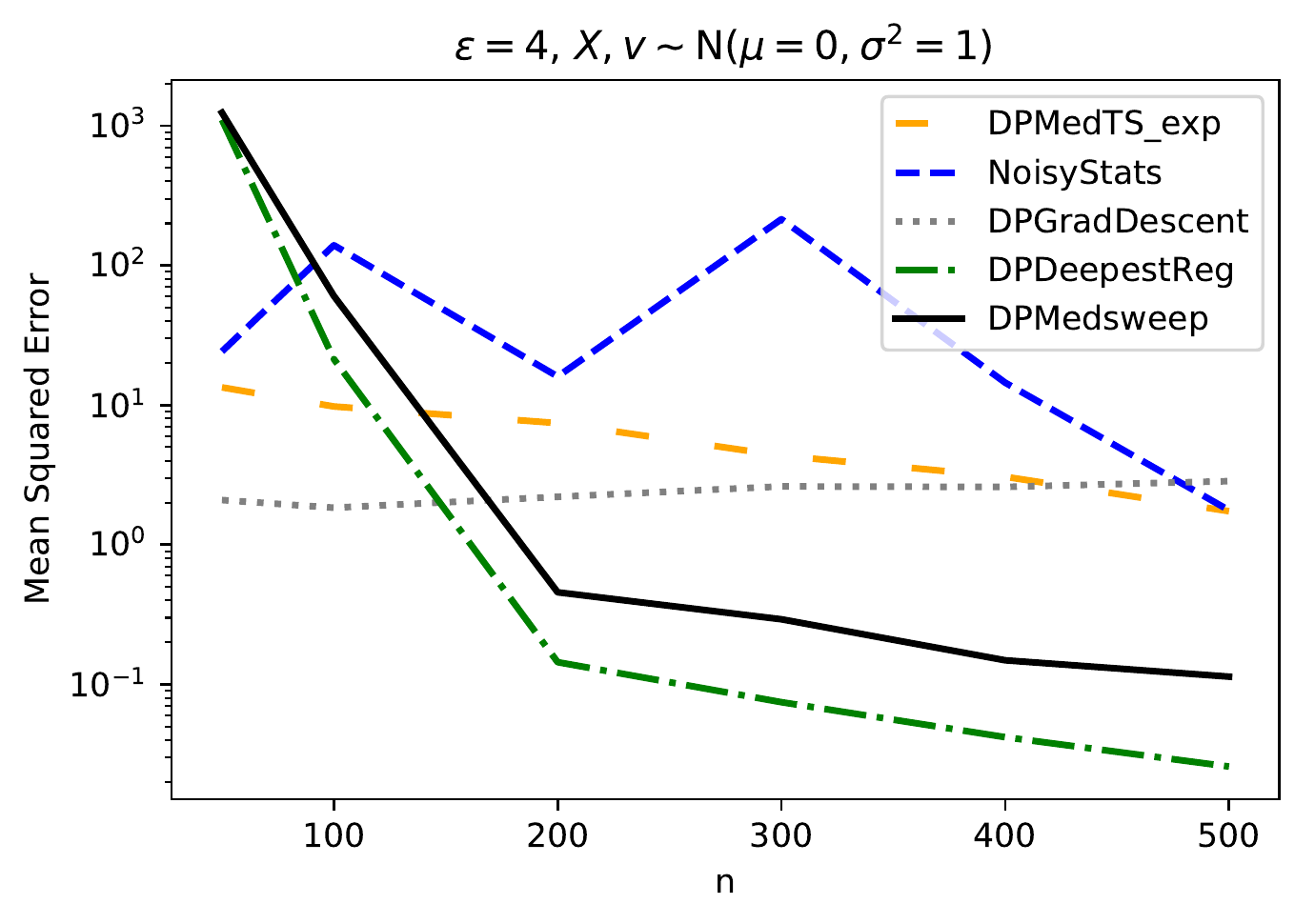} \includegraphics[scale=.58]{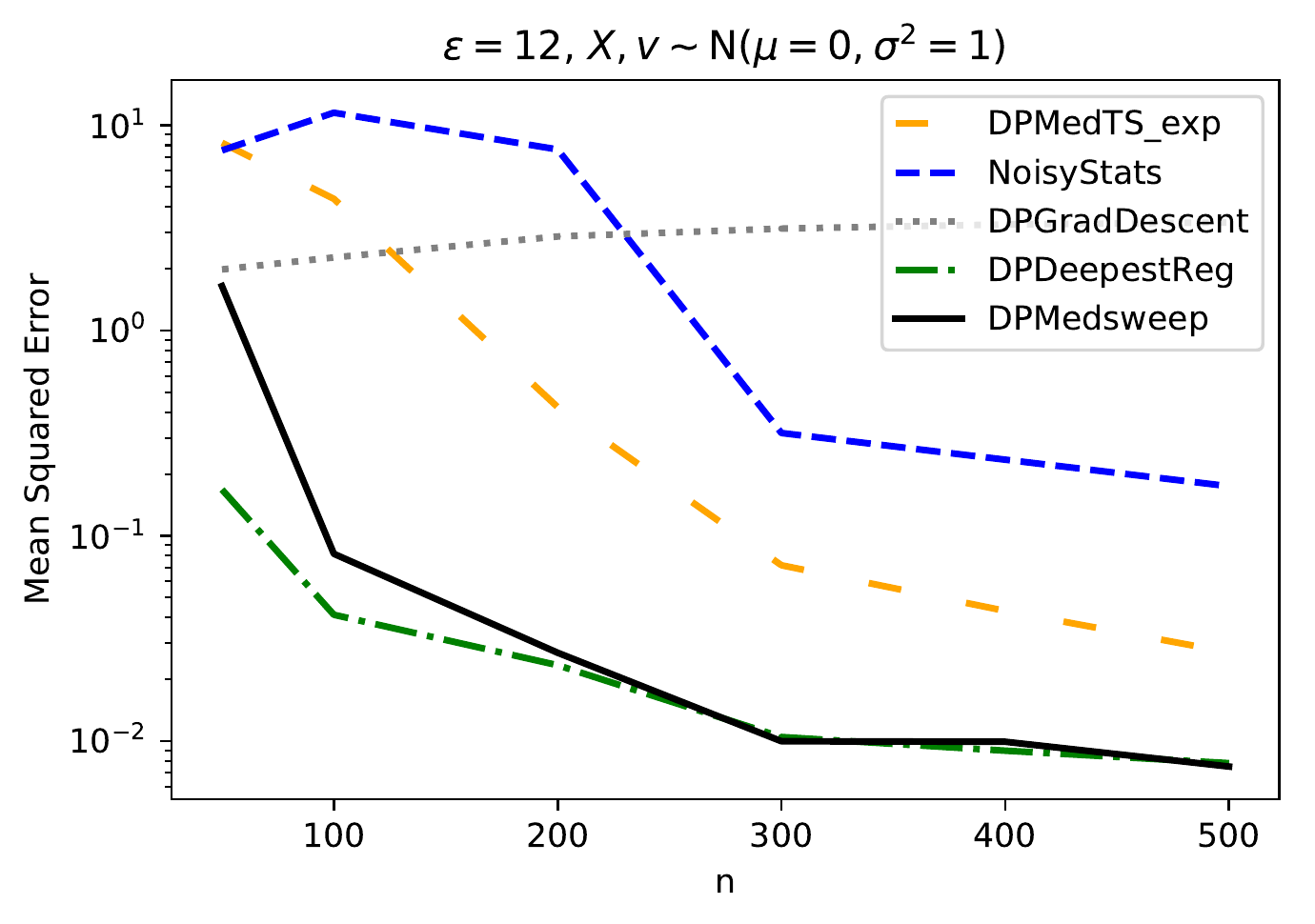} \\
    \includegraphics[scale=.58]{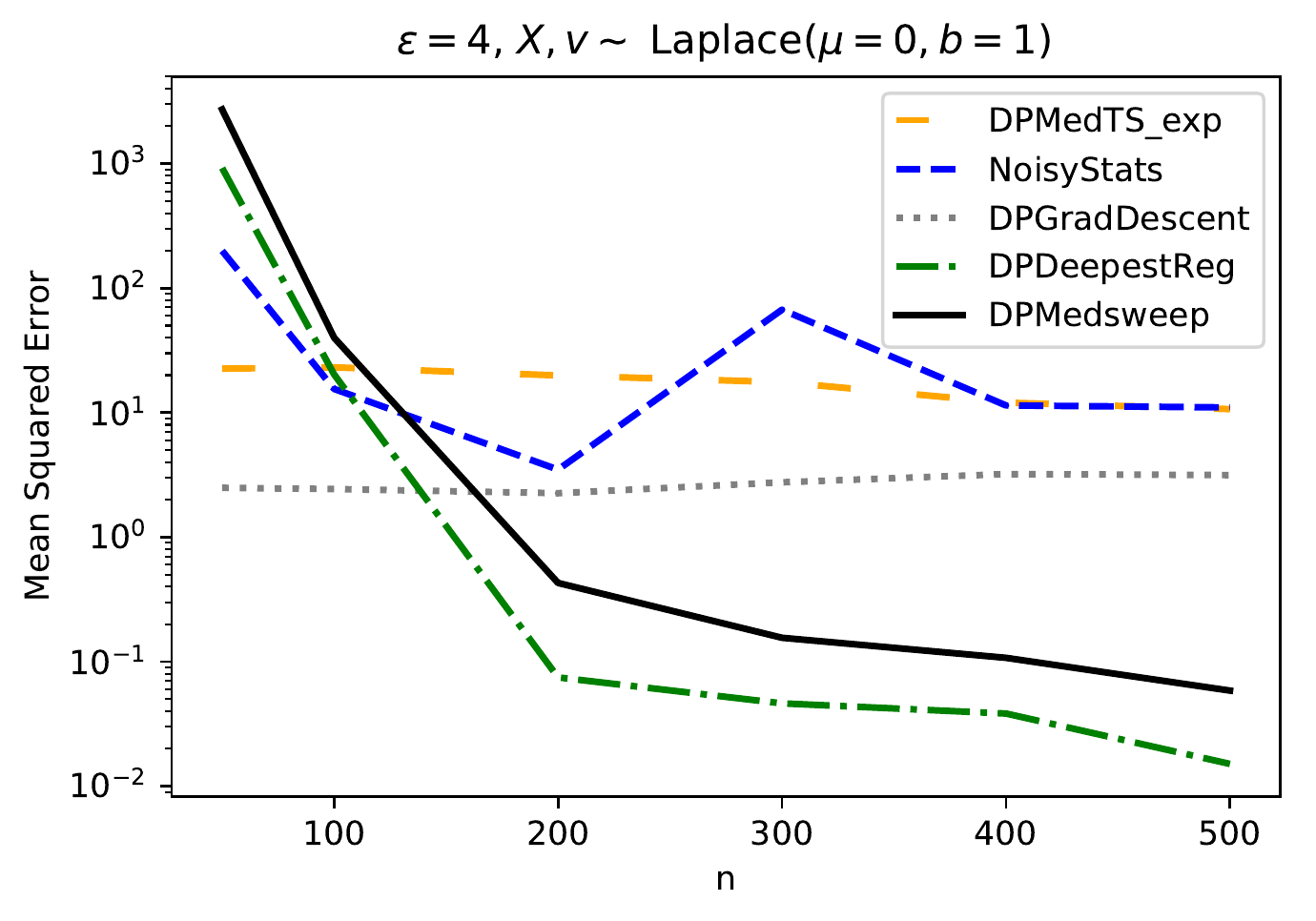} \includegraphics[scale=.58]{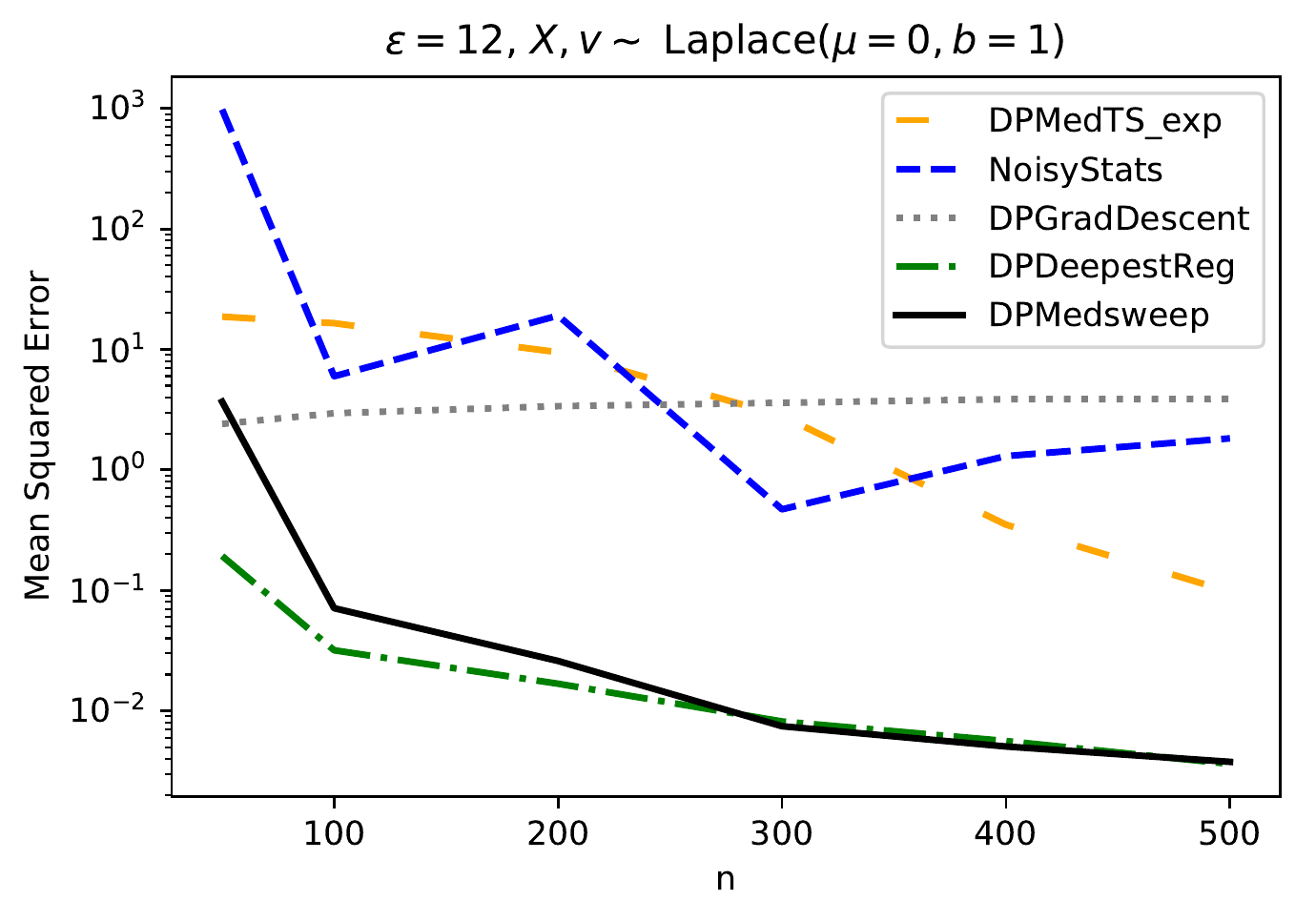} \\
    \includegraphics[scale=.58]{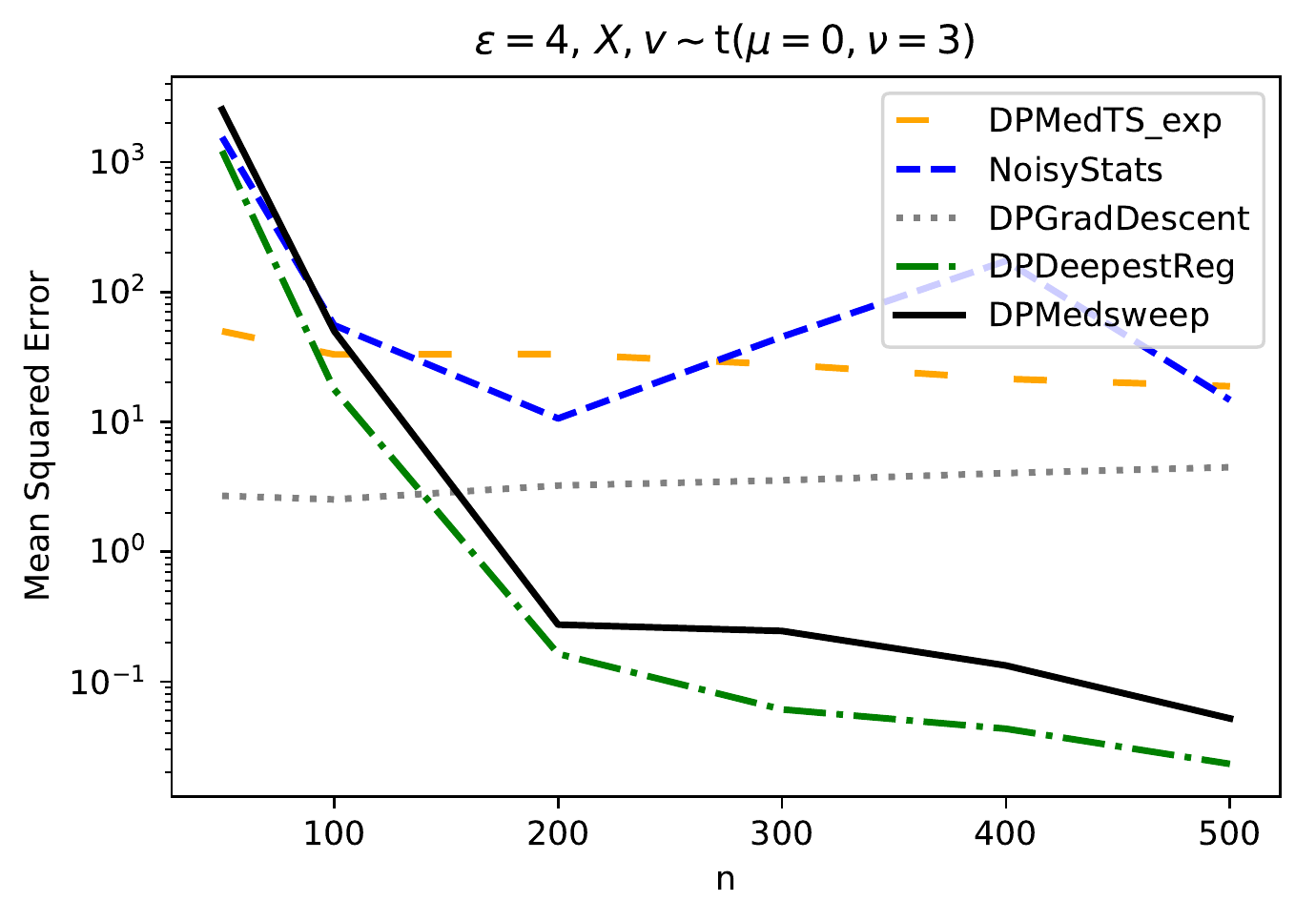} \includegraphics[scale=.58]{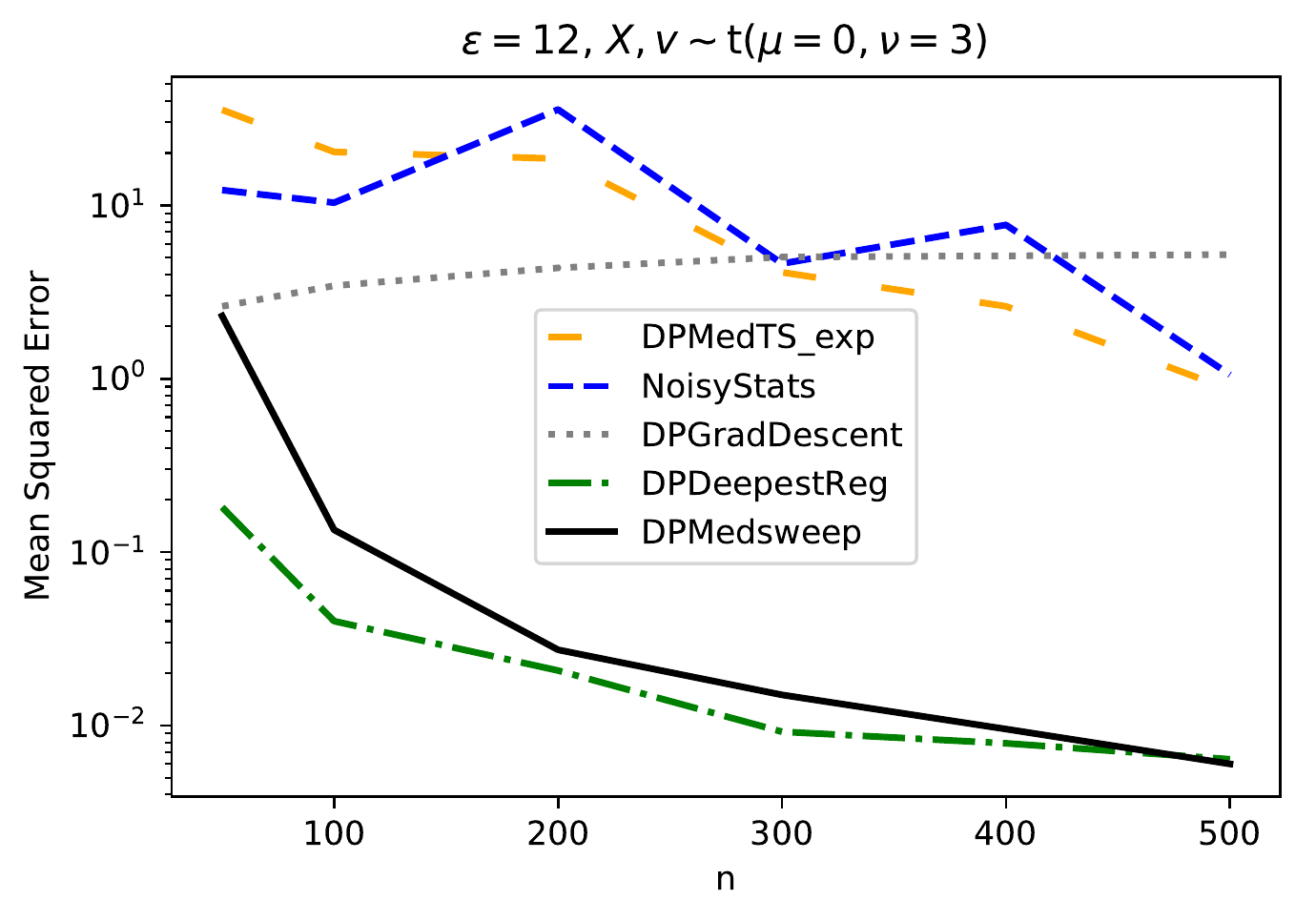}
    \caption{The mean squared error at the 25\% and 75\% percentile of the random variable $x_i$ for each DP estimator, for each DGP $x_i, v_i \sim N(\mu=0, \sigma^2=1)$ (top), $x_i, v_i \sim \textup{Laplace}(\mu=0, b=1)$ (middle), $x_i, v_i \sim t(\mu=0, \nu=3)$ (bottom), and for $\epsilon=4$ (left) and $\epsilon=12$ (right). Both NoisyStats$(\cdot)$ and DPMedTS\_exp$(\cdot)$ satisfy $\epsilon-$DP; DPGradDesc$(\cdot),$ DPMedsweep$(\cdot),$ and DPDeepestReg$(\cdot)$ satisfy $(\epsilon, 10^{-6})-$DP. }
    \label{fig:sims_MSE}
\end{figure}

\begin{figure}[ht]
    \centering
    \includegraphics[scale=.58]{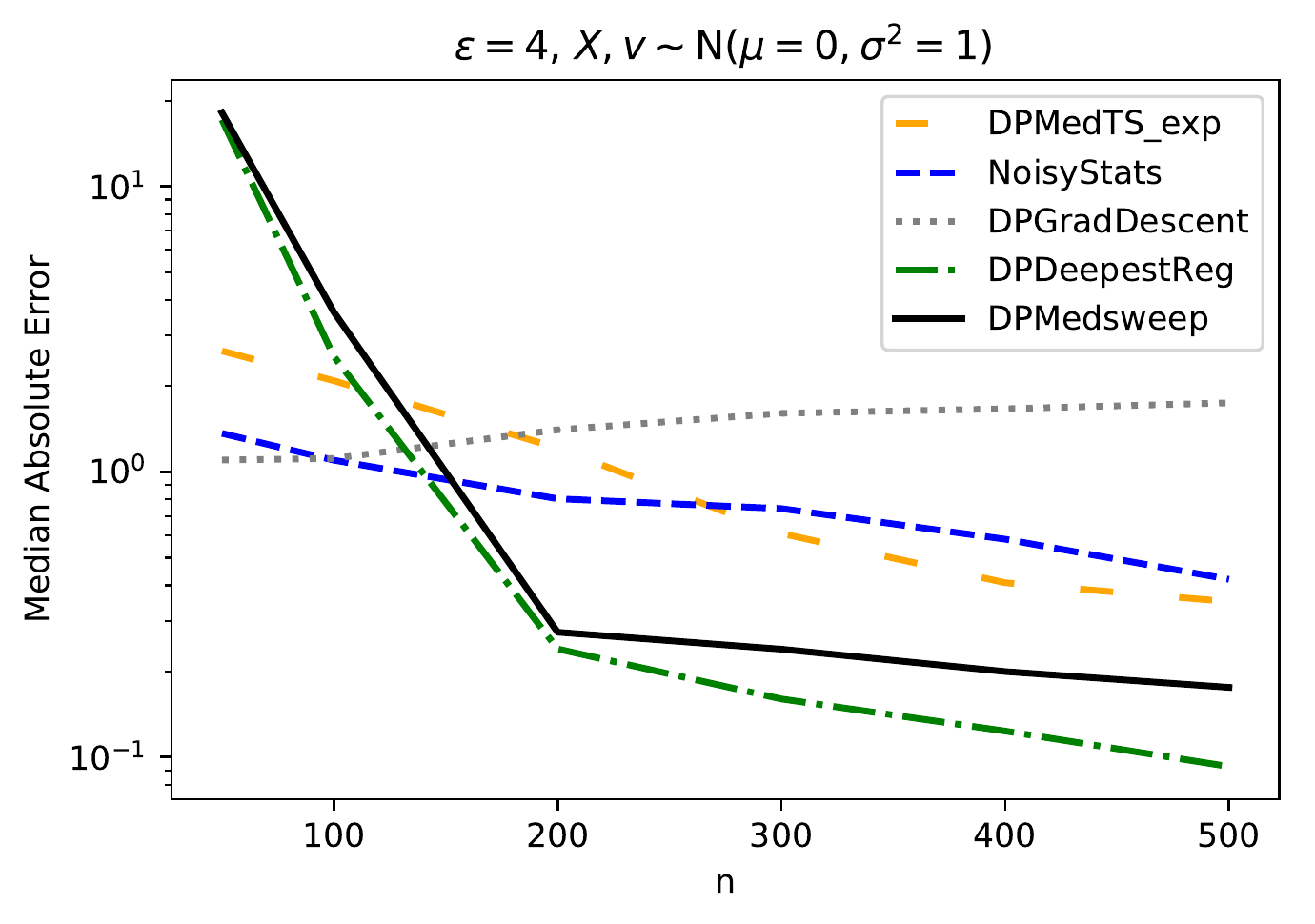} \includegraphics[scale=.58]{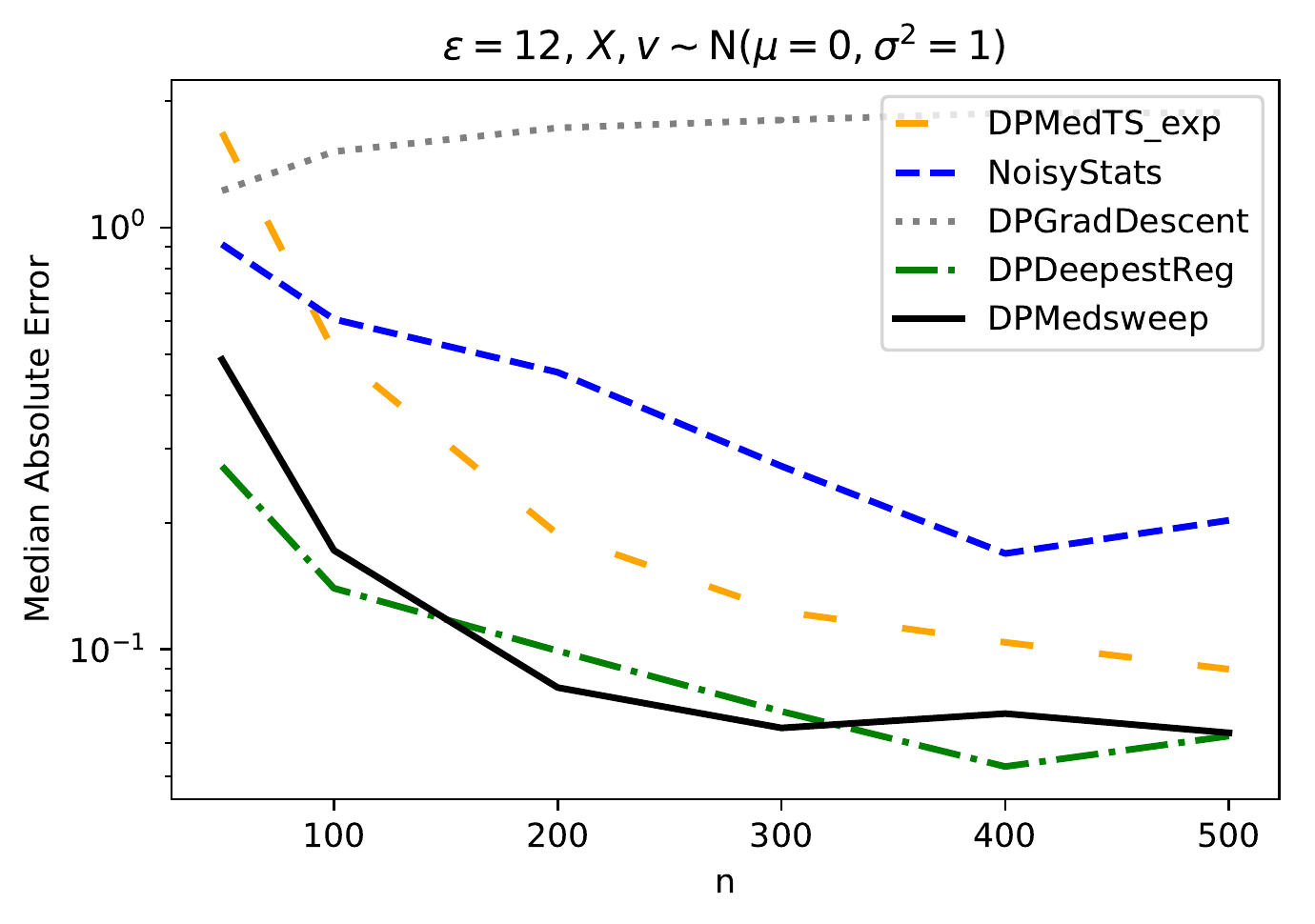} \\
    \includegraphics[scale=.58]{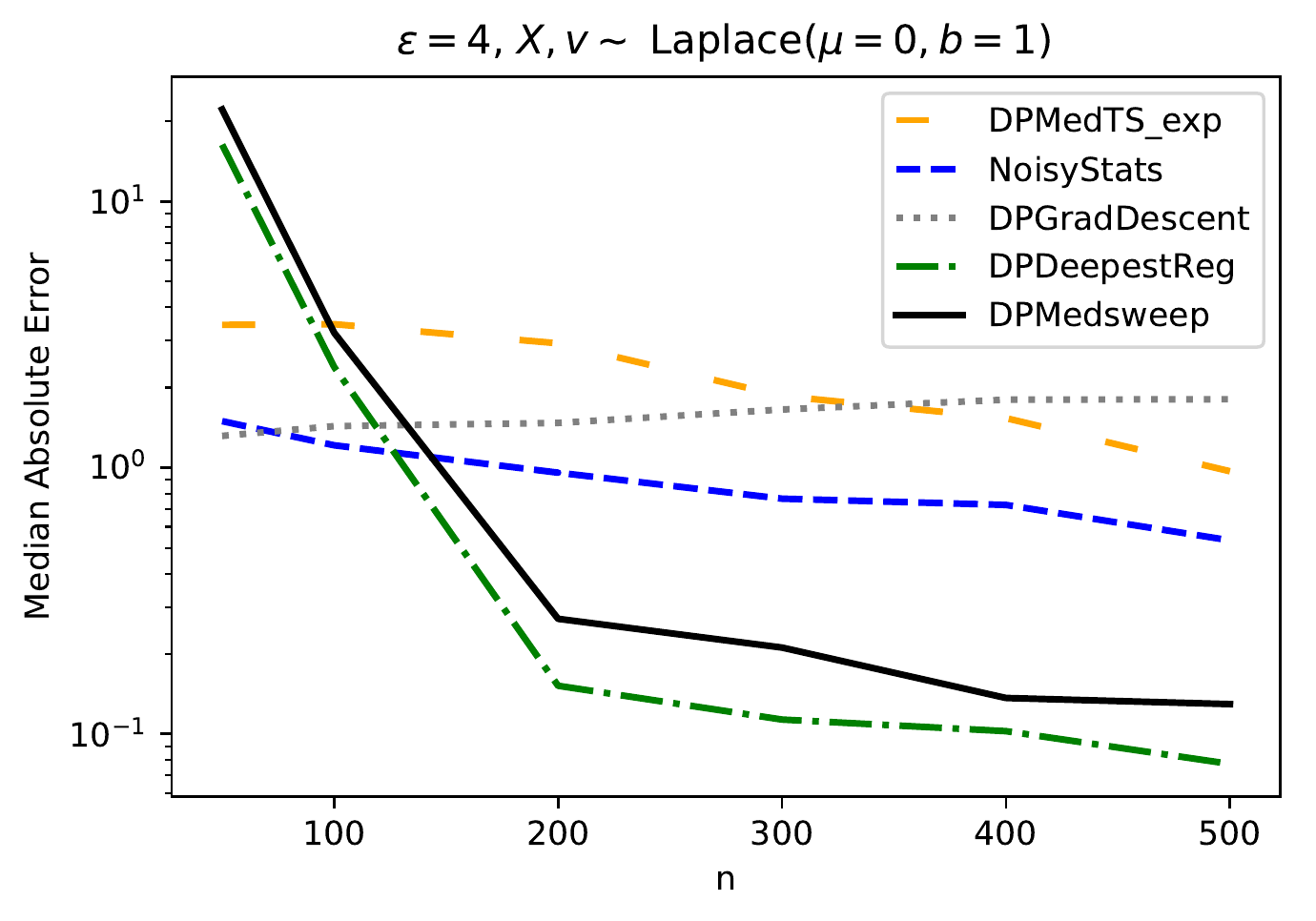} \includegraphics[scale=.58]{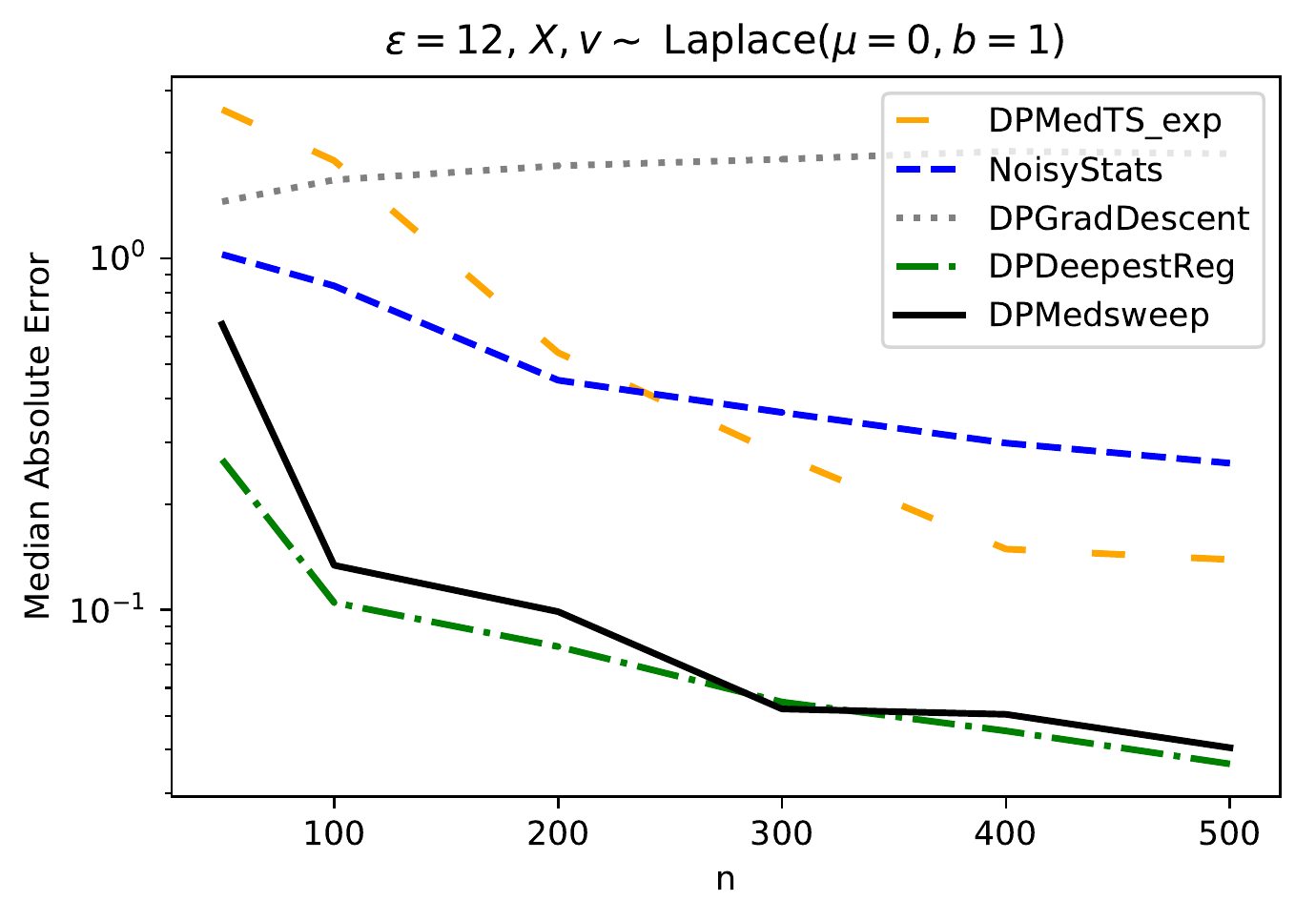} \\
    \includegraphics[scale=.58]{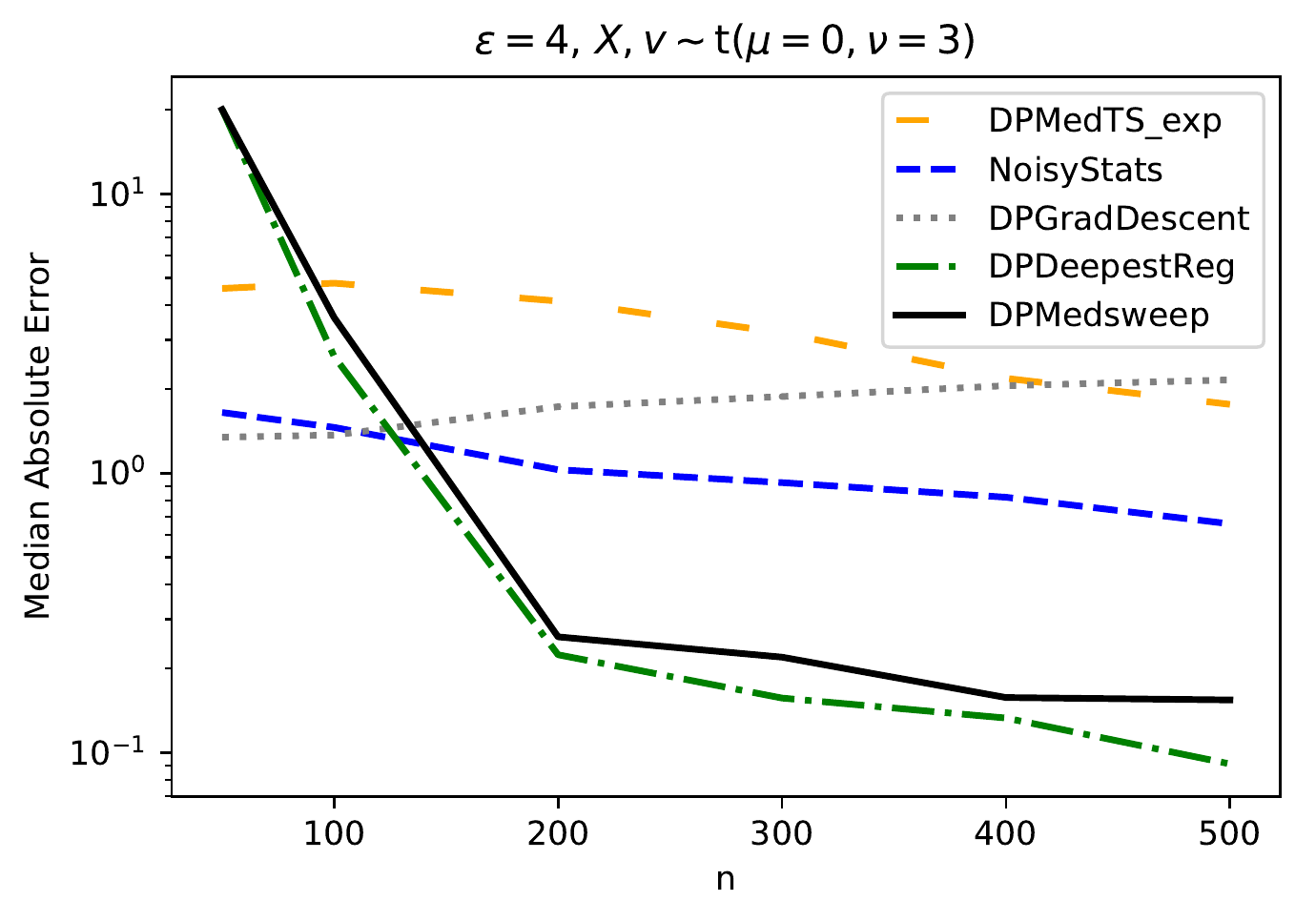} \includegraphics[scale=.58]{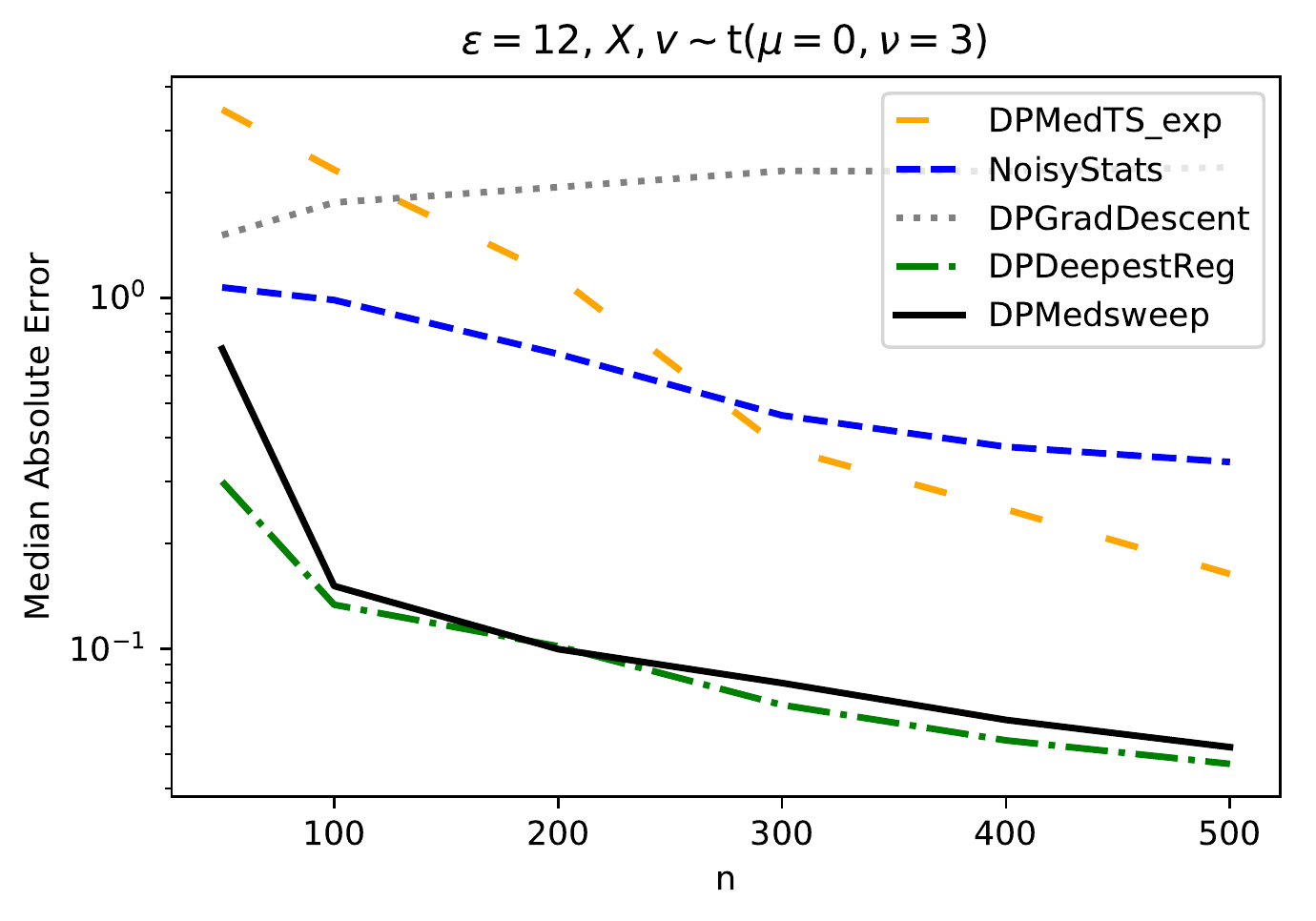}
    \caption{The median absolute error at the 25\% and 75\% percentile of $x_i$ of each DP estimator, for each DGP $x_i, v_i \sim N(\mu=0, \sigma^2=1)$ (top), $x_i, v_i \sim \textup{Laplace}(\mu=0, b=1)$ (middle), $x_i, v_i \sim t(\mu=0, \nu=3)$ (bottom), and for $\epsilon=4$ (left) and $\epsilon=12$ (right). Both NoisyStats$(\cdot)$ and DPMedTS\_exp$(\cdot)$ satisfy $\epsilon-$DP; DPGradDesc$(\cdot),$ DPMedsweep$(\cdot),$ and DPDeepestReg$(\cdot)$ satisfy $(\epsilon, 10^{-6})-$DP.}
    \label{fig:sims_MAE}
\end{figure}

\subsection{Simulation Results}

Before discussing the simulation results in detail, it may be worth pointing out our reasons for including both the mean squared error and the median absolute error of these methods. Both of these error metrics are examples of loss functions that one may seek to minimize with the choice of an estimation strategies. If this is done, the mean squared error may be regarded as the more prudent choice, in the sense that the upper tail of the absolute errors are given more weight. However, since NoisyStats$(\cdot)$ has an infinite expected absolute error, the mean squared error estimates for this mechanism do not converge in the number of simulation iterations. Thus, we choose to include median absolute error metrics solely because the population counterparts of these metrics exist for each mechanism, rather than because the invariance of median absolute error on the magnitude of the largest errors is an advantage in the context of selecting an estimator.

In the case of both mean squared error and median absolute error, one interesting feature of the simulation results is the consistency of DPGradDescent$(\cdot)$ across DGPs, values of $\epsilon,$ and sample sizes. This mechanism does not appear to exhibit a gain in accuracy as the sample sized increases; although, as described previously, this may be due to the fact that we only performed limited tuning of the choice parameters of this method. This mechanism also appears to outperform other estimators in terms of median absolute error and mean squared error in all DGPs when $\epsilon=4$ and $n\in\{50,100\}.$ In contrast, in the case of DPMedTS\_exp$(\cdot),$ both the error metrics we consider are more obviously decreasing in the sample sizes for most of the DGPs and $\epsilon$ combinations we consider. It is interesting to note that even the exponential tails of the Laplace distribution appear to be sufficiently thick to make the impact of sample size less apparent than for the normal distribution for DPMedTS\_exp$(\cdot).$ Since the NoisyStats$(\cdot)$ estimator does not have finite expected squared error, we will instead focus on the median absolute errors for this estimator. The impact of the tail behavior of the DGP on the statistical efficiency, \textit{i.e.}, the rate at which errors decrease as the sample size increases, appears to be less obvious in this case. For both DPMedTS\_exp$(\cdot)$ and NoisyStats$(\cdot),$ higher values of $\epsilon$ also appear to increase the statistical efficiency.

The two estimators proposed in this paper appear to provide similar performance to one another, but DPDeepestReg$(\cdot)$ appears to provide a slight gain in accuracy over DPMedsweep$(\cdot).$ These two estimators also appear to compare favorably to the classic DP regression estimators in these simulations when either the sample size or epsilon is sufficiently large. Also, note that this accuracy improvement is most notable in cases in which the tails of the DGP are thicker. 

To assess computational costs of the proposed approaches we also measured the median runtime for each sample size. On an ordinary laptop, in each simulation iteration with a sample size of 500, the median runtime was 5.8 seconds for DPDeepestReg$(\cdot)$ and 1.9 seconds for DPMedsweep$(\cdot).$ Since the vast majority of the runtime of DPMedsweep$(\cdot)$ was spent computing the smooth sensitivity, the runtime of this mechanism can be improved by reducing $\alpha,$ as defined in in Lemma \ref{lemma:ssMedsweep}, below $3/4,$ at the cost of increasing the noise scale used within this mechanism.

One advantageous feature of the proposed approaches in these simulations is that $\Theta$ can be defined as progressively larger sets as the sample size diverges without changing the distribution of the DP estimators. In other words, when the sample size is sufficiently large, the bound on the local sensitivity at distance $k$ multiplied by $\exp(-k \beta),$ \textit{i.e.}, $\exp(-k \beta) \widetilde{A}^{(k)}_{\hbtheta}(\ndat),$ typically obtains its maximum value at a relatively low value of $k,$ while the choice of $\Theta$ only impacts the value of $\exp(-k \beta) \widetilde{A}^{(k)}_{\hbtheta}(\ndat)$ when $k$ is large. In contrast, for the three classical approaches, the choice of bounds has a significant impact on the distribution of the output in all sample sizes. To explore this feature in the context of these simulations, we computed the minimum required diameter of $\Theta,$ defined as $\diam(\Theta) = \max_{\btheta_1,\btheta_2\in\Theta} \lVert \btheta_1-\btheta_2 \rVert_1,$ that would be required to change the scale of the Laplace noise used within DPDeepestReg$(\cdot)$ for at least one of the simulation iterations of any of the three DGPs we consider in these simulations, for each sample size and each $\epsilon.$ As shown in Table \ref{table:minDiam}, the resulting minimum diameter values appear to be increasing at an exponential rate in $n$ in these simulations, with a higher rate when $\epsilon=12.$

\begin{table} 
\centering
\begin{tabular}{ c | c c c c c c}
$\epsilon$ & $n = 50$ & $n = 100$ & $n = 200$ & $n = 300$ & $n = 400$ & $n = 500$\\ \hline 
4 & $2.0\times 10^2$ & $2.0\times 10^2$ & $4.4\times 10^2$ & $1.8\times 10^4$ & $9.7\times 10^5$ & $4.1\times 10^7$ \\ 
12 & $2.0\times 10^2$ & $1.1\times 10^4$ & $1.5\times 10^9$ & $2.0\times 10^{14}$ & $2.9\times 10^{19}$ & $3.1 \times 10^{24}$ \\  
\end{tabular}
\caption{The minimum value of $\diam(\Theta)$ that would be required to change the scale of the Laplace noise used within DPDeepestReg$(\cdot)$ in at least one of the simulation iterations described above. Note that $\diam([-50,50]^2)=200,$ so, for sufficiently small sample sizes, increasing $\diam(\Theta)$ from the one used in the experiments would increase this scale value in at least one simulation iteration.} \label{table:minDiam}
\end{table}

\section{Discussion} \label{sec:discussion}

This paper proposes several approximate DP and RDP estimators based on maximizing notions of statistical depth, including the Tukey median and the deepest regression estimator. An approximate DP mechanism for the Medsweep regression estimator is also proposed. Afterward, simulations are provided for both of the proposed approximate DP regression estimators, and these estimators appear to compare favorably to classic DP regression estimators when either the sample size or epsilon is large. 

One reason the input requirement of bounds on the data or the estimator itself is an area of emphasis in this paper is that we are not aware of approximate DP location or regression estimators that provide a reasonable balance between accuracy and privacy in the absence of strong assumptions on the population distribution, as described in Sections \ref{sec:intro} and \ref{sec:relatedLit}. These assumptions are unlikely to hold in practice for many use cases of interest, especially for use cases in which one or more variables exhibit thick tail behavior, and we are also not aware of a way to check these assumptions using a DP mechanism without making other strong distributional assumptions. 

There are also a number of areas for further research in the intersection of DP estimators and statistical depth. First, \cite{rousseeuw1999regression} provide a quantile regression estimator based on regression depth, along the lines of \cite{koenker1978regression}, and it appears that similar techniques to the ones used here can be used to formulate $\beta-$smooth upper bounds on the local sensitivity of these estimators. Second, improving the bound described in Theorem \ref{lemma:ssMedsweep} would also be advantageous for multivariate regression estimation. Third, since inference is often one of the main goals when using regressions, one important area for further work is the development of methods related to DP inference on the estimators described here.

\section*{Acknowledgements}
The author thanks John M. Abowd, Brian Finley, Daniel Kifer, Roger Koenker, and Tucker McElroy for their insightful comments.

\appendix

\section{Probability Bounds on Halfspace Depth} \label{appendixA}

\noindent
In this section we will provide results on convergence of the function

\begin{align*}
    L_n(\btheta, \bu) = \frac{1}{n} \sum_i^n \mathbf{1}_{[0,\infty)}\left( \bu^\top (\bxi - \btheta)  \right),
\end{align*}

\noindent
to its population counterpart,

\begin{align*}
    L(\btheta, \bu) = \int \mathbf{1}_{[0,\infty)}\left( \bu^\top (\bx - \btheta)   \right) \textup{d} F(\bx).
\end{align*}

\noindent
Let $\mathcal{C}_L$ denote the class with elements given by $\{ \bx : \bu^\top (\bx - \btheta) \geq 0\}.$ To find uniform error bounds for $L_n( \btheta, \bu)$ we will start by bounding the  $n^{\textup{th}}$ shatter coefficient of the class $\mathcal{C}_L,$ which is the maximum number of nonempty intersections of $\ndat$ with elements of the class $\mathcal{C}_L$ \citep{vapnik2015uniform}. More formally, the $n^{\textup{th}}$ shatter coefficient is defined as

\begin{align}
    s(\mathcal{C}_L, n) = \max_{\ndat} \card \{\ndat \cap C : C \in \mathcal{C}_L\}.
\end{align}

\begin{lemma} \label{Lvcdim}
    The $n^{\textup{th}}$ shatter coefficient of the class of sets $\mathcal{C}_L$ with elements given by $\{\bx\in\rr^{d}:  \bu^\top (\bx - \btheta) \geq 0  \},$ where  $\bu \neq \boldsymbol{0}$ and $\btheta\in \Theta,$ satisfies $s(\mathcal{C}_L, n) \leq 2 (n-1)^d + 2.$
\end{lemma}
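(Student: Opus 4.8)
The plan is to read the shatter coefficient as a count of the distinct traces $\ndat \cap C$ that the sets $C\in\mathcal{C}_L$ leave on a worst-case $n$-point configuration, and to bound this count by dualizing to a hyperplane arrangement. First I would observe that every set in $\mathcal{C}_L$ has the form $\{\bx : \bu^\top \bx \ge \bu^\top\btheta\}$, i.e. it is a closed halfspace $\{\bx : \bu^\top \bx \ge c\}$ with $c = \bu^\top\btheta$, subject only to the restriction that $c$ lie in $\{\bu^\top\btheta : \btheta\in\Theta\}$. Since enlarging a class can only increase its shatter coefficient, it suffices to bound $s(\cdot,n)$ for the class of \emph{all} closed halfspaces in $\rr^d$, dropping the dependence on $\Theta$ entirely.

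Next I would dualize. Fixing $\ndat = \{\bx_1,\dots,\bx_n\}$, I associate to each candidate parameter $(\bu,c)\in\rr^{d+1}$ the trace $\{i : \bu^\top\bx_i \ge c\}$, which is governed entirely by the signs of the $n$ linear forms $(\bu,c)\mapsto \bu^\top\bx_i - c$. Each such form vanishes on a hyperplane $P_i\subset\rr^{d+1}$ through the origin, so the $\{P_i\}_{i=1}^n$ form a central arrangement of $n$ hyperplanes in $\rr^{d+1}$, and within any full-dimensional cell of this arrangement the sign pattern $\sign(\bu^\top\bx_i - c)$, hence the strict trace $\{i : \bu^\top\bx_i > c\}$, is constant. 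The only wrinkle is that $\mathcal{C}_L$ uses closed halfspaces, so a trace might be produced by a parameter lying on some of the $P_i$. I would dispose of this by noting that replacing $c$ by $c-\eta$ for $\eta>0$ smaller than the gap to the next-smaller value of $\bu^\top\bx_i$ preserves the trace while moving the parameter into a full-dimensional cell; thus every closed-halfspace trace already occurs as a strict-inequality trace, and the number of distinct traces is at most the number of cells of the arrangement.

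It then remains to invoke the standard count for central arrangements: $n$ hyperplanes through the origin in $\rr^{d+1}$ bound at most $2\sum_{k=0}^{d}\binom{n-1}{k}$ cells, the maximum being attained in general position (so the bound holds for every configuration, degenerate or not, and no general-position assumption on $\ndat$ is needed). Finally I would convert this to the stated form with the elementary estimate $\sum_{k=1}^{d}\binom{n-1}{k}\le (n-1)^d$, valid for all $n\ge 1$ and $d\ge 1$, which yields $2\sum_{k=0}^{d}\binom{n-1}{k} = 2 + 2\sum_{k=1}^{d}\binom{n-1}{k} \le 2(n-1)^d + 2$, exactly the claimed bound, with the leading $+2$ accounting for the empty and full traces.

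The routine parts are the arrangement cell-count (a citable fact) and the binomial inequality. The one place demanding genuine care is the boundary bookkeeping: ensuring that closed halfspaces contribute no traces beyond those realized by the open full-dimensional cells. The threshold-lowering argument above is what closes that gap, and I expect it to be the main obstacle to a fully rigorous write-up, since it is the step where the $\ge$ in the definition of $\mathcal{C}_L$ must be reconciled with the strict sign patterns indexing the cells.
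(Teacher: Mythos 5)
Your proposal is correct. The paper itself does not argue the bound at all: it disposes of the lemma by citing Corollary 13.1 of Devroye, Gy\"orfi, and Lugosi, which is precisely the statement $s(\mathcal{C}_L,n)\leq 2(n-1)^d+2$ for halfspaces. Your argument is, in substance, a self-contained proof of that cited corollary: drop the restriction that the offset come from $\Theta$, dualize the points to a central arrangement of $n$ hyperplanes in $\rr^{d+1}$ via the linear forms $(\bu,c)\mapsto \bu^\top\bxi - c$, bound the number of sign patterns by the Cover--Schl\"afli cell count $2\sum_{k=0}^{d}\binom{n-1}{k}$ (valid for arbitrary, possibly degenerate, configurations), and finish with $\sum_{k=1}^{d}\binom{n-1}{k}\leq (n-1)^d$. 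Your handling of the closed-versus-strict boundary issue is sound: lowering the threshold to $c-\eta$ with $\eta>0$ smaller than the smallest positive gap $c-\bu^\top\bxi$ and chosen to avoid the finitely many values $c-\bu^\top\bxi$ preserves the trace and places the parameter in an open full-dimensional cell. What your route buys over the paper's is transparency --- it makes explicit that no general-position assumption on $\ndat$ is needed and where the constant $2(n-1)^d+2$ comes from --- at the cost of length; the paper's citation is shorter but leaves the reader to unpack exactly the argument you wrote out.
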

\begin{proof}
    See for example, Corollary 13.1 of \citep{devroye2013probabilistic}.
\end{proof}

Lemma \ref{Lvcdim} implies that $\sup_{\btheta} \lvert \hdepth(\btheta, \ndat)/n - \inf_{\bu\neq\boldsymbol{0}} L(\btheta, \bu) \rvert$ converges almost surely to zero at a rate of $O(1/\sqrt{n}).$ This is also already a well known result in the literature, since deriving the limiting value of the objective function is typically a first step for showing consistency of an estimate (\textit{i.e.}: $\arg \inf_{\btheta} \hdepth(\btheta, \ndat)$ in this case); see for example, \citep{masse2002asymptotics,bai2008asymptotic}. Next we will use this result to derive an explicit finite sample probability bound on $\sup_{\btheta\in\Theta, \; \bu\neq \boldsymbol{0}} \lvert L(\btheta, \bu) - L(n, \btheta, \bu) \rvert.$

\begin{lemma}\label{LcontourBound}
    Suppose $\ndat$ is composed of observations that were sampled independently from the population distribution $\Ndat.$ Then, for $\kappa > 0,$ the following bound holds

    \begin{align*}
        P\left(\sup_{\btheta\in\Theta, \; \bu\neq \boldsymbol{0}} \lvert L(\btheta, \bu) - L(n, \btheta, \bu) \rvert \geq \kappa\right) \leq 8 \left((n^2-1)^d + 1\right) \exp\left(2 \kappa (2+(2-n) \kappa)\right).
    \end{align*}
\end{lemma}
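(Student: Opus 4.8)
The plan is to read the statement as a quantitative, distribution-free uniform law of large numbers for the empirical process indexed by the halfspace class $\mathcal{C}_L$, i.e.\ a Vapnik--Chervonenkis inequality in which the polynomial shatter-coefficient bound of Lemma \ref{Lvcdim} supplies the effective cardinality of the index set. The central difficulty is that $L(\btheta,\bu)$ varies continuously in $(\btheta,\bu)$ whereas $L_n(\btheta,\bu)$ is piecewise constant in the sample, so the supremum ranges over an uncountable family; moreover, since no regularity (such as a density) is assumed on $\Ndat$, a bracketing argument keyed to the population measure is unavailable, and one must proceed in a genuinely distribution-free manner.

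Concretely, I would first symmetrize using a ghost sample $X_1',\dots,X_n'$ drawn independently from $\Ndat$, with empirical measure $L_n'$, reducing control of $\sup_{\btheta,\bu}|L-L_n|$ to control of the two-sample supremum $\sup_{C\in\mathcal{C}_L}|L_n(C)-L_n'(C)|$; this replaces the population measure by a second empirical one and thereby removes the continuum. Conditioning on the pooled $2n$ points, the supremum over $\mathcal{C}_L$ collapses onto the finitely many halfspace-induced subsets: the sample is sliced in at most $s(\mathcal{C}_L,n)$ ways and the ghost in at most $s(\mathcal{C}_L,n)$ ways, so $L_n(C)-L_n'(C)$ takes at most $s(\mathcal{C}_L,n)^2$ distinct values and a union over this many events suffices. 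For each fixed halfspace $C$, the count $n\,L_n(C)$ is Binomial$(n,L(C))$, so each event is controlled by a Chernoff/Hoeffding tail bound of the same flavor as the one used in the proof of Theorem \ref{thm:prob_bound} (or, in the symmetrized picture, by a sharp tail bound for the hypergeometric count arising from the random relabeling of the pooled sample).

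Finally I would assemble the pieces. Plugging Lemma \ref{Lvcdim} into the union bound and using $s(\mathcal{C}_L,n)^2\le(2(n-1)^d+2)^2\le 8((n^2-1)^d+1)$, which reduces to $((n-1)^d-1)^2\ge 0$, produces the prefactor $8((n^2-1)^d+1)$, while the per-halfspace tail bound, after optimizing the Chernoff parameter, yields the exponential factor $\exp(2\kappa(2+(2-n)\kappa))=\exp(-2(n-2)\kappa^2+4\kappa)$. I expect the main obstacle to be obtaining this exact exponent rather than the lossy generic constant (of the form $\exp(-n\kappa^2/32)$) that a textbook symmetrization gives: retaining the sharp $-2(n-2)\kappa^2$ leading rate, together with the finite-sample corrections encoded in the $n-2$ and the additive $4\kappa$, requires a symmetrization that does not sacrifice a constant factor in $\kappa$ and meticulous bookkeeping of the tail bound (equivalently, a sharp sampling-without-replacement inequality for the relabeled pooled sample). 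The remaining ingredients---the symmetrization reduction, the collapse to finitely many configurations, and the union bound---are routine once the shatter coefficient of Lemma \ref{Lvcdim} is in hand.
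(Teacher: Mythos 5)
Your plan has the right skeleton (shatter coefficient from Lemma \ref{Lvcdim} plus a union bound plus a binomial tail), but there is a genuine gap at exactly the step you flag as ``bookkeeping.'' The paper's proof is a one-line citation of the main result of Devroye (1982), which gives $P\bigl(\sup_{\btheta\in\Theta,\,\bu\neq\boldsymbol{0}}\lvert L(\btheta,\bu)-L_n(\btheta,\bu)\rvert\geq\kappa\bigr)\leq c\, s(\mathcal{C}_L,n^2)\exp(-2n\kappa^2)$ with $c\leq 4\exp(4\kappa-\kappa^2)$; combining this with $s(\mathcal{C}_L,n^2)\leq 2(n^2-1)^d+2$ from Lemma \ref{Lvcdim} yields the stated bound. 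The crucial point is that the prefactor involves the shatter coefficient evaluated at $n^2$ points because Devroye's argument symmetrizes against a ghost sample of size $n^2-n$ (a pooled sample of $n^2$ points); that is precisely what buys the sharp leading constant $2$ in the exponent $-2n\kappa^2$, matching the single-set Hoeffding rate. Your proposed route -- one ghost sample of size $n$, reduction to $\sup_{C}\lvert L_n(C)-L_n'(C)\rvert$, union over the at most $s(\mathcal{C}_L,n)^2$ induced configurations, then a hypergeometric/Chernoff tail -- cannot recover that constant: the initial symmetrization step costs you either a halving of $\kappa$ or a doubled variance in the two-sample comparison, so the best exponent available along that path is of order $-n\kappa^2/2$ (and typically $-n\kappa^2/8$ in the textbook form), not $-2(n-2)\kappa^2$. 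This is not a matter of meticulous constants; it is why Devroye's large-ghost-sample device (and later Talagrand's sharper bounds, mentioned in the remark following the lemma) exist at all.

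A smaller but telling point: your algebra $s(\mathcal{C}_L,n)^2\leq 8\bigl((n^2-1)^d+1\bigr)$ is correct, but it is a numerical coincidence of polynomial shatter growth. In the actual bound the quantity $8\bigl((n^2-1)^d+1\bigr)$ arises as $4\,s(\mathcal{C}_L,n^2)\leq 4\bigl(2(n^2-1)^d+2\bigr)$, i.e.\ the class is applied to the pooled sample of $n^2$ points, not squared over two samples of size $n$. So to make your write-up correct you would either need to import Devroye's theorem as a black box (which is what the paper does, making the proof immediate from Lemma \ref{Lvcdim}), or reproduce its large-ghost-sample symmetrization in full, which is substantially more than the standard two-sample argument you sketch.
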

\begin{proof}
    This a direct consequence of Lemma \ref{Lvcdim} and the main result of \citep{devroye1982bounds}, which states that there exists a constant $c$ such that for all $\kappa>0,$

    \begin{align*}
        P\left(\sup_{\btheta\in\Theta, \; \bu\neq \boldsymbol{0}} \lvert G(\btheta, \bu) - G(n, \btheta, \bu) \rvert \geq \kappa\right) \leq c s(\mathcal{C}_L, n^2) \exp\left(-2 \kappa^2 n \right),
    \end{align*}

    \noindent
    and $c \leq 4 \exp(4 \kappa - \kappa^2).$
\end{proof}

\begin{remark}
    This result prioritizes the probability bounds being explicitly computable in practice and being as tight as possible for typical sample sizes. It is worth pointing out that this is not the best theoretical rate. For example, both of the main results of \cite{talagrand1994sharper} provide uniform error bounds with a better rate of convergence, but we are not aware of explicit numerical bounds on the constants in these results for them to be used in practice.
\end{remark}

\section{Probability Bounds on Regression Depth} \label{appendixB}

\noindent
In this section we will provide results on convergence of the function

\begin{align*}
    G_n(\btheta, \bu) = \frac{1}{n} \sum_i^n \mathbf{1}_{[0,\infty)}\left( (1, \bxi^\top) \bu \; \sign(y_i - (1, \bxi^\top) \btheta) \right),
\end{align*}

\noindent
to its population counterpart,

\begin{align*}
    G(\btheta, \bu) = \int \mathbf{1}_{[0,\infty)}\left( (1, \bx^\top) \bu \; \sign(y - (1,\bx^\top) \btheta)  \right) \textup{d} F.
\end{align*}

Let $\mathcal{C}_R$ denote the class with elements given by $\{ (\bx,y) : (1, \bx^\top) \bu \; \sign(y - (1, \bx^\top) \btheta) \geq 0\}.$ To find uniform error bounds for $G_n( \btheta, \bu)$ we will start by bounding the  $n^{\textup{th}}$ shatter coefficient of the class $\mathcal{C}_R,$ which is the maximum number of nonempty intersections of $\ndat$ with elements of the class $\mathcal{C}_R$ \citep{vapnik2015uniform}. More formally, the $n^{\textup{th}}$ shatter coefficient is defined as

\begin{align}
    s(\mathcal{C}_R, n) = \max_{\ndat} \card \{\ndat \cap C : C \in \mathcal{C}_R\}.
\end{align}

\begin{lemma} \label{Rvcdim}
    The $n^{\textup{th}}$ shatter coefficient of the class of sets $\mathcal{C}_R$ with elements given by $\{(\bx, y)\in\rr^{d+1}:  \frac{y - (1, \bx^\top) \btheta}{(1, \bx^\top) \bu} \geq 0 \},$ where  $\bu \neq \boldsymbol{0}$ and $\btheta\in \Theta,$ satisfies $s(\mathcal{C}_R, n) \leq 16 ((n-1)^{d-1} + 1)^4.$
\end{lemma}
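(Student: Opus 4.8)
The plan is to reduce the shatter coefficient of $\mathcal{C}_R$ to that of two ordinary halfspace families and then invoke Lemma \ref{Lvcdim}. The starting observation is that, on a fixed configuration of points, membership of $(\bx,y)$ in a set of $\mathcal{C}_R$ is governed only by the two sign bits $\sign(y-(1,\bx^\top)\btheta)$ and $\sign((1,\bx^\top)\bu)$: writing $N_\btheta(\bx,y)=y-(1,\bx^\top)\btheta$ and $D_\bu(\bx)=(1,\bx^\top)\bu$, the defining inequality $N_\btheta/D_\bu\ge 0$ holds exactly when these two signs agree. Hence each set in $\mathcal{C}_R$ is a fixed Boolean function (the union of the two matching-sign quadrants) of one set from the numerator halfspace family $\mathcal{N}=\{\{N_\btheta\ge 0\}:\btheta\}$ and one from the denominator halfspace family $\mathcal{D}=\{\{D_\bu\ge 0\}:\bu\neq\boldsymbol{0}\}$. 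Since the labeling a point set receives from $\mathcal{C}_R$ is determined by the pair of labelings it receives from $\mathcal{N}$ and $\mathcal{D}$, the product rule for shatter coefficients of Boolean combinations gives $s(\mathcal{C}_R,n)\le s(\mathcal{N},n)\,s(\mathcal{D},n)$, up to a constant factor absorbing the $\ge$-versus-$>$ ambiguity on the measure-zero boundary sets $\{N_\btheta=0\}$ and $\{D_\bu=0\}$ and the two orientations of $\bu$.

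Next I would bound the two factors separately using Lemma \ref{Lvcdim}. The denominator sets $\{(1,\bx^\top)\bu\ge 0\}$ are halfspaces in the covariate space $\rr^{d-1}$, so Lemma \ref{Lvcdim} applied in dimension $d-1$ yields $s(\mathcal{D},n)\le 2(n-1)^{d-1}+2=2((n-1)^{d-1}+1)$. The numerator sets $\{y-(1,\bx^\top)\btheta\ge 0\}$ are (non-vertical) halfspaces in the full observation space $\rr^{d}$, a subfamily of all halfspaces there, so Lemma \ref{Lvcdim} in dimension $d$ gives $s(\mathcal{N},n)\le 2(n-1)^{d}+2=2((n-1)^{d}+1)$. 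Multiplying and then converting to a common exponent via the elementary inequality $(n-1)^{d}+1\le((n-1)^{d-1}+1)^2$, valid for all $d\ge 2$ (it reduces to $n-1\le(n-1)^{d-1}+2$, which holds since $(n-1)^{d-1}\ge n-1$), collapses the product into a power of the single quantity $((n-1)^{d-1}+1)$ and yields $s(\mathcal{C}_R,n)\le 16((n-1)^{d-1}+1)^4$ with room to spare.

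The step I expect to be the main obstacle is the bookkeeping around the numerator family. Unlike the denominator, $N_\btheta$ depends on $y$, so $\mathcal{N}$ genuinely lives one dimension higher than $\mathcal{D}$ and a priori contributes the exponent $d$ rather than $d-1$; the work is in showing this extra dimension is harmless, which here is handled crudely by the power inequality above. As a consequence the stated fourth-power bound is deliberately loose, since the honest product is only $O(n^{2d-1})$. A secondary but necessary care point is the reduction $s(\mathcal{C}_R,n)\le s(\mathcal{N},n)\,s(\mathcal{D},n)$ itself: one must confirm that the correspondence between an element of $\mathcal{C}_R$ and the pair of halfspace labelings is well defined on every point configuration, treating points lying exactly on a numerator or denominator boundary by absorbing them into the constant. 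Once these are settled, the bound follows directly from Lemma \ref{Lvcdim}.
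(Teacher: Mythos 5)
Your overall strategy---reduce membership in a set of $\mathcal{C}_R$ to membership in a few linear-threshold families and multiply shatter coefficients---is the same as the paper's, but your reduction to a product of only \emph{two} families is where the argument breaks. Membership in $\{(\bx,y) : D_\bu \cdot \sign(N_\btheta) \ge 0\}$ is \emph{not} a well-defined function of the pair $\bigl(\mathbf{1}_{\{N_\btheta \ge 0\}}, \mathbf{1}_{\{D_\bu \ge 0\}}\bigr)$: because $\sign$ is ternary, the cells $(N \ge 0,\, D < 0)$ and $(N < 0,\, D \ge 0)$ are ambiguous (a point with $N = 0$ belongs to the set regardless of $D$, and a point with $D = 0$ belongs regardless of $N$, while strict-sign points in the same cell do not). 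Absorbing this ``into the constant'' is not legitimate: for a fixed pair of dichotomies induced by $\{N \ge 0\}$ and $\{D \ge 0\}$, the dichotomy induced by $\mathcal{C}_R$ depends additionally on which points lie exactly on the hyperplanes $N_\btheta = 0$ and $D_\bu = 0$, and the number of such boundary patterns realizable on $n$ points is itself the shatter coefficient of another threshold family --- polynomial in $n$, not a universal constant. The correct fix is to track four families, $\{N \ge 0\}$, $\{N \le 0\}$, $\{D \ge 0\}$, $\{D \le 0\}$ (equivalently, two intersection classes whose union recovers a set of $\mathcal{C}_R$), which is exactly what the paper does and is where the exponent $4$ and the factor $16$ in the stated bound come from.

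Once you pass to four families, your second shortcut also fails to deliver the stated bound: treating the numerator sets as generic halfspaces in $\rr^{d}$ gives $2((n-1)^{d}+1)$ per copy, and the four-fold product $16((n-1)^{d}+1)^2((n-1)^{d-1}+1)^2$ grows like $n^{4d-2}$, which exceeds the claimed $16((n-1)^{d-1}+1)^4 \sim n^{4d-4}$ for every $d \ge 2$. You need the sharper count the paper uses: $y - (1,\bx^\top)\btheta$ has only $d$ free parameters (the coefficient of $y$ is fixed at one), so by Theorem 13.9 of Devroye et al.\ and the remark following it the numerator family also satisfies the bound $2(n-1)^{d-1}+2$, matching the denominator family; the product over four such families then gives $16((n-1)^{d-1}+1)^4$ exactly. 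Your padding inequality $(n-1)^{d}+1 \le ((n-1)^{d-1}+1)^2$ is correct, but it only compensates within the invalid two-family product and does not survive the necessary passage to four families.
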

\begin{proof}
    We will start by finding the $n^{\textup{th}}$ shatter coefficients of several intermediate classes. Specifically, let $\mathcal{C}_1$ and $\mathcal{C}_2$ be defined as

    \begin{align*}
        \mathcal{C}_1 &= \{ \{(\bx, y)\in\rr^{d+1} : (1, \bx^\top) \bu > 0 \} : \bu \neq \boldsymbol{0} \}\\
        \mathcal{C}_2 &= \{ \{(\bx, y) \in \rr^{d+1} : \sign(y -(1, \bx^\top) \btheta) \geq 0 \} : \btheta\in \Theta \} \\
        \widetilde{\mathcal{C}}_1 &= \{ \{(\bx, y)\in C_1 \cap C_2 \} : C_1 \in \mathcal{C}_1, \; C_2 \in \mathcal{C}_2 \}.
    \end{align*}

    \noindent
    Since both classes are defined by linear functions of dimension $d,$ for each $i\in \{1,2\},$ we have $s(\mathcal{C}_i, n) \leq 2 (n-1)^{d-1} + 2;$ see for example, Theorem 13.9 of \citep{devroye2013probabilistic} and the subsequent remark. Thus, after considering the case in which both classes pick out the most possible subsets of $\ndat$ simultaneously, we have $s(\widetilde{\mathcal{C}_1}, n) \leq 4 ((n-1)^{d-1} + 1)^2.$

    This same argument can be repeated to show $s(\widetilde{\mathcal{C}_2}, n) \leq 4 ((n-1)^{d-1} + 1)^2,$ where 

    \begin{align*}
        \widetilde{\mathcal{C}}_2 &= \{ \{(\bx, y) \in \rr^{d+1} : (1, \bx^\top) \bu < 0,  \; \sign(y -(1, \bx^\top) \btheta) \leq 0  \} : \bu \neq \boldsymbol{0}, \; \btheta\in \Theta \}.
    \end{align*}

    Since each element of $\mathcal{C}_R$ is given by $\widetilde{C}_1 \cup \widetilde{C}_2,$ where $ \widetilde{C}_1 \in \widetilde{\mathcal{C}}_1 $ and $\widetilde{\mathcal{C}}_1,$ a similar argument that was made when defining classes using intersections results in the bound, $s(\mathcal{C}_R, n) \leq 16 ((n-1)^{d-1} + 1)^4.$
\end{proof}

Note that this bound on the shattering coefficient also implies $\sup_{\btheta} \lvert \rdepth(\btheta, \ndat)/n - \inf_{\bu\neq \boldsymbol{0}} G(\btheta, \bu) \rvert$ converges almost surely to zero at a rate of $O(1/\sqrt{n}),$ which is already a known result in the literature; see for example, \cite{gao2020robust}. Lemma \ref{Rvcdim} also implies our required probability bound, which is provided below.

\begin{figure}[ht]
    \centering
    \includegraphics[scale=.58]{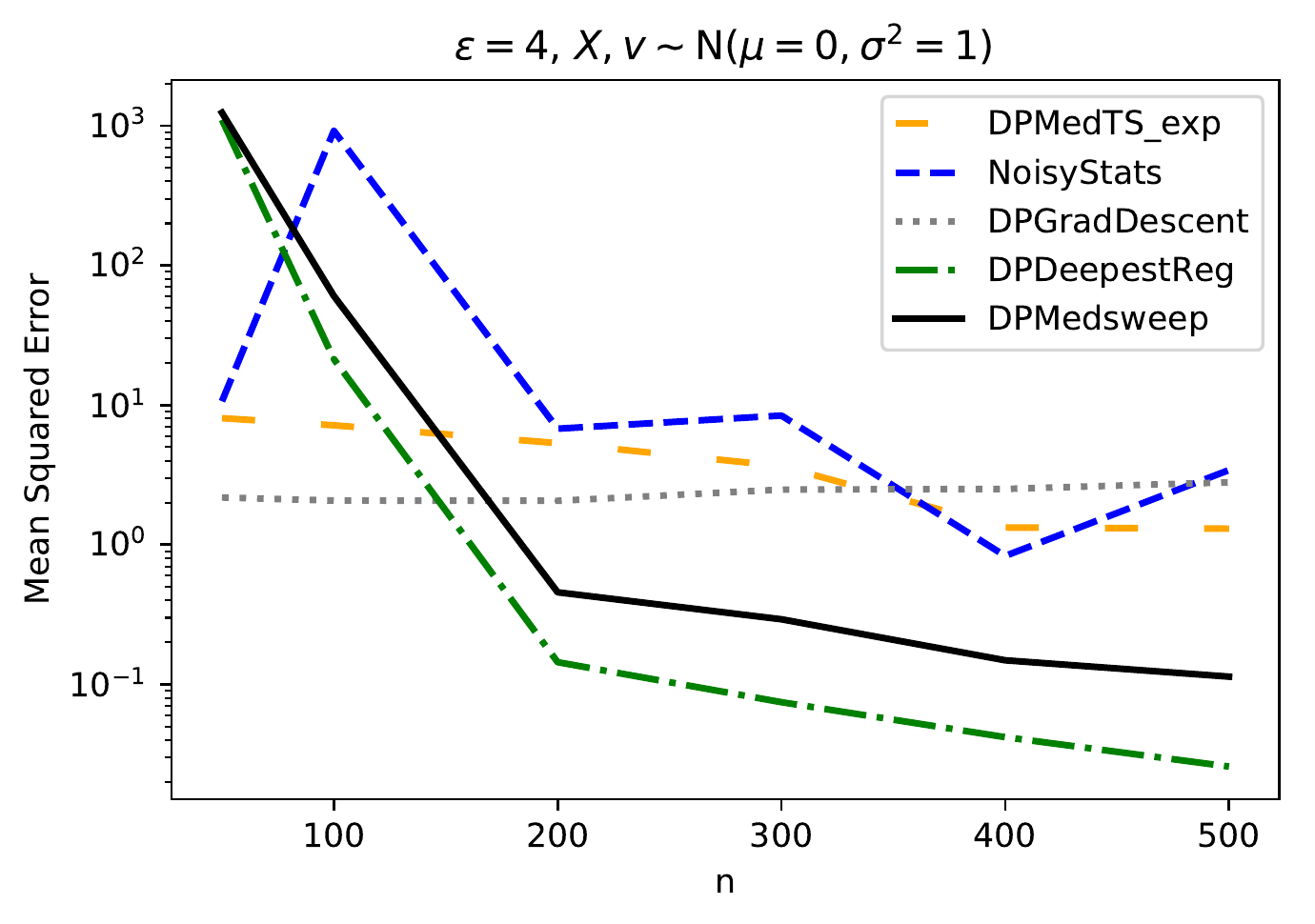} \includegraphics[scale=.58]{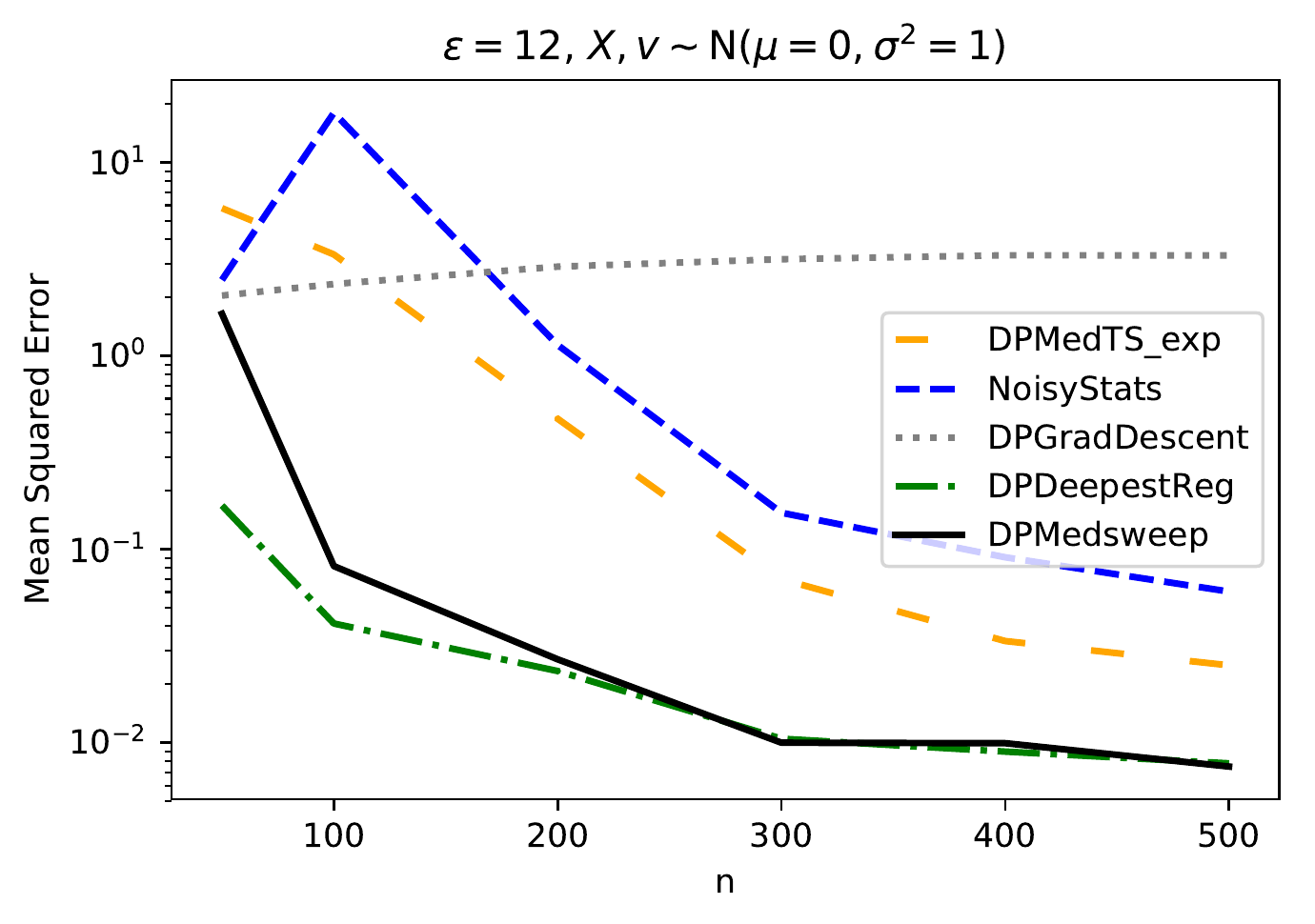} \\
    \includegraphics[scale=.58]{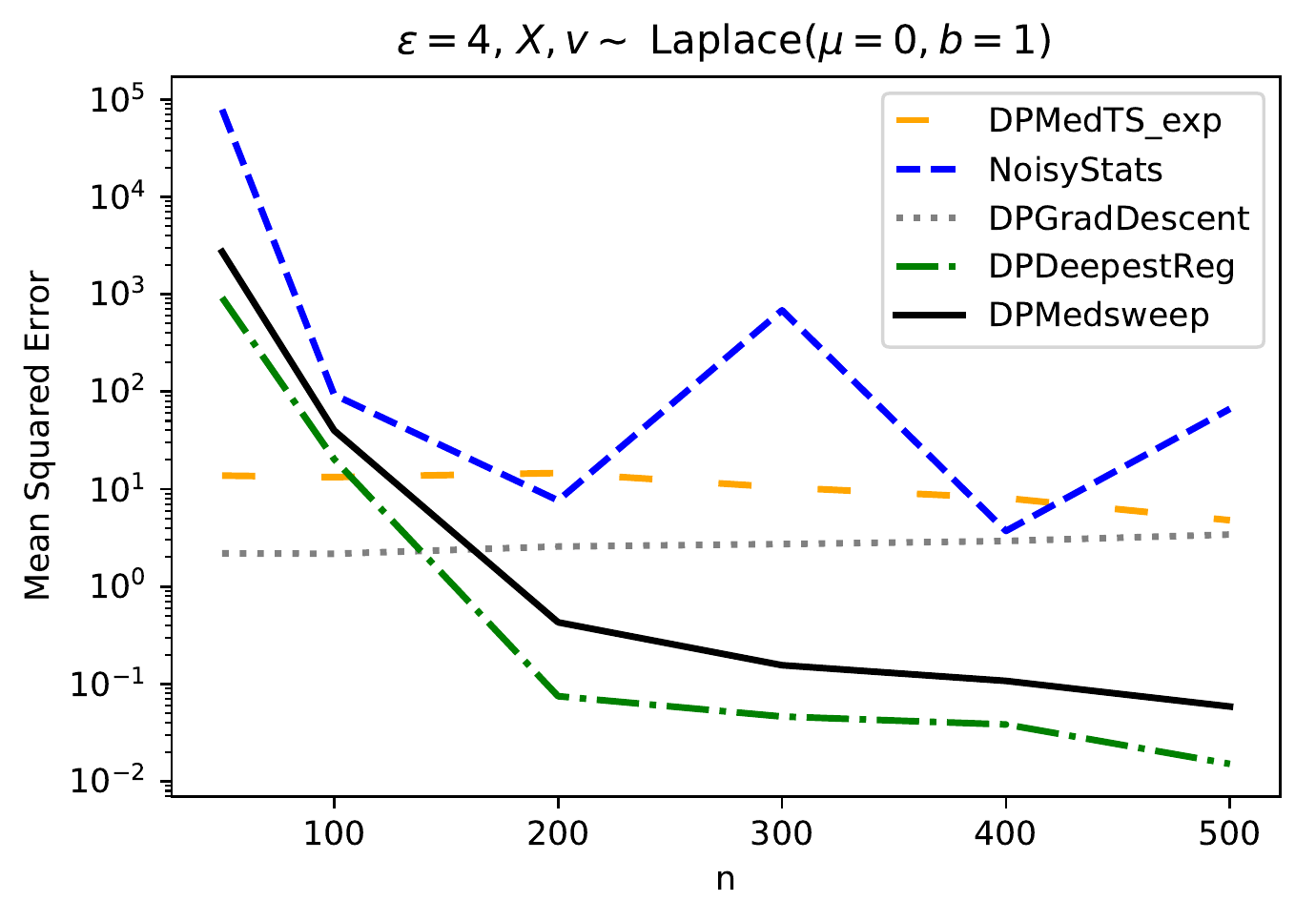} \includegraphics[scale=.58]{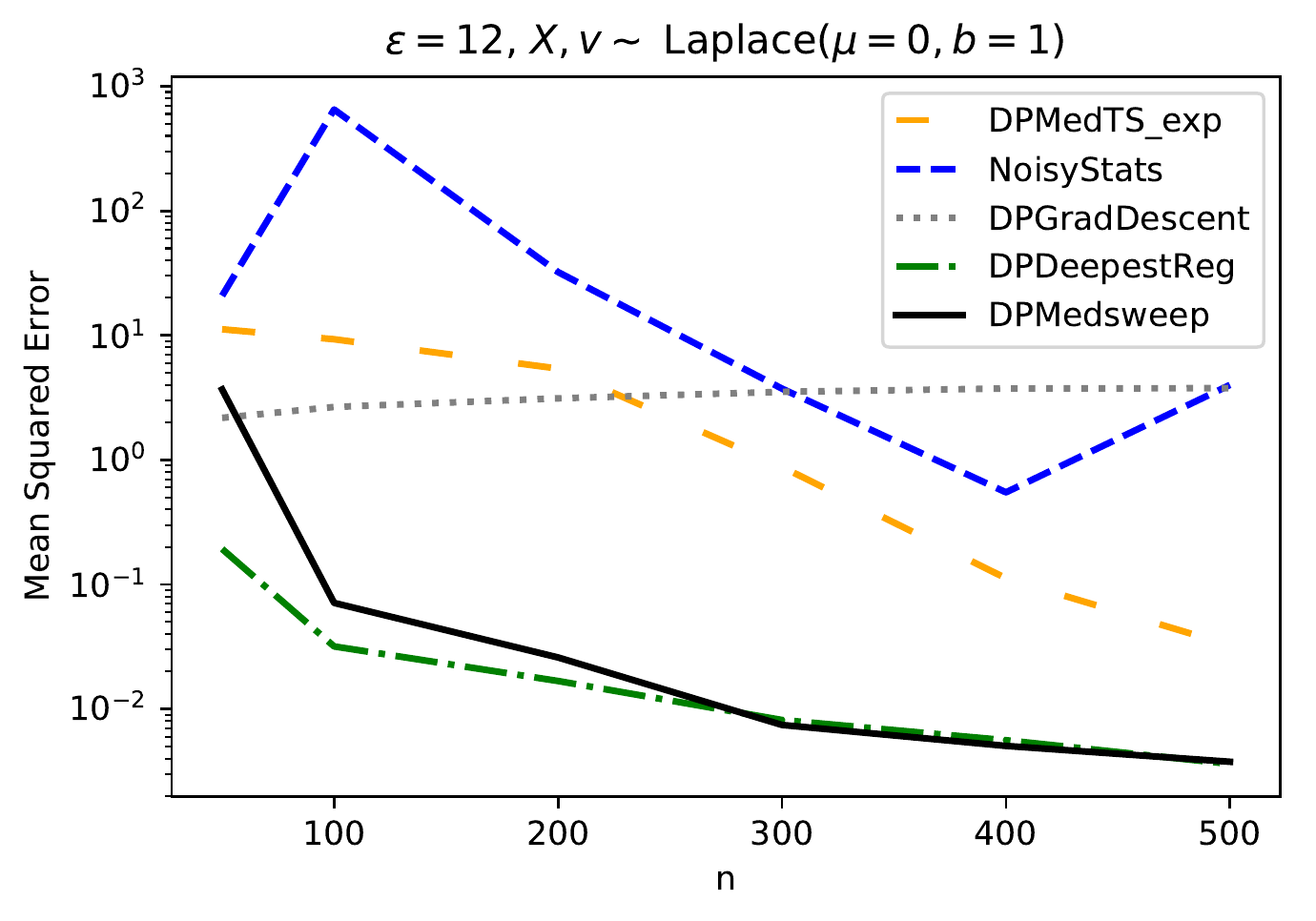} \\
    \includegraphics[scale=.58]{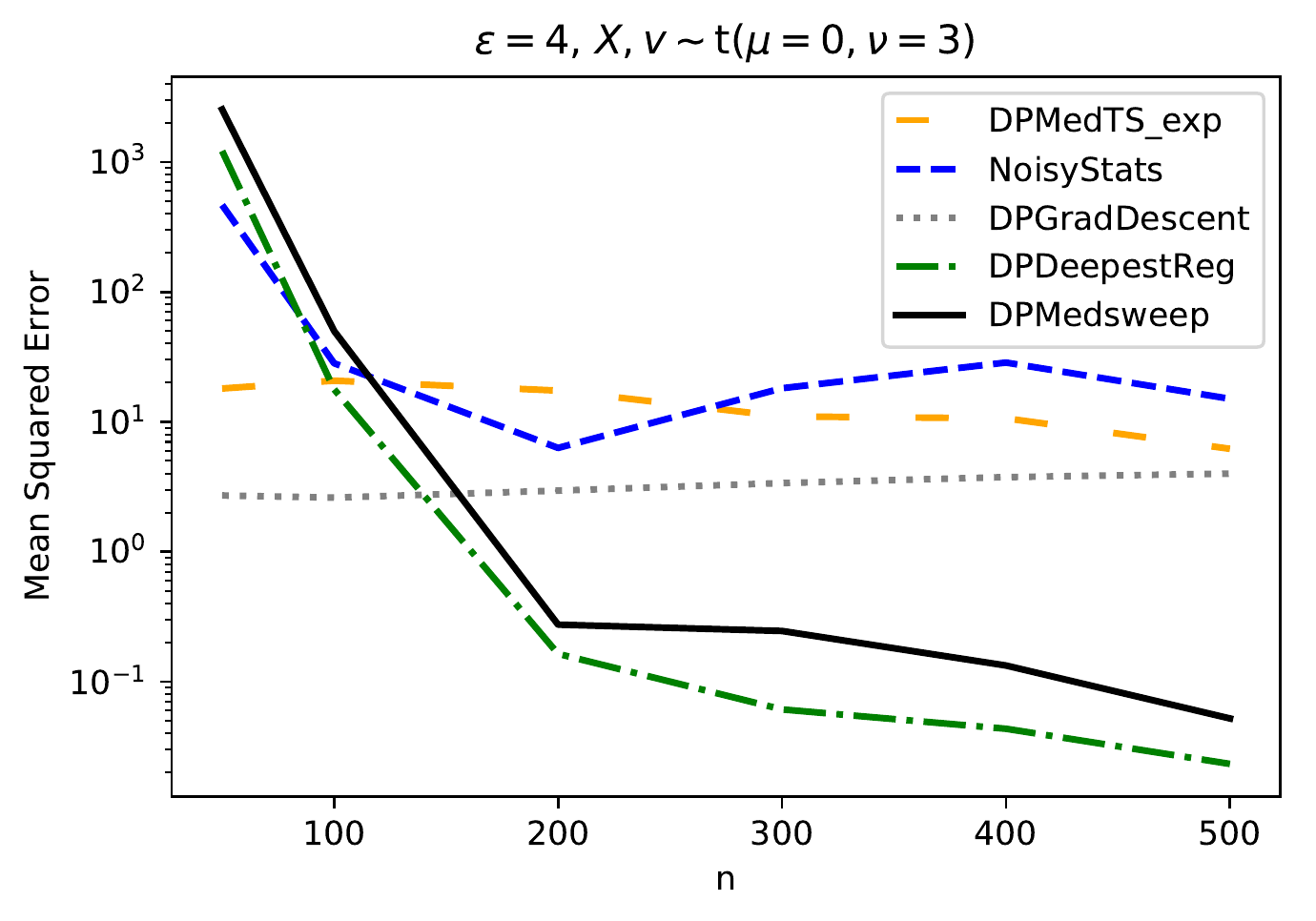} \includegraphics[scale=.58]{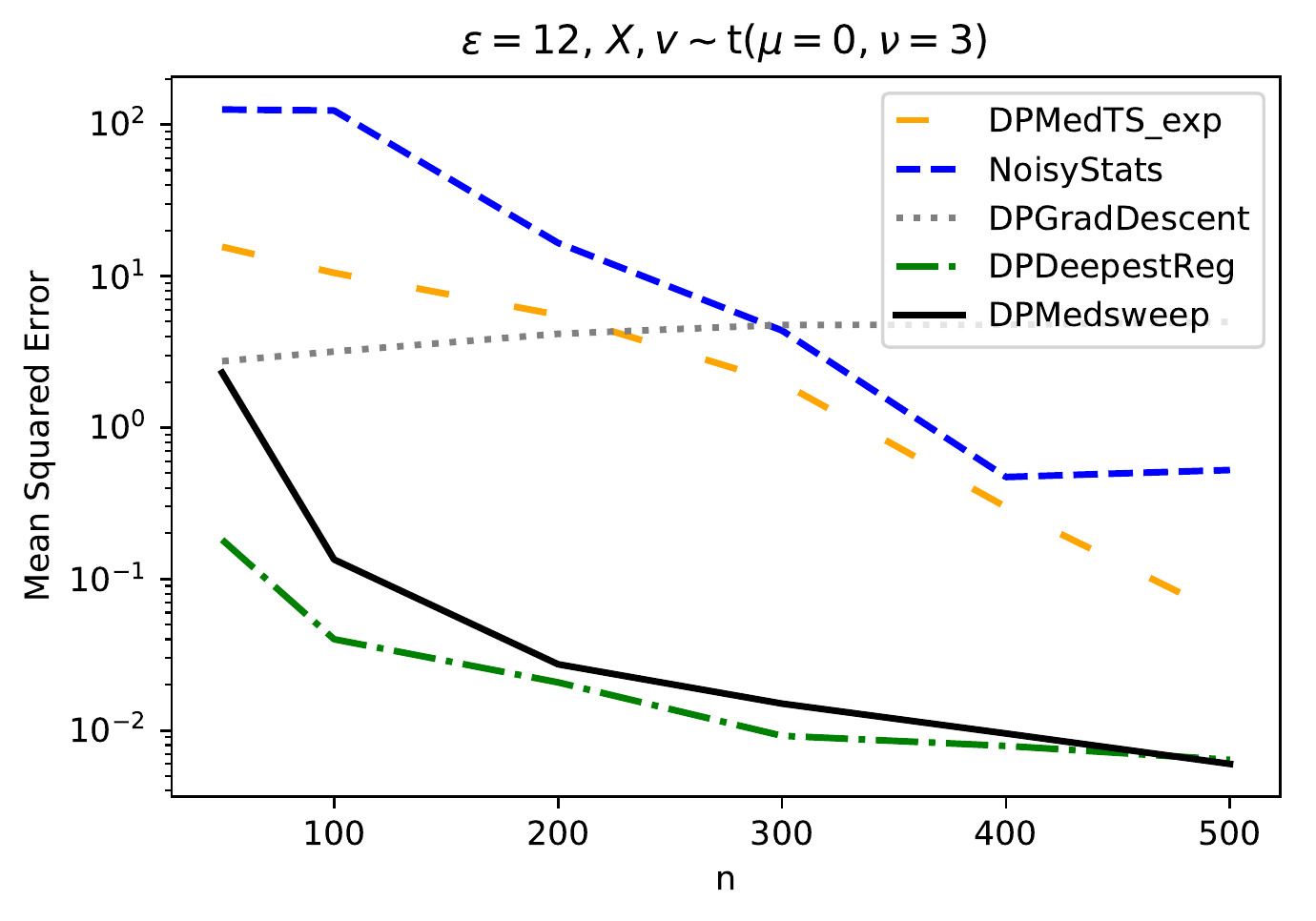}
    \caption{The mean squared error at the 25\% and 75\% percentile of $x_i$ of each DP estimator, for each DGP $x_i, v_i \sim N(\mu=0, \sigma^2=1)$ (top), $x_i, v_i \sim \textup{Laplace}(\mu=0, b=1)$ (middle), $x_i, v_i \sim t(\mu=0, \nu=3)$ (bottom), and for $\epsilon=4$ (left) and $\epsilon=12$ (right). For NoisyStats$(\cdot)$ and the DPMedTS\_exp$(\cdot)$ mechanisms, $\psi$ was set to 0.95. Both NoisyStats$(\cdot)$ and DPMedTS\_exp$(\cdot)$ satisfy $\epsilon-$DP; DPGradDesc$(\cdot),$ DPMedsweep$(\cdot),$ and DPDeepestReg$(\cdot)$ satisfy $(\epsilon, 10^{-6})-$DP.}
    \label{fig:sims_MSE_05}
\end{figure}

\begin{figure}[ht]
    \centering
    \includegraphics[scale=.58]{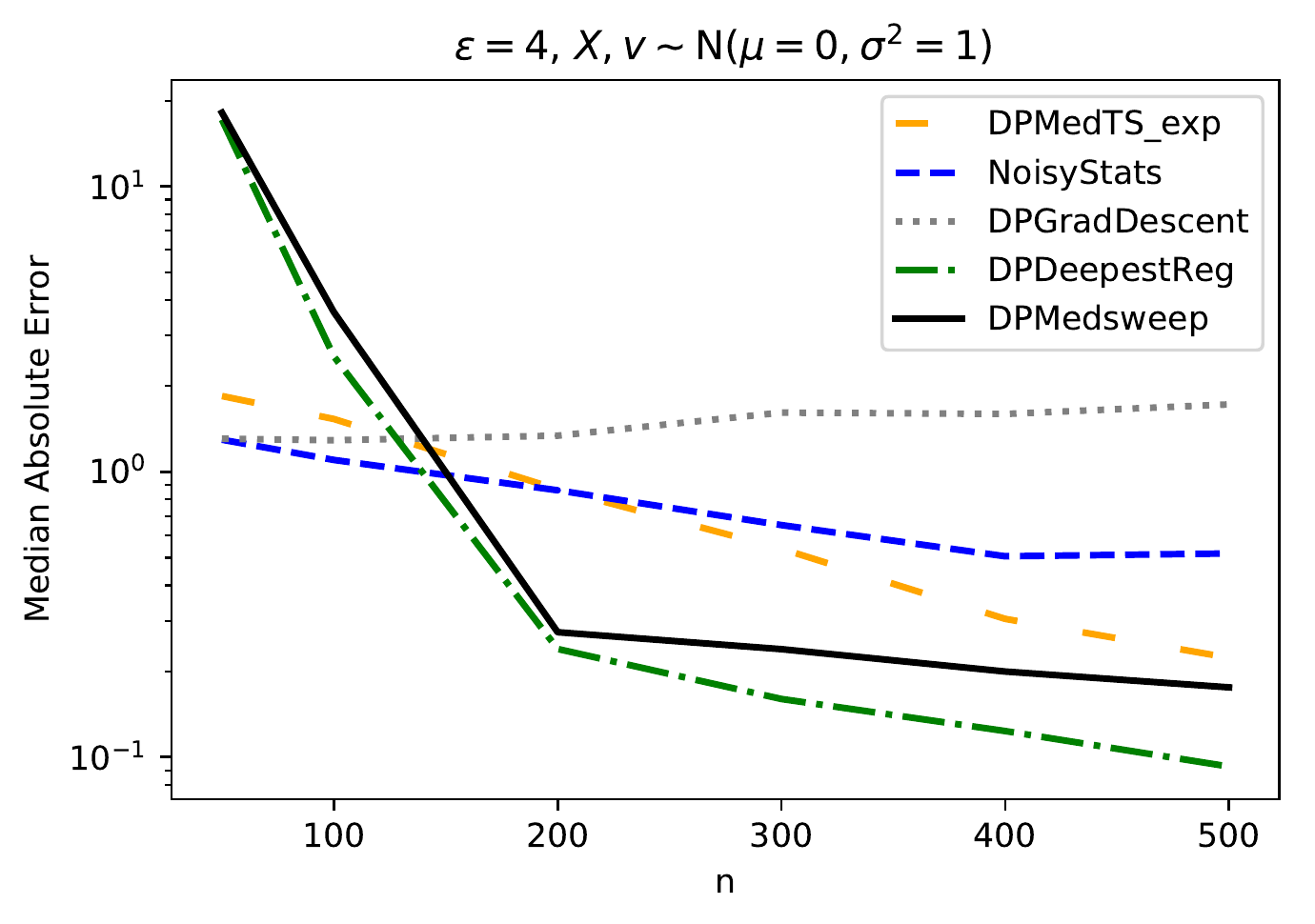} \includegraphics[scale=.58]{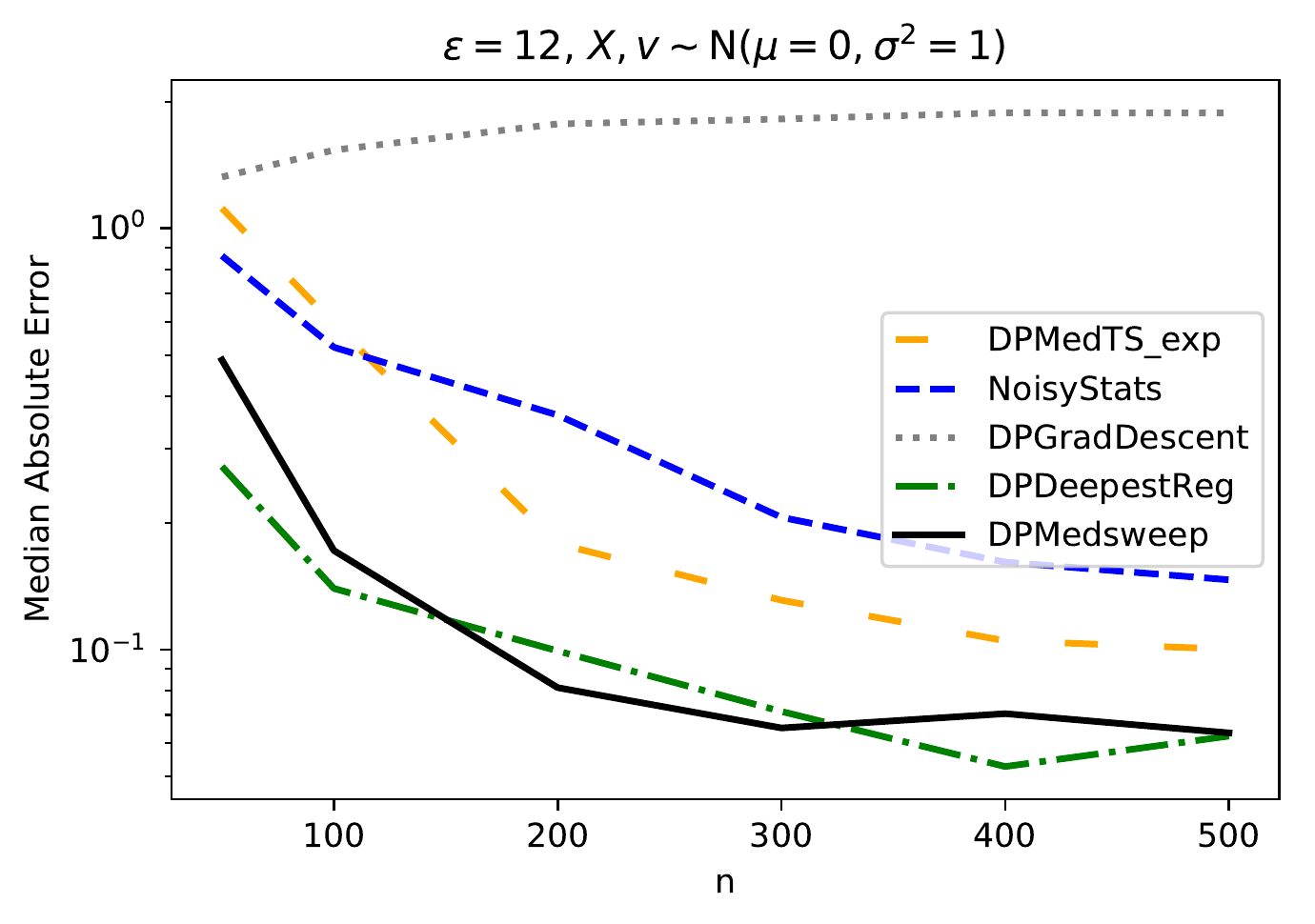} \\
    \includegraphics[scale=.58]{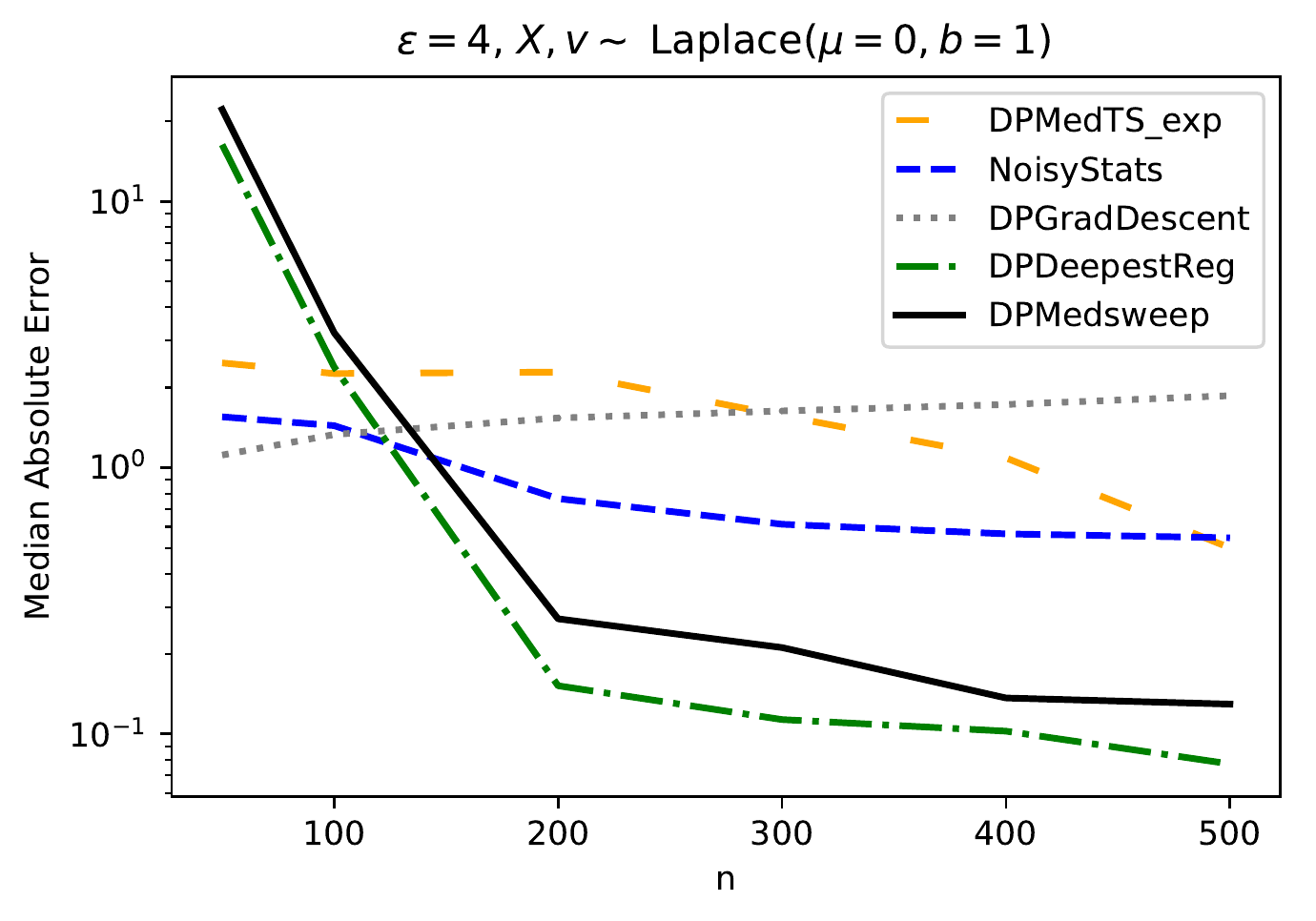} \includegraphics[scale=.58]{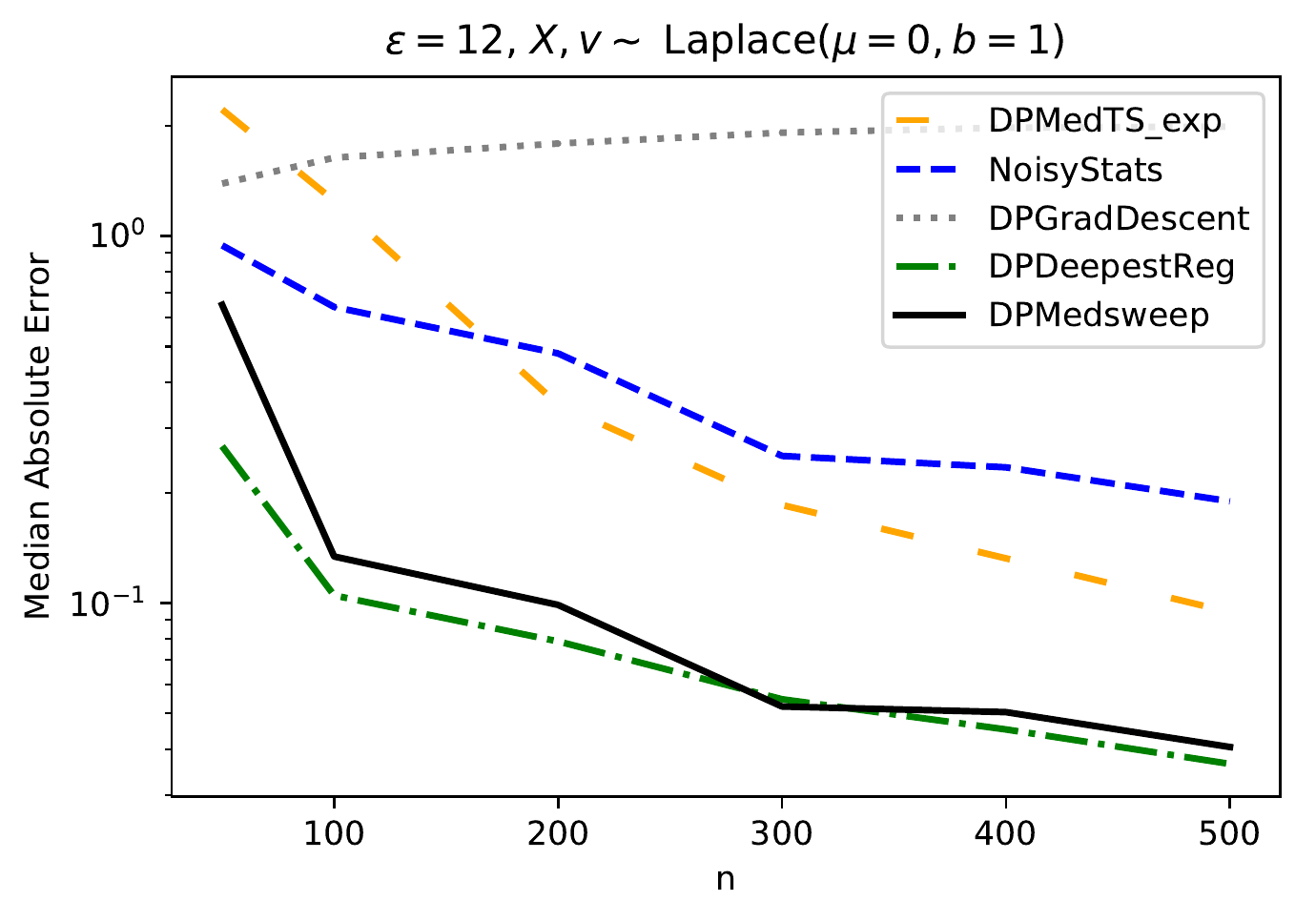} \\
    \includegraphics[scale=.58]{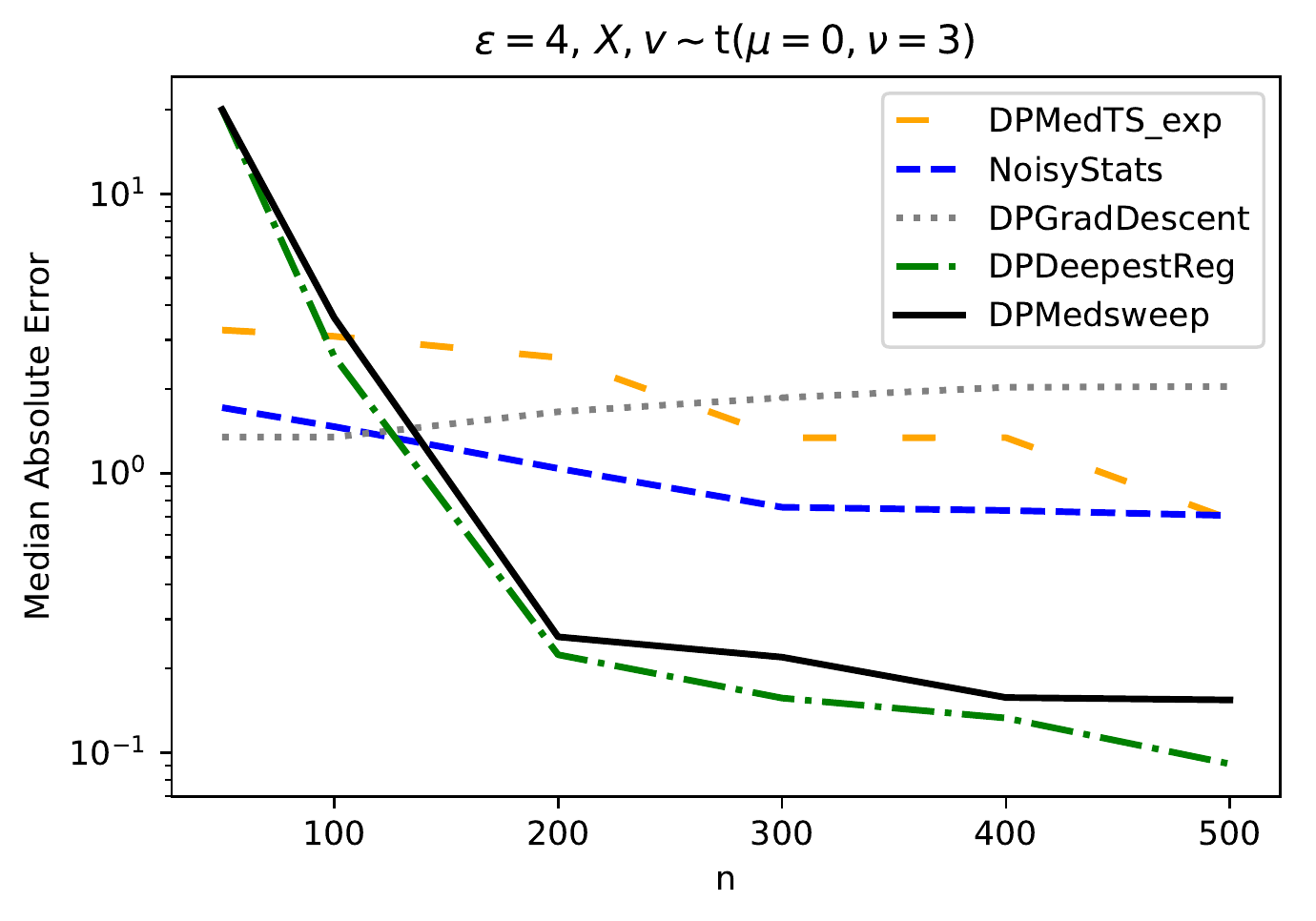} \includegraphics[scale=.58]{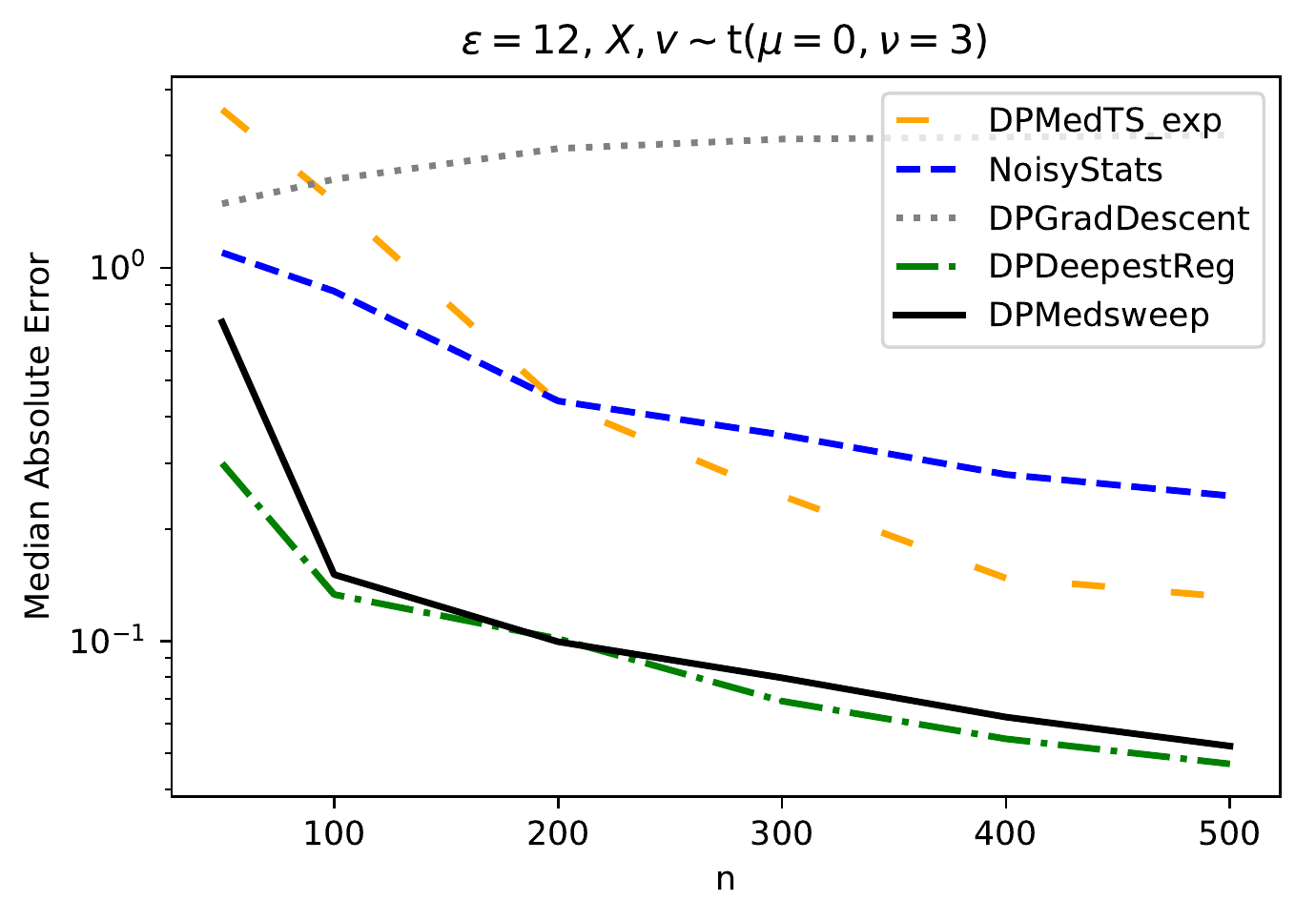}
    \caption{The median absolute error at the 25\% and 75\% percentile of $x_i$ of each DP estimator, for each DGP $x_i, v_i \sim N(\mu=0, \sigma^2=1)$ (top), $x_i, v_i \sim \textup{Laplace}(\mu=0, b=1)$ (middle), $x_i, v_i \sim t(\mu=0, \nu=3)$ (bottom), and for $\epsilon=4$ (left) and $\epsilon=12$ (right). For NoisyStats$(\cdot)$ and the DPMedTS\_exp$(\cdot)$ mechanisms, $\psi$ was set to 0.95. Both NoisyStats$(\cdot)$ and DPMedTS\_exp$(\cdot)$ satisfy $\epsilon-$DP; DPGradDesc$(\cdot),$ DPMedsweep$(\cdot),$ and DPDeepestReg$(\cdot)$ satisfy $(\epsilon, 10^{-6})-$DP.}
    \label{fig:sims_MAE_05}
\end{figure}

\begin{lemma}\label{RcontourBound}
    Suppose $\ndat$ is composed of observations that were sampled independently from the population distribution $\Ndat.$ Then, for $\kappa > 0,$ the following bound holds

    \begin{align}
        P\left(\sup_{\btheta\in\Theta, \; \bu\neq \boldsymbol{0}} \lvert G(\btheta, \bu) - G_n( \btheta, \bu) \rvert \geq \kappa\right) \leq 64 \left((n^2-1)^{d-1} + 1\right)^4 \exp\left(2 \kappa (2+(2-n) \kappa)\right).
    \end{align}
\end{lemma}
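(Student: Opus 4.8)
The plan is to transcribe the proof of Lemma \ref{LcontourBound} almost verbatim, substituting the regression shatter coefficient of Lemma \ref{Rvcdim} for the halfspace one. The key observation is that $G(\btheta, \bu)$ and $G_n(\btheta, \bu)$ are, respectively, the population probability and the empirical frequency of the set $\{(\bx, y) : (1, \bx^\top)\bu \; \sign(y - (1, \bx^\top)\btheta) \geq 0\} \in \mathcal{C}_R,$ so $\sup_{\btheta, \bu} \lvert G(\btheta, \bu) - G_n(\btheta, \bu) \rvert$ is exactly the uniform empirical deviation over the class $\mathcal{C}_R.$ First I would invoke the main result of \citep{devroye1982bounds}, which supplies a constant $c$ (with the same explicit $\kappa$-dependent bound used in Lemma \ref{LcontourBound}) such that
\begin{align*}
    P\left(\sup_{\btheta\in\Theta, \; \bu\neq \boldsymbol{0}} \lvert G(\btheta, \bu) - G_n( \btheta, \bu) \rvert \geq \kappa\right) \leq c \, s(\mathcal{C}_R, n^2) \exp\left(-2 \kappa^2 n \right).
\end{align*}

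Next I would substitute the shatter-coefficient bound of Lemma \ref{Rvcdim} evaluated at sample size $n^2,$ namely $s(\mathcal{C}_R, n^2) \leq 16((n^2-1)^{d-1} + 1)^4.$ Multiplying this polynomial factor against the leading constant of the Devroye bound produces the prefactor $64((n^2-1)^{d-1}+1)^4$ appearing in the statement; here the $64 = 4 \cdot 16$ collects the universal constant from \citep{devroye1982bounds} together with the factor $16$ that arises in Lemma \ref{Rvcdim} from writing each element of $\mathcal{C}_R$ as a union of two intersections, with each intersection contributing a squared VC-type factor so that the final count is a fourth power. The remaining $\kappa$-dependent exponential terms then combine exactly as in Lemma \ref{LcontourBound} to yield the exponent $2\kappa(2 + (2-n)\kappa).$

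Because the argument is a direct reuse of the halfspace-depth proof, I do not expect any substantive obstacle: the only ingredient specific to the regression setting, namely the shatter-coefficient bound $s(\mathcal{C}_R, n) \leq 16((n-1)^{d-1}+1)^4,$ has already been established in Lemma \ref{Rvcdim}. The one place warranting care is constant bookkeeping — ensuring that the fourth power and the factor of $16$ coming from the union-of-intersections decomposition of $\mathcal{C}_R$ are propagated correctly through the evaluation at $n^2$ and the multiplication by the Devroye constant, so that the final prefactor is precisely $64((n^2-1)^{d-1}+1)^4$ rather than an off-by-a-constant variant.
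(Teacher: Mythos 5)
Your proposal is correct and follows essentially the same route as the paper: the paper's proof of Lemma \ref{RcontourBound} is likewise a direct application of the main result of \citep{devroye1982bounds}, with the shatter coefficient bound of Lemma \ref{Rvcdim} evaluated at $n^2$ and the Devroye constant $c \leq 4\exp(4\kappa - \kappa^2)$ combined exactly as in Lemma \ref{LcontourBound}, so that $4 \cdot 16 = 64$ yields the stated prefactor. No substantive differences to note.
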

\begin{proof}
    This a direct consequence of Lemma \ref{Rvcdim} and the main result of \citep{devroye1982bounds}; see Theorem \ref{LcontourBound} for more detail.
\end{proof}

\begin{remark}
    The remark following Theorem \ref{LcontourBound}, regarding the rate of convergence of the probability bounds provided above being suboptimal, are also applicable in this case.
\end{remark}

\section{Simulation Results For Different Choices of \texorpdfstring{$\psi$}{psi}} \label{appendixC}

As described previously, we set the bounds on each $x_i$ and $y_i$ in our simulations for both the NoisyStats$(\cdot)$ and the DPMedTS\_exp$(\cdot)$ mechanisms by finding the smallest bounding box of the form $[-c, c]^2$ such that a proportion $\psi\in (0,1)$ of the data points are in $[-c, c]^2.$ The simulations provided above used $\psi=0.98,$ and, since this choice did have a significant impact on the simulation results, this  Appendix \ref{appendixC} provides the simulation results for the alternative choices of $\psi=0.95$ and $ \psi=1.$ Specifically, Figures \ref{fig:sims_MSE_05} and \ref{fig:sims_MAE_05} provide the mean squared error and median absolute error when $\psi=0.95,$ and Figures \ref{fig:sims_MSE_0} and \ref{fig:sims_MAE_0} provide these error metrics for the case in which $\psi=1.$ 

For the case in which $\psi=0.95,$ all of the comments regarding the relative performance of the DP estimators from Section \ref{sec:simulations} appear to still be the case; however, in almost all cases the performance appears to improve when $\psi=0.95,$ relative to when $\psi=0.98,$ by between 50 and 200\%, with the largest improvements for cases in which $\epsilon=12$ and for the DGPs with thicker tails. In contrast, for the cases in which $\psi=1,$ the statistical efficiency appears to change more noticeably; unsurprisingly, this is most apparent when $\epsilon$ is small and for both of the DGPs with thicker tails.

\begin{figure}[ht]
    \centering
    \includegraphics[scale=.58]{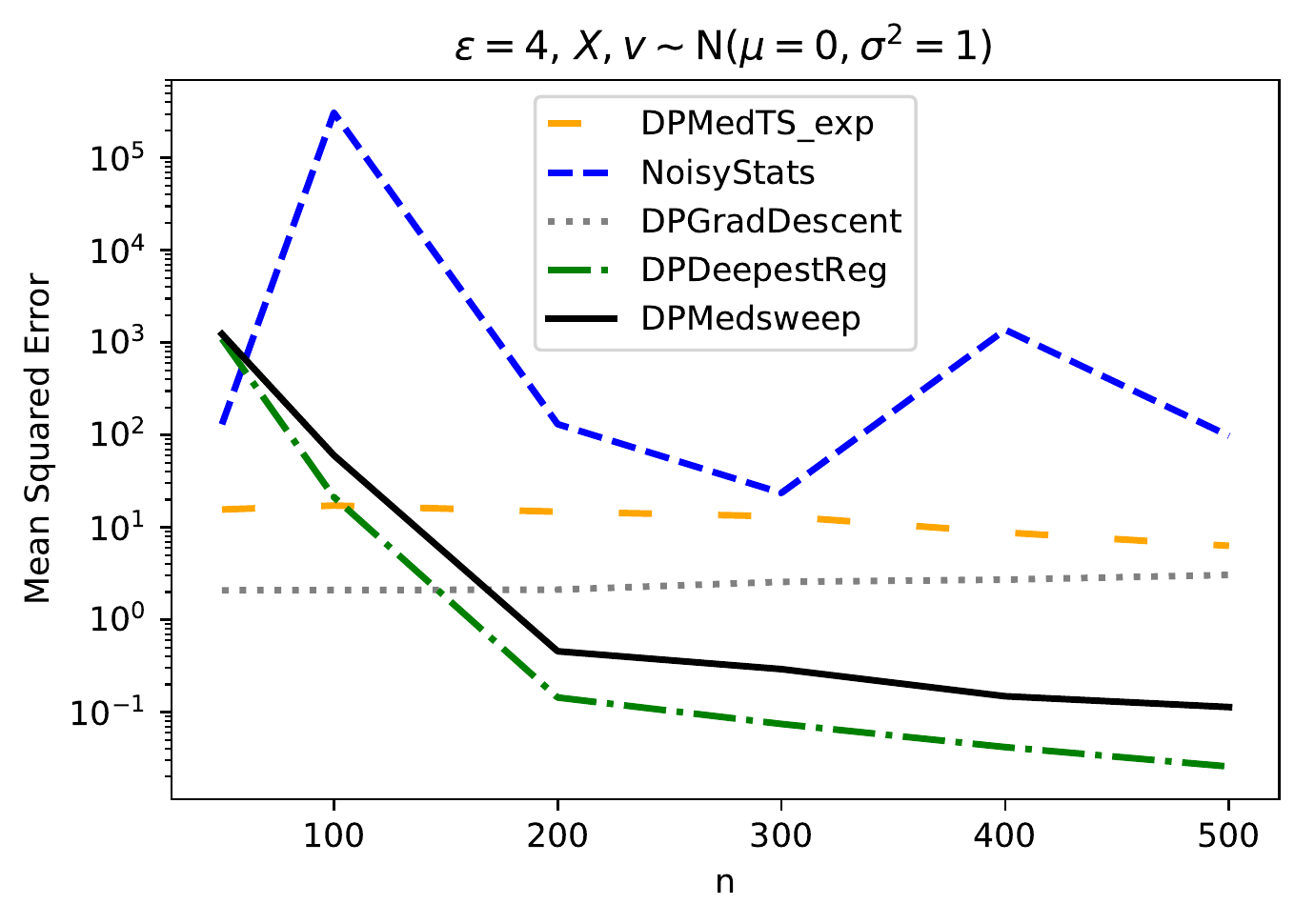} \includegraphics[scale=.58]{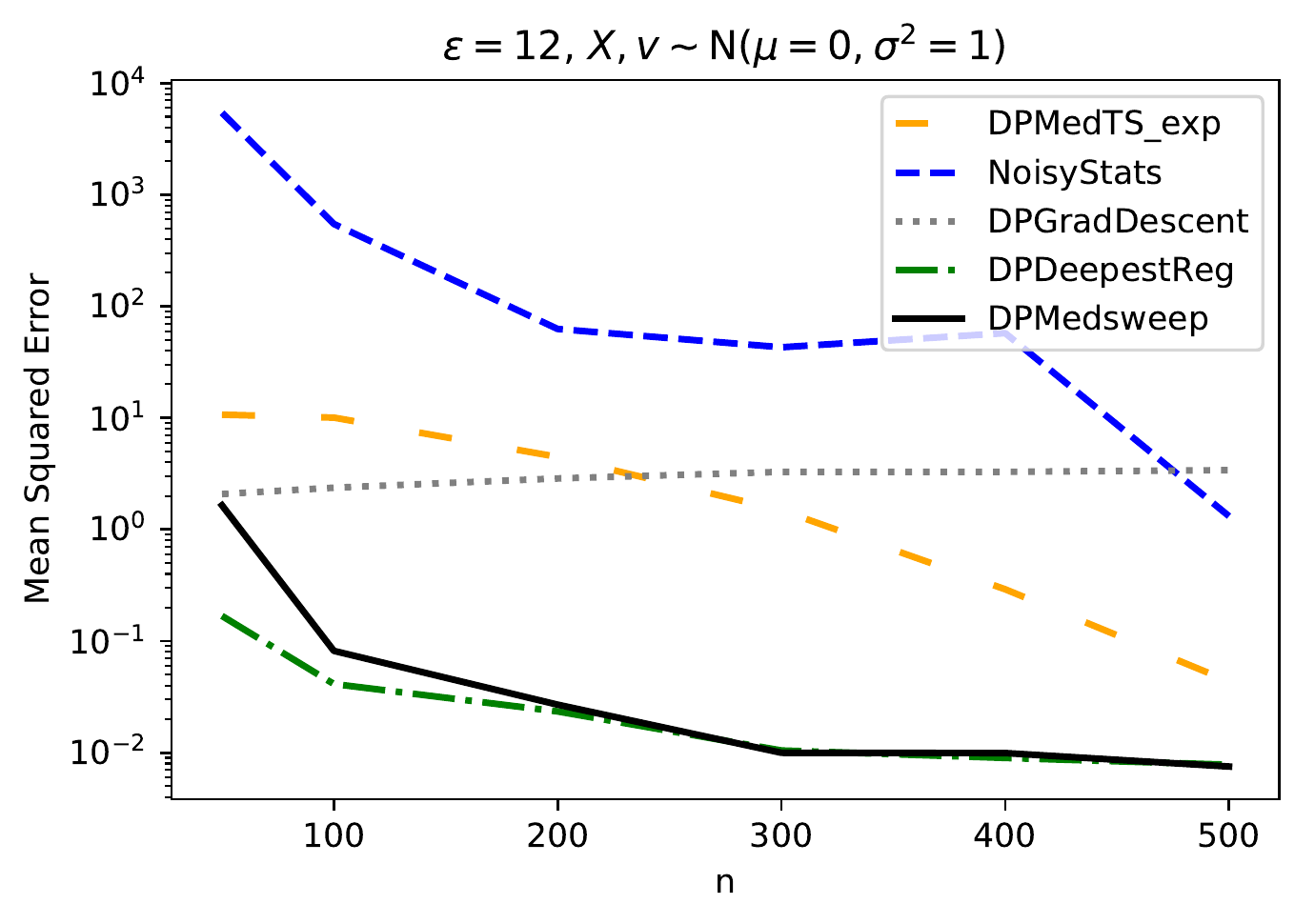} \\
    \includegraphics[scale=.58]{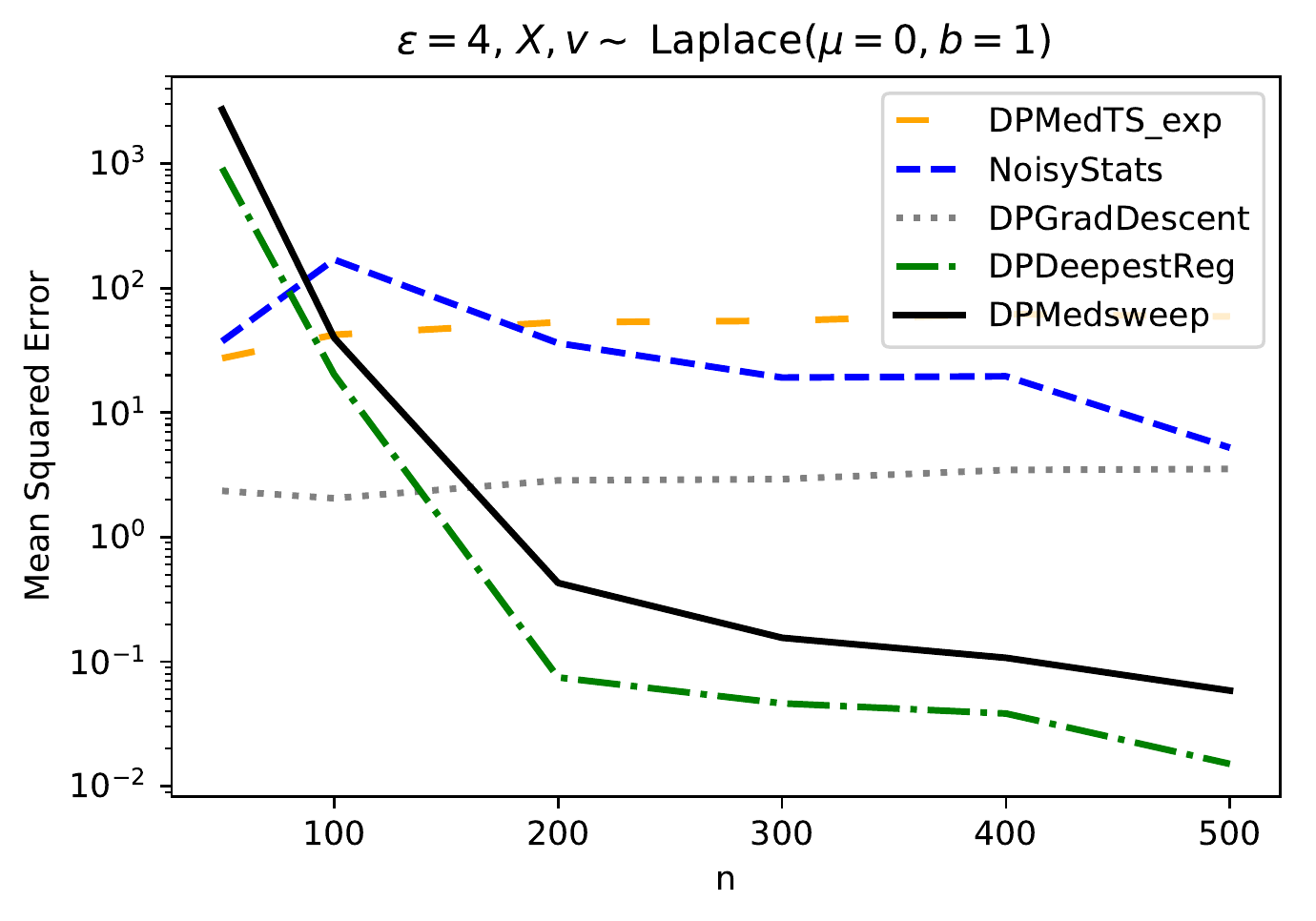} \includegraphics[scale=.58]{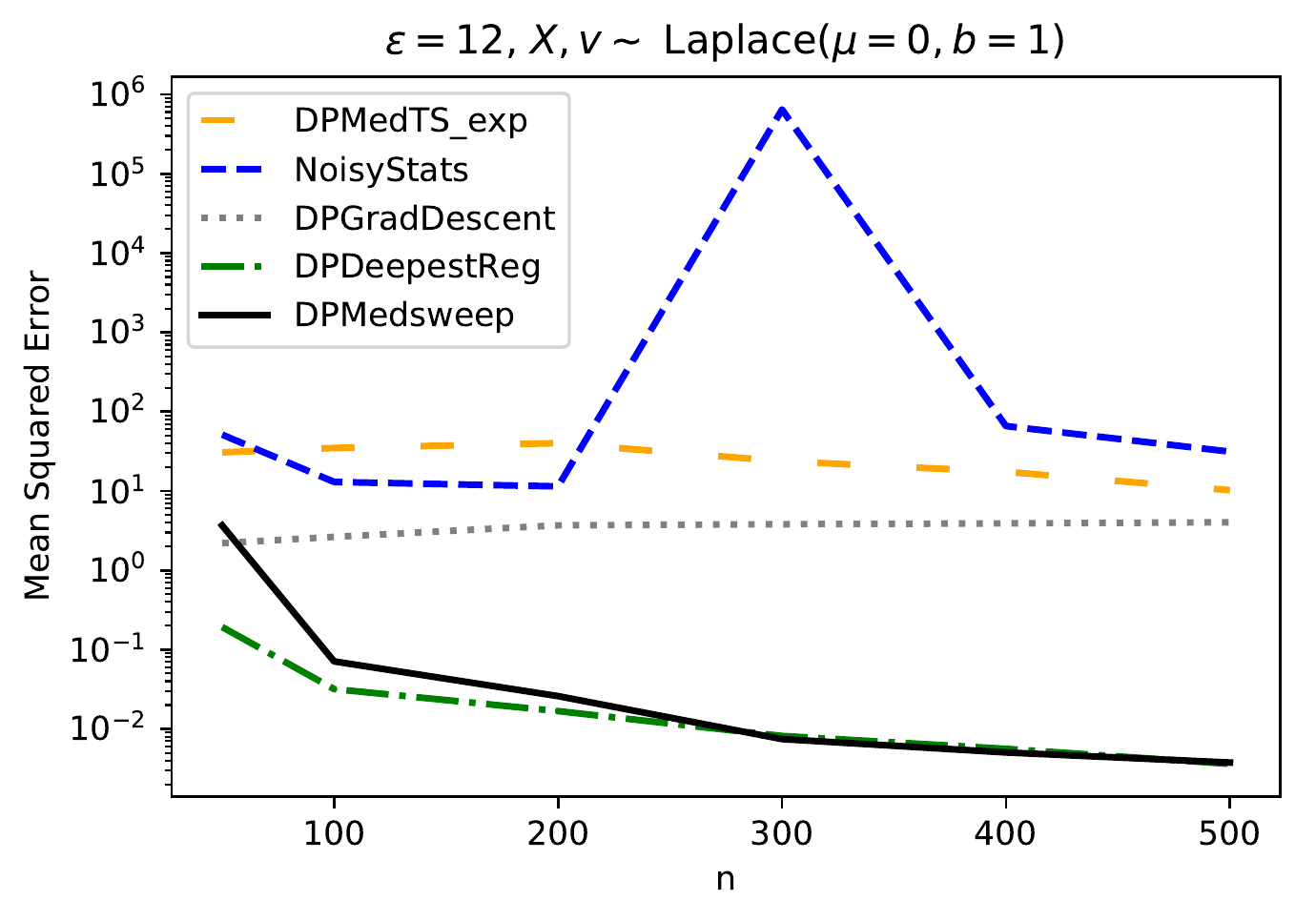} \\
    \includegraphics[scale=.58]{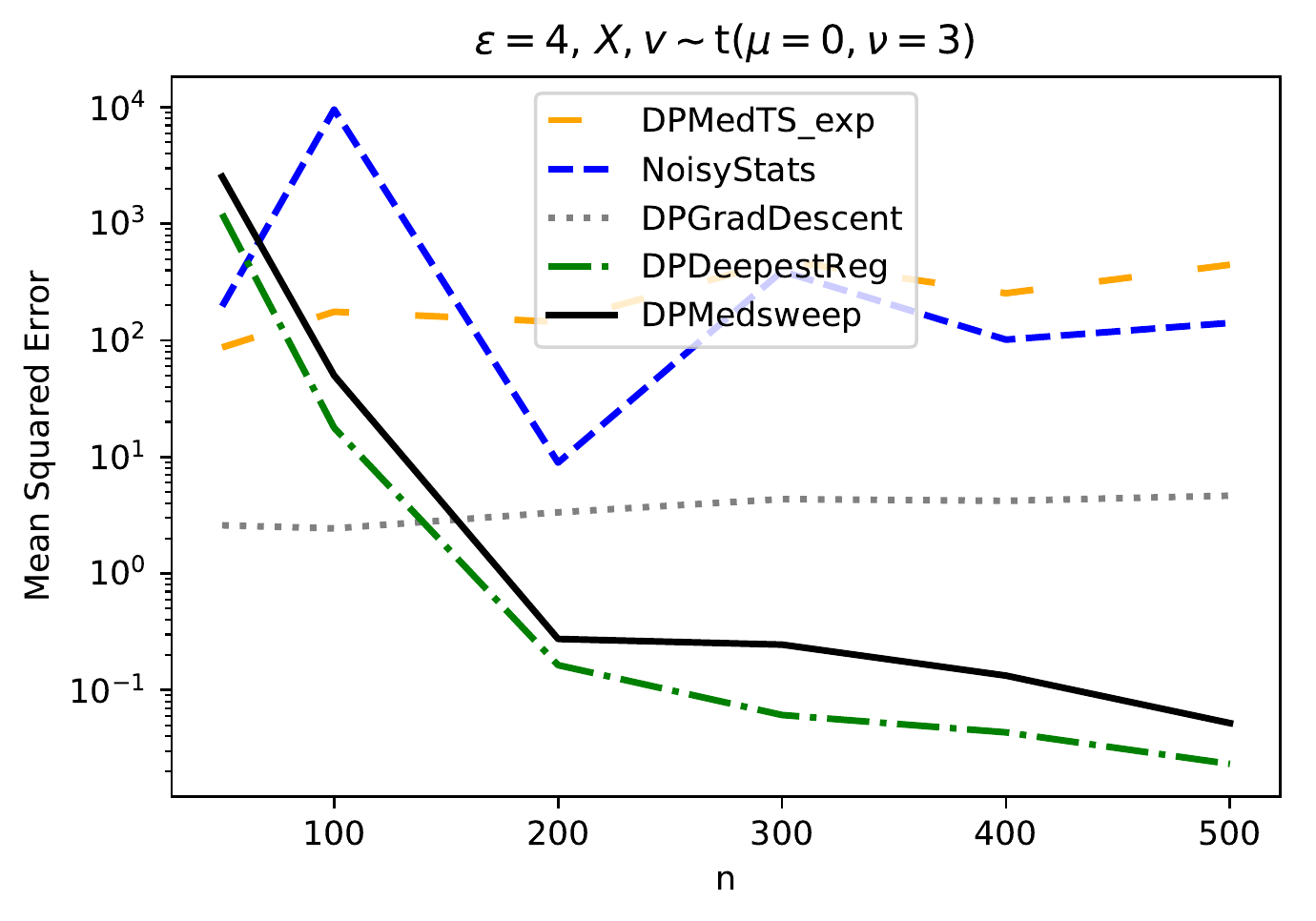} \includegraphics[scale=.58]{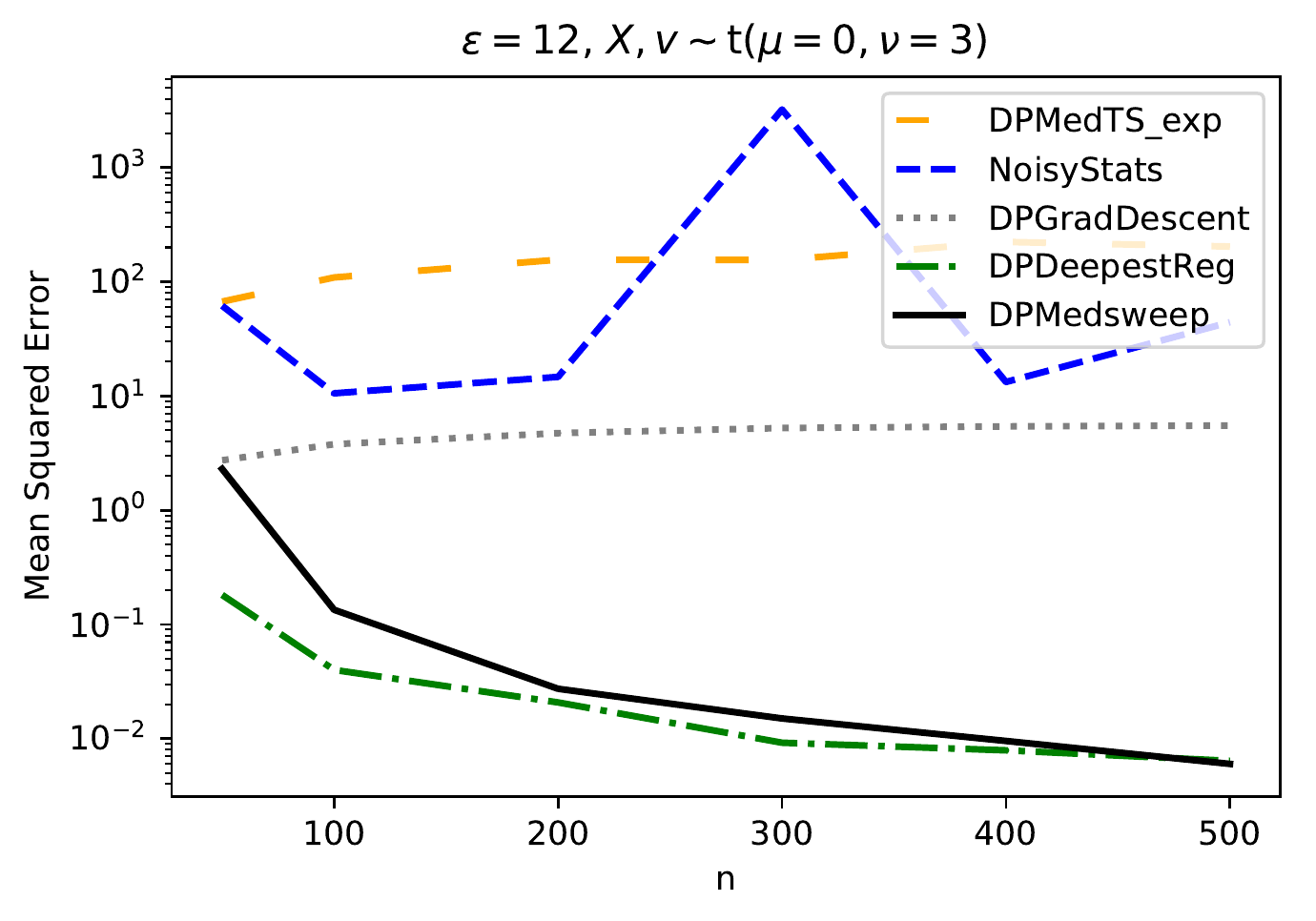}
    \caption{The mean squared error at the 25\% and 75\% percentile of $x_i$ of each DP estimator, for each DGP $x_i, v_i \sim N(\mu=0, \sigma^2=1)$ (top), $x_i, v_i \sim \textup{Laplace}(\mu=0, b=1)$ (middle), $x_i, v_i \sim t(\mu=0, \nu=3)$ (bottom), and for $\epsilon=4$ (left) and $\epsilon=12$ (right). For NoisyStats$(\cdot)$ and the DPMedTS\_exp$(\cdot)$ mechanisms, $\psi$ was set to 1. Both NoisyStats$(\cdot)$ and DPMedTS\_exp$(\cdot)$ satisfy $\epsilon-$DP; DPGradDesc$(\cdot),$ DPMedsweep$(\cdot),$ and DPDeepestReg$(\cdot)$ satisfy $(\epsilon, 10^{-6})-$DP.}
    \label{fig:sims_MSE_0}
\end{figure}

\begin{figure}[ht]
    \centering
    \includegraphics[scale=.58]{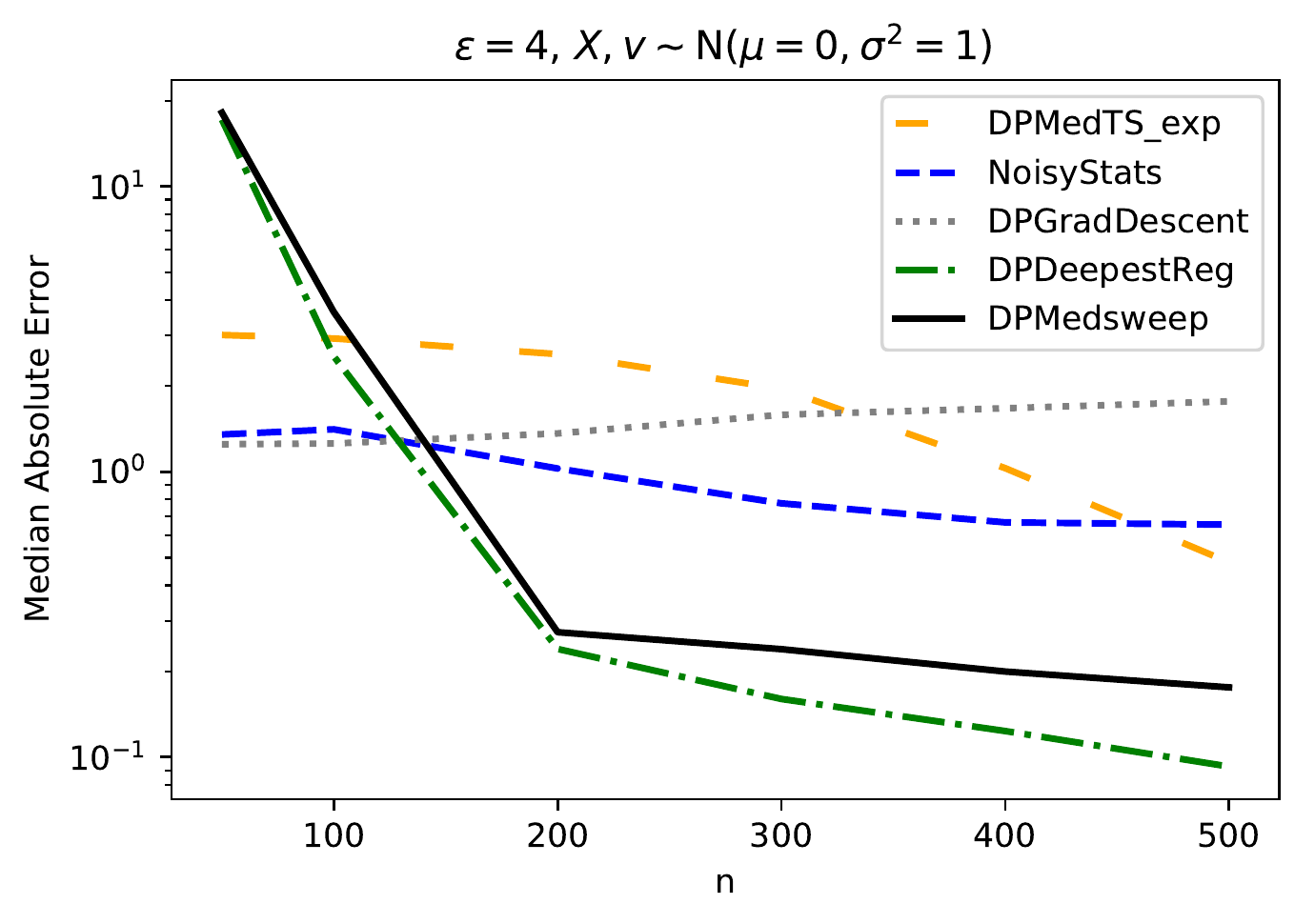} \includegraphics[scale=.58]{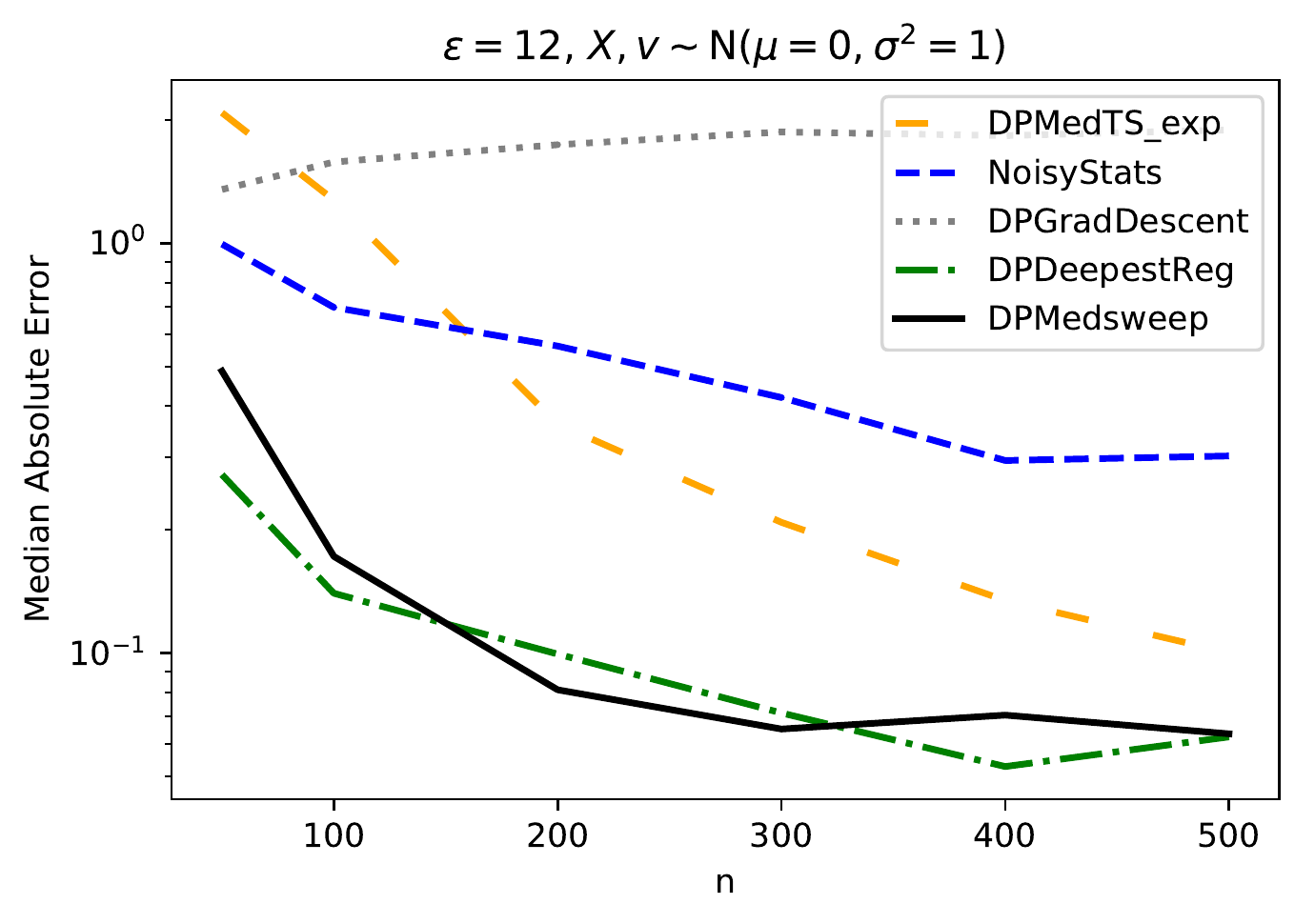} \\
    \includegraphics[scale=.58]{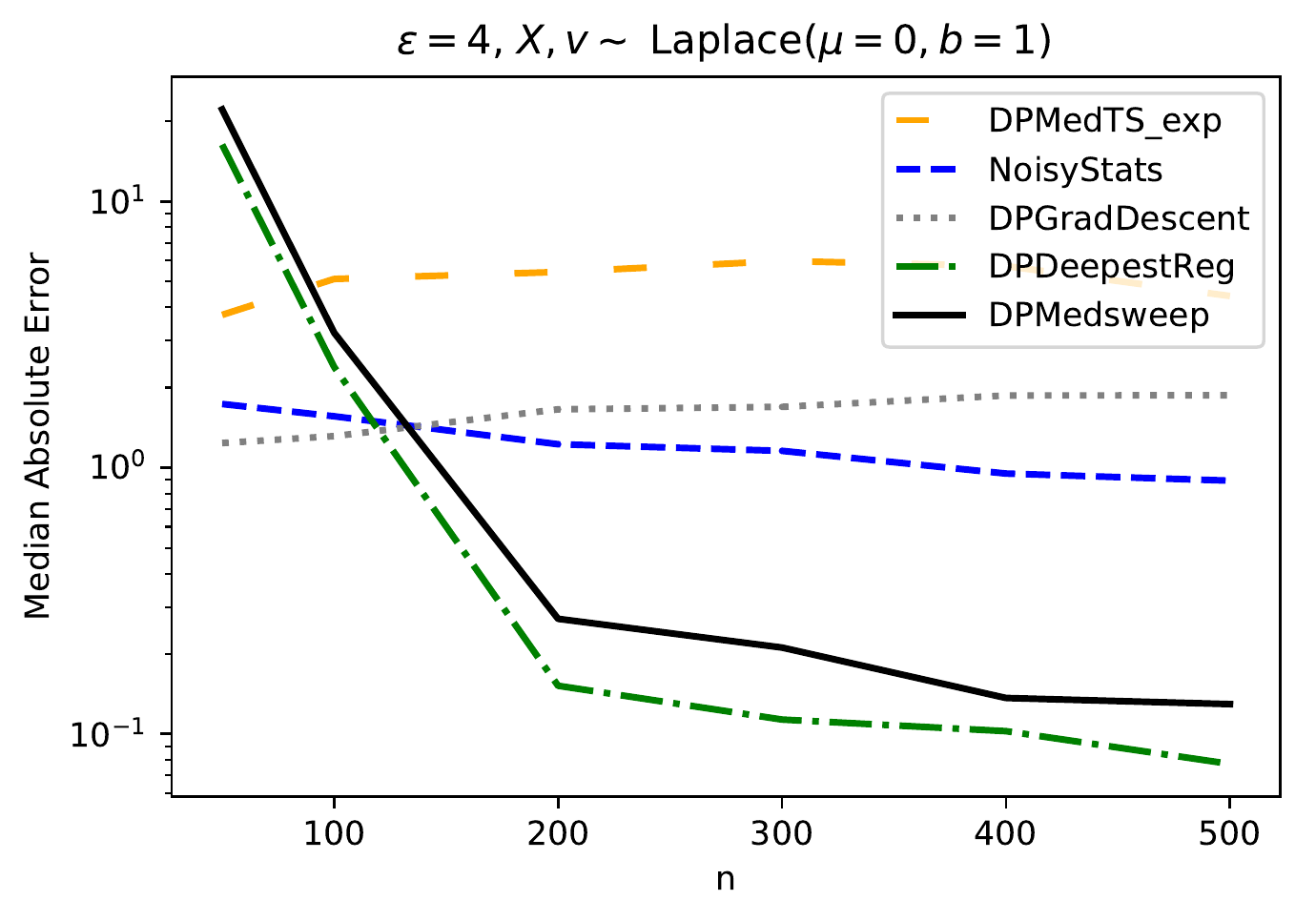} \includegraphics[scale=.58]{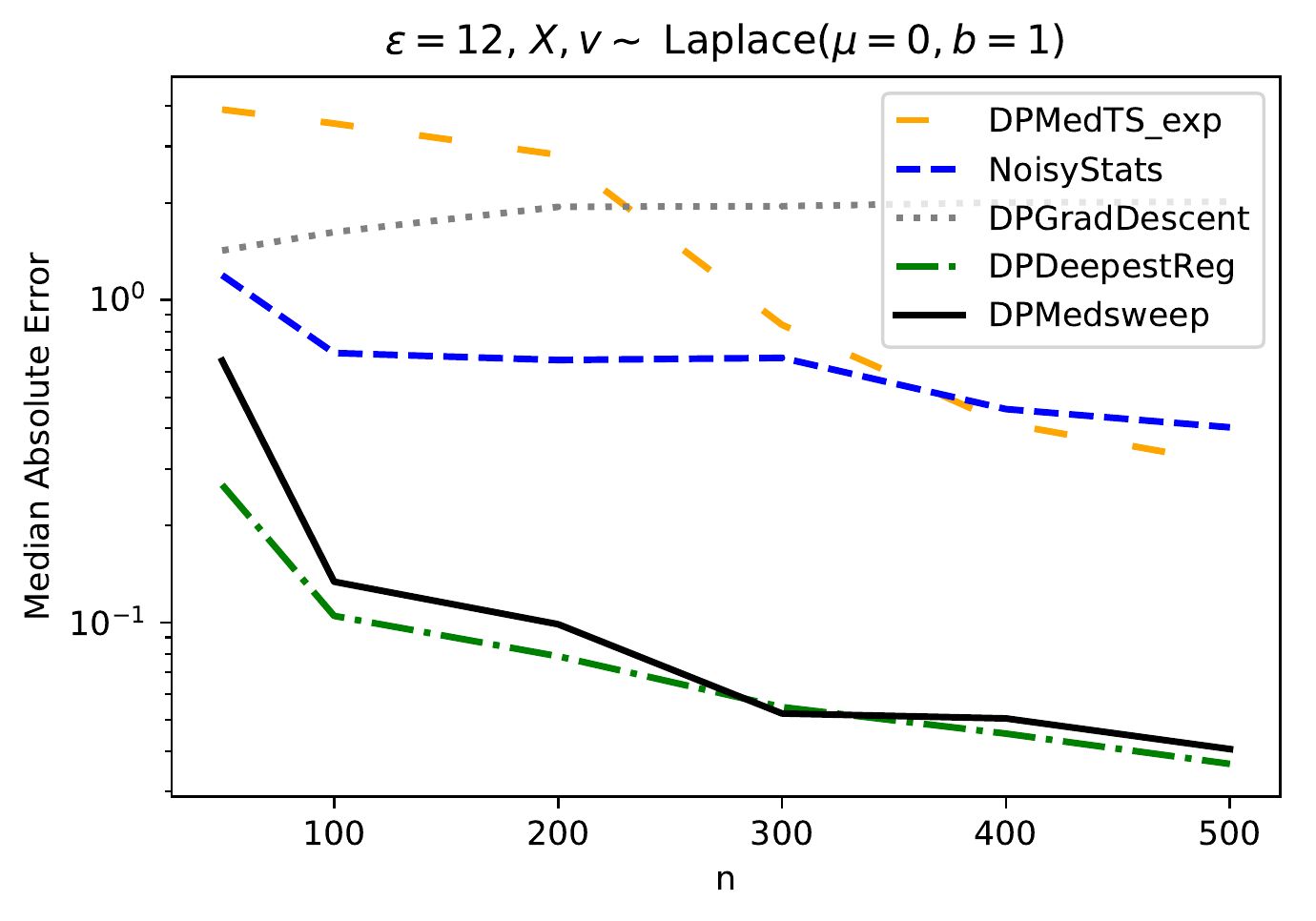} \\
    \includegraphics[scale=.58]{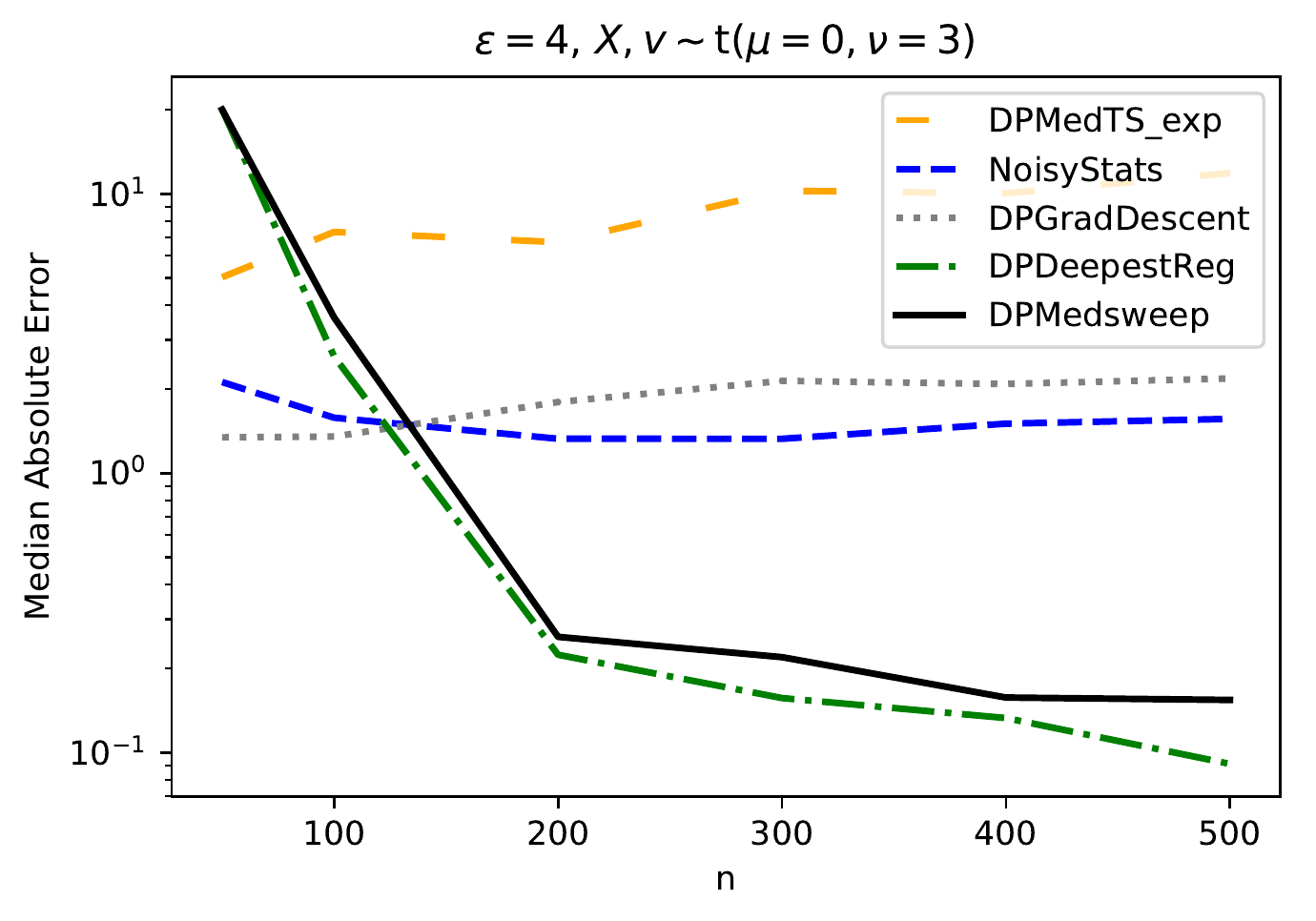} \includegraphics[scale=.58]{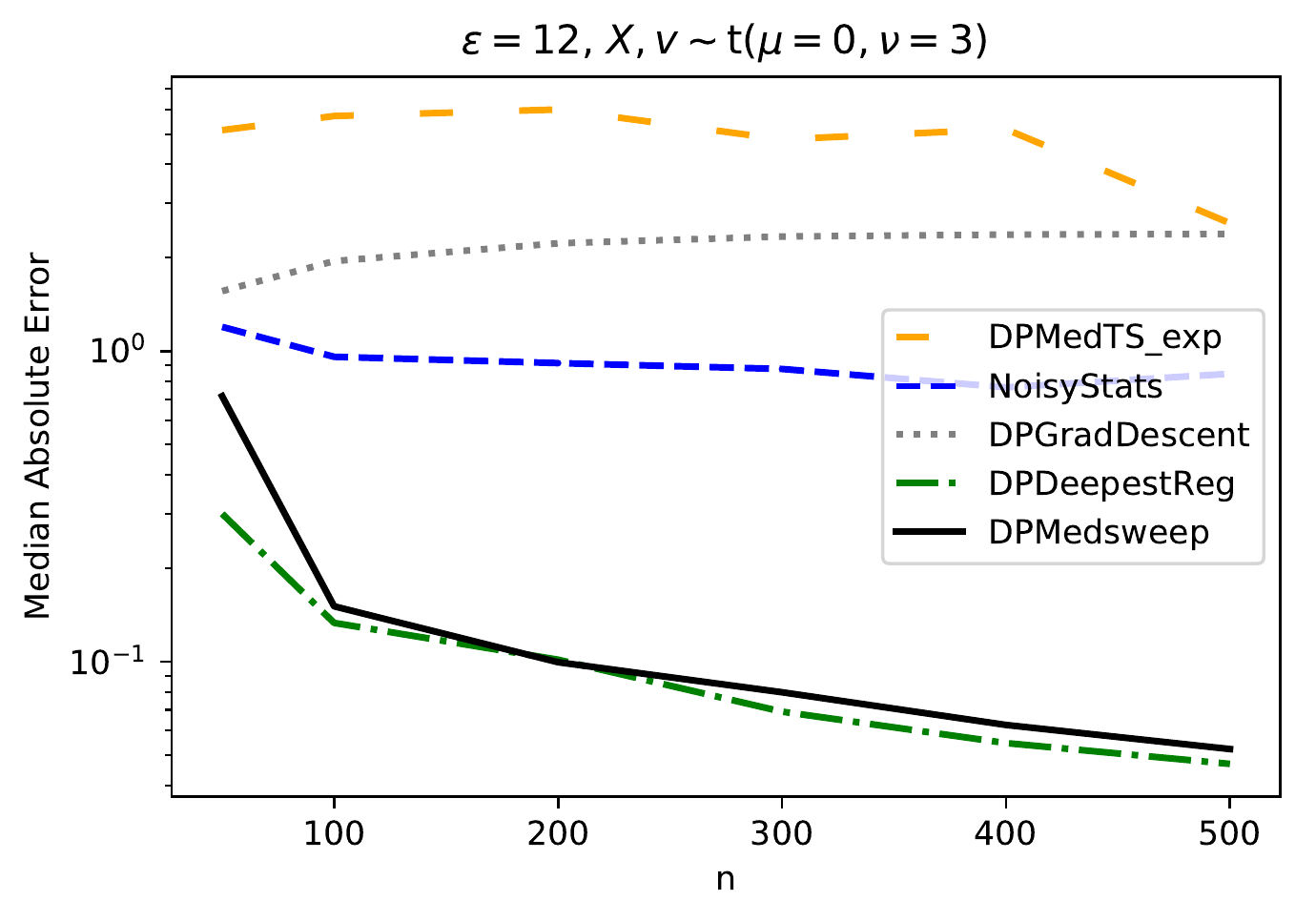}
    \caption{The median absolute error at the 25\% and 75\% percentile of $x_i$ of each DP estimator, for each DGP $x_i, v_i \sim N(\mu=0, \sigma^2=1)$ (top), $x_i, v_i \sim \textup{Laplace}(\mu=0, b=1)$ (middle), $x_i, v_i \sim t(\mu=0, \nu=3)$ (bottom), and for $\epsilon=4$ (left) and $\epsilon=12$ (right). For NoisyStats$(\cdot)$ and the DPMedTS\_exp$(\cdot)$ mechanisms, $\psi$ was set to 1. Both NoisyStats$(\cdot)$ and DPMedTS\_exp$(\cdot)$ satisfy $\epsilon-$DP; DPGradDesc$(\cdot),$ DPMedsweep$(\cdot),$ and DPDeepestReg$(\cdot)$ satisfy $(\epsilon, 10^{-6})-$DP.}
    \label{fig:sims_MAE_0}
\end{figure}

\clearpage

\bibliographystyle{apalike}
\bibliography{references}

\end{document}